\documentclass[11pt]{article}

\usepackage[margin=1in]{geometry}
\usepackage{microtype}

\usepackage{xcolor}
\usepackage{graphicx}
\usepackage{caption}
\usepackage[hyphens]{url}
\usepackage{etoc} 

\usepackage{natbib}

\usepackage{float}   
\usepackage{newfloat}
\usepackage{listings}
\DeclareCaptionStyle{ruled}{labelfont=normalfont,labelsep=colon,strut=off}
\floatstyle{ruled}
\newfloat{listing}{tb}{lst}{}
\floatname{listing}{Listing}

\usepackage{amsmath, amssymb, amsfonts}
\usepackage{amsthm}

\usepackage{mathtools}      
\usepackage{bm}             
\usepackage{nicefrac}       
\usepackage{enumitem}       
\usepackage{xcolor}         
\usepackage{graphicx}       
\usepackage{booktabs}       
\usepackage{microtype}      
\usepackage{url} 
\usepackage{bibentry}         

\usepackage{algorithm}
\usepackage{algorithmicx}
\usepackage{algpseudocode}

\theoremstyle{plain}

\newtheorem{proposition}{Proposition}

\usepackage{multirow}
\theoremstyle{definition}
\newtheorem{definition}{Definition}

\theoremstyle{remark}
\newtheorem{remark}{Remark}

\usepackage[hidelinks]{hyperref}
\usepackage[nameinlink,capitalize,noabbrev]{cleveref}

\newcommand{\SE}{\mathrm{SE}(3)}
\newcommand{\SO}{\mathrm{SO}(3)}

\title{SE(3)-MeanFlow: Few-Step Protein Backbone Generation on Lie Groups}
\author{%
  Yikun Bai$^{1}$ \and
  Binghang Lu$^{2}$ \and
  Yikai Liu$^{3}$ \and
  Elaheh Akbari$^{4}$ \and
  Soheil Kolouri$^{4}$ \and
  Linxuan Wang$^{5}$ \and
  Ping He$^{4}$ \and
  Shuchan Wang$^{6}$ \and
  Ruqi Zhang$^{7}\footnotemark[1]$ \and
  Guang Lin$^{1,8}$\thanks{Corresponding authors: \texttt{ruqiz@purdue.edu, guanglin@purdue.edu}.}\\[0.5em]
  \small $^{1}$Department of Mathematics, Purdue University\\
  \small $^{2}$School of Electrical and Computer Engineering, Purdue University\\
  \small $^{3}$Department of Biochemistry, Purdue University\\
  \small $^{4}$Department of Computer Science, Vanderbilt University\\
  \small $^{5}$Department of Statistics, Purdue University\\
  \small $^{6}$Department of Mathematics and Computer Science, Freie Universit\"at Berlin\\
  \small $^{7}$Department of Computer Science, Purdue University\\
  \small $^{8}$School of Mechanical Engineering, Purdue University\\
}
\date{}

\begin{document}

\maketitle
\begin{abstract}
Generative modeling of protein backbones promises the de novo design of proteins
with prescribed structural and functional properties. Existing diffusion and
flow-matching models produce high-quality backbones on $\SE^N$, but inference
requires numerically integrating an ODE over hundreds of network evaluations,
each involving a Lie group exponential map---a bottleneck for high-throughput
design campaigns. We introduce \textbf{SE(3)-MeanFlow}, a few-step generative
framework that extends MeanFlow from Euclidean space to the Lie group geometry of
protein frames. Working natively in the Lie algebra $\mathfrak{so}(3)$ and in $\mathbb{R}^3$, we
derive closed-form average-velocity identities for rotations and translations,
giving simulation-free training targets. We further introduce an $\SE$ $\alpha$-Flow objective that removes the
Jacobian--vector product from the rotation branch and serves as a warm-up stage,
after which training switches to a small-$t$ stabilized MeanFlow loss that is
used for the remainder of pretraining and for rectification-based post-training.  
In protein backbone generation, SE(3)-MeanFlow matches or exceeds
flow-matching baselines that use several times more sampling steps,
and its advantage widens in the few-step regime, where rectification
lets it lead at every matched budget---at a modest cost in diversity.

\end{abstract}


\section{Introduction}

Proteins are one of the basic building blocks of life. Their complex geometric structure
enables specific inter-molecular interactions that allow for crucial biological
functions---acting as catalysts in chemical reactions, transporters for molecules,
and mediators of immune responses. With the emergence of computational techniques,
it has become possible to rationally design novel proteins with desired structures
that program their functions, opening pathways to solutions for long-standing global
health challenges including influenza~\citep{strauch2017computational},
COVID-19~\citep{cao2020novo} and cancer immunotherapy~\citep{silva2019novo}.

A protein backbone can be modeled as a sequence of $N$ rigid bodies, one per residue,
each associated with a frame under orientation-preserving rigid transformations
---the special Euclidean group $\SE$~\citep{jumper2021highly}. The full backbone
is thus described by the product group $\SE^N$, and the problem of de novo
protein design reduces to sampling from a learned distribution over this space.
Recent work has made substantial progress on generative modeling over $\SE^N$.
Diffusion-based methods such as FrameDiff \citep{yim2023se} and RFDiffusion
\citep{watson2023novo} achieve strong designability, while flow matching
approaches---FoldFlow \citep{bose2024se} and FrameFlow \citep{frameflow}---further
improve training stability and flexibility by learning time-dependent vector fields on
$\SE$ in a simulation-free manner. Despite these advances, all existing
$\SE$ generative models share a common bottleneck: inference still requires numerically integrating an ODE over many steps, typically 100--500 network function
evaluations (NFE). On $\SE$, each step involves evaluating the network and
applying the Lie group exponential map, making inference substantially more expensive
than in Euclidean space. This limits practical deployment in high-throughput drug
discovery pipelines, where millions of candidate structures must be generated per
campaign.

Reducing the number of inference steps has been studied extensively in Euclidean
generative modeling. Consistency models \citep{song2023consistency,song2024improved} and progressive
distillation \citep{salimans2022progressive} compress inference into one or a few steps,
but require a pretrained teacher or carefully staged training curricula.
MeanFlow \citep{geng2026mean} offers a more principled and self-contained alternative.
Rather than modeling the instantaneous velocity $v(t, x_t)$ as in standard flow
matching \citep{lipman2022flow,tong2023improving}, MeanFlow introduces the notion of
\emph{average velocity} $u(s, t, x_t)$---the mean velocity of the flow trajectory over
the interval $[s, t]$. A well-defined identity relating $u$ and $v$ is derived purely
from the definition of average velocity via the product rule and the fundamental theorem
of calculus. This identity yields a tractable, simulation-free training objective that
uses only instantaneous velocity as supervision. At inference, the entire flow path is
approximated in a single network evaluation $u_\theta(0, 1, x_1)$, enabling 1-NFE
generation without distillation or pretraining. However, MeanFlow is formulated in
Euclidean space $\mathbb{R}^d$ and does not account for the non-trivial geometry of
$\SE$. Naively lifting MeanFlow to a curved manifold is ill-posed:
velocities at different points along the trajectory live in different tangent spaces,
and their average requires parallel transport along the path.

Riemannian MeanFlow (denoted as RMF-PT) \cite{zhong2026riemannian} addresses this by extending
MeanFlow to general Riemannian manifolds, defining average velocity via parallel
transport and deriving a corresponding Riemannian MeanFlow identity.
While principled, this general-purpose framework does not exploit the specific algebraic structure of
$\SE$ as a Lie group. Another concurrent work, Riemannian MeanFlow (RMF)~\cite{woo2026riemannian}, defines
the average velocity in the Lie algebra; we compare with it theoretically in
Section~\ref{sec:rmf} and empirically in Section~\ref{sec:experiment}.

In this work, we propose \textbf{SE(3)-MeanFlow}, a few-step generative model for de
novo protein backbone design grounded in the Lie group structure of $\SE$.
Our central insight is that the decomposition
$\SE \cong \SO \times \mathbb{R}^3$ allows the MeanFlow identity
to be derived separately in the Lie algebra $\mathfrak{so}(3)$ and in $\mathbb{R}^3$,
both admitting closed-form, simulation-free training targets without parallel transport.

Our main contributions are:
\begin{itemize}
 \item \textbf{Integration-based SE(3)-MeanFlow with theory.} We propose an integration-based
    SE(3)-MeanFlow formulation conditioned on the current state $(R_t,x_t)$, and derive
    closed-form average-velocity identities in the Lie algebra $\mathfrak{so}(3)$ and in
    $\mathbb{R}^3$, without parallel transport. We provide theoretical validation showing that
    the resulting training objectives are well-defined and that the corresponding losses are
    consistent with the intended MeanFlow targets on $\SE$.

    \item \textbf{Stable training algorithms for proteins.} We introduce an $\SE$
    $\alpha$-Flow objective together with practical numerical stabilization techniques that make training and few-step generation reliable for
    protein backbones.
  \item \textbf{Protein design results.} On the SCOPe benchmark, our approach
matches or exceeds flow-matching baselines that use several times more sampling
steps, with the largest gains in the few-step regime, in both pretraining and
post-training settings, at a modest cost in diversity.
\end{itemize}


\section{Background and Notations}

\subsection{Lie-group notation on SE(3)}
In this work, the $\SE$ space is a collection of rigid motions in $\mathbb{R}^3$:
$$
\SE\cong \SO\times\mathbb{R}^3,
$$
where $\SO$ denotes the space of $3\times3$ rotation matrices, which forms a Lie group:
$$
\SO=\{R\in\mathbb{R}^{3\times 3}:RR^\top=I,\ \det(R)=1\}.
$$
We follow \citet{bose2024se} and use the decoupled $\SO\times\mathbb{R}^3$ geometry when defining metrics and losses.

A (continuous-time) flow on $\SO$ can be defined by a curve $R_t\in \SO$ satisfying the left-trivialized ODE
\begin{equation}
\dot R_t = R_t\,\Omega_t,\qquad \Omega_t\in \mathfrak{so}(3),
\label{eq:so3_ode}
\end{equation}
where $\mathfrak{so}(3)$ denotes the corresponding Lie algebra
$$
\mathfrak{so}(3)=\{\Omega\in\mathbb{R}^{3\times 3}:\Omega^\top=-\Omega\}.
$$

$\mathfrak{so}(3)$ is isomorphic to $\mathbb{R}^3$ space, and we use the hat/vee maps $(\cdot)^\wedge,(\cdot)^\vee$ to identify $\mathbb{R}^3\leftrightarrow\mathfrak{so}(3)$ (e.g., $\omega\in\mathbb{R}^3\mapsto\omega^\wedge\in\mathfrak{so}(3)$).  

Given the angular velocity field $\Omega_t:[0,1]\to\mathfrak{so}(3)$ along the trajectory, the solution of \eqref{eq:so3_ode} can be written as
$$
R_t = R_0\,\mathcal{T}\exp\Big(\int_0^t \Omega_\tau\,d\tau\Big),
$$
where $\mathcal{T}\exp$ composes the infinitesimal rotations along time. Here $\exp$ denotes the (matrix) exponential map from the Lie algebra to the Lie group. In particular, if $\Omega_\tau\equiv\Omega$ is constant, then
$$
R_t = R_0\exp(t\Omega).
$$

For a detailed discussion of $\SO$/$\SE$ flows (including the closed-form/geodesic expressions under common metrics), see Appendix~\ref{sec:bg-so3}\footnote{All appendices referenced in this paper are provided in the technical supplement.}.

\subsection{Protein Backbone Parametrization}
Following the setting in \citet{bose2024se}, we parametrize the protein backbone as a sequence of $N$ rigid bodies. Each residue $i \in [N]$ is associated with a frame $T_i = (R_i, x_i) \in \SE$, where $R_i \in \SO$ represents the orientation and $x_i \in \mathbb{R}^3$ denotes the position of the $C_\alpha$ atom.

The 3D coordinates of the backbone atoms $\{N, C_\alpha, C, O\}_i$ are recovered by applying the rigid transformation $T_i$ to a set of idealized coordinates $\{N^*, C_\alpha^*, C^*, O^*\}$:
\begin{equation}
  [N, C_\alpha, C, O]_i = T_i \circ [N^*, C_\alpha^*, C^*, O^*],
\end{equation}
where $C_\alpha^* = (0, 0, 0)$ is fixed at the origin.
For a self-contained review of $\SO$/$\SE$ geometry and notation (including the definitions of $(\cdot)^\wedge$ and $(\cdot)^\vee$), see Appendix~\ref{sec:bg-so3}--\ref{sec:bg-se3}.

\subsection{MeanFlow in Euclidean space}
One limitation of standard flow matching (Appendix~\ref{sec:bg-fm}) is that, to guarantee accuracy, we generally require the step size $\frac{1}{T}$ to be sufficiently small, equivalently requiring large inference steps $T$. To address this issue, in Euclidean space, \citet{geng2026mean,geng2026improved} propose the MeanFlow method.  
In particular, given a probability path $p_t$ generated by velocity $v_t$, it defines the average velocity:  
$$(t-s)v^{\text{avg}}(s,t,x_t)=\int_s^t v(\tau,x_\tau)d\tau.$$

It yields the training loss:
{\footnotesize 
\begin{equation}
\mathbb{E}
\left[\left\|v^{\text{avg}}_\theta(s,t,x_t)
+(t-s)\,\text{sg}\!\left(\frac{d}{dt}\hat{v}^{\text{avg}}(s,t,x_t)\right)
-v(t,x_t)\right\|^2\right].
\label{eq:mf_loss}
\end{equation}
}
where the randomness is given by 
$t,s\sim \text{Unif}([0,1]),s\leq t$ and $(x_0,x_1)\sim q$ for some joint distribution $q$. By default, $q$ is an independent coupling between $p_{\text{data}}$ and $p_{\text{prior}}$. 
\begin{equation}
\frac{d}{dt}\hat{v}^{\text{avg}}(s,t,x_t)=
\frac{\partial }{\partial t}v^{\text{avg}}_\theta(s,t,x_t)+\nabla_x v^{\text{avg}}_\theta(s,t,x_t)\cdot v_t.
\label{eq:mf_cases}
\end{equation}
In addition, $v_t$ can be replaced by $v^{\text{avg}}_\theta(t,t,x_t)$ in \citet{geng2026improved}.

Here $x_t=(1-t)x_0+tx_1$, $v(t,x_t)=x_1-x_0$, and sg is the stop-gradient operator.
Intuitively, the integration is estimated by the parameterized model during the training process; thus, in the inference step, it yields a T-step generation, where $T\in\mathbb{N}$: 
\begin{equation}
\hat{x}_{s}=x_t-\frac{1}{T} v^{\text{avg}}_\theta(s,t,x_t),\quad t=1-i/T,\ s=t-1/T,\ i=0,1,\ldots,(T-1).
\label{eq:mf_inference}
\end{equation}
When we set $T=1$, it yields a one-step generation. 

\begin{figure}[t]
  \centering
  \includegraphics[width=0.6\linewidth]{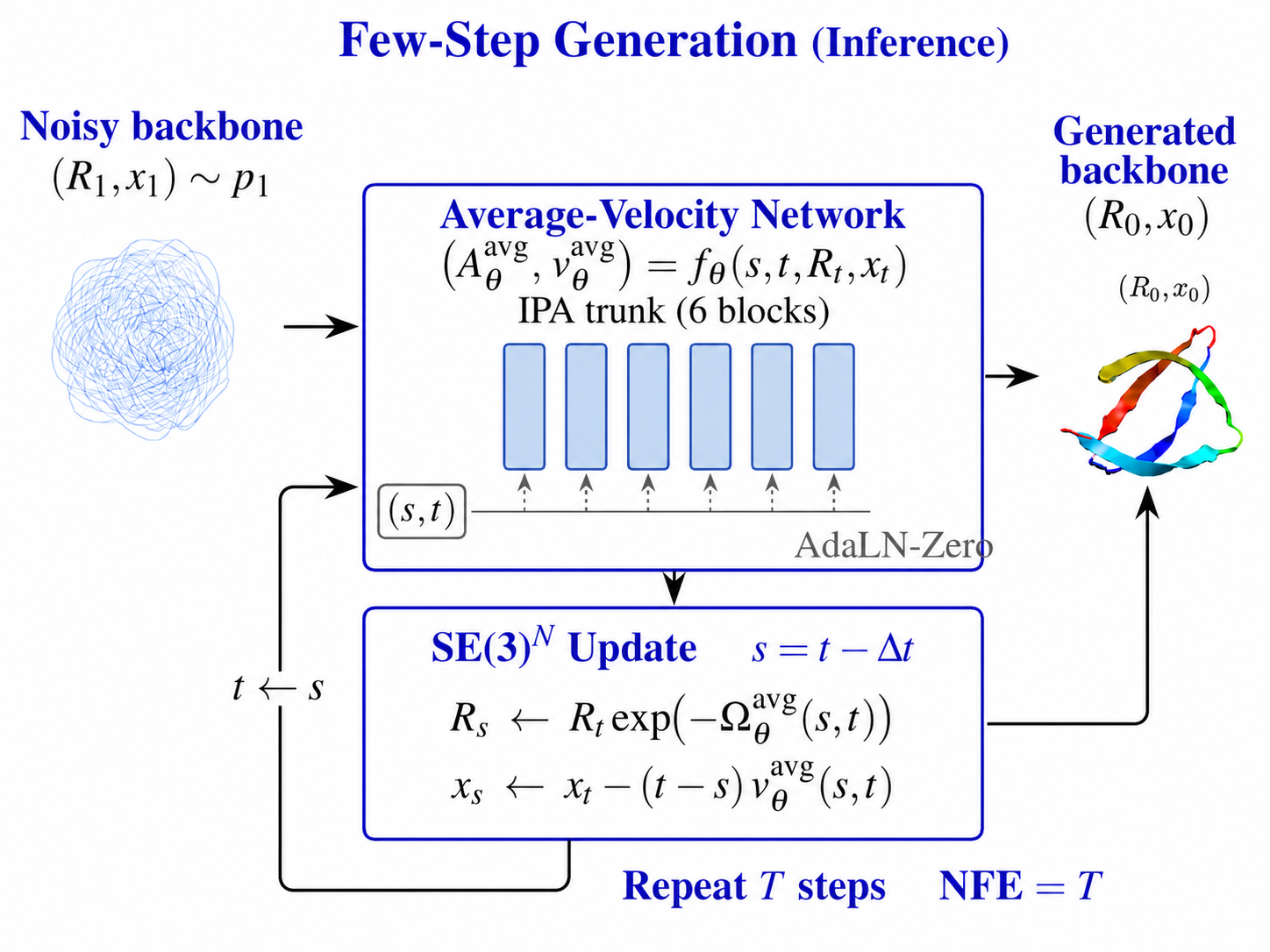}
\caption{Few-step inference with SE(3)-MeanFlow. The two-time network $f_\theta$ (see Appendix \ref{sec:architecture}) 
predicts the average velocity over $[s,t]$ rather than the instantaneous velocity
at $t$, so one evaluation advances the state by the full step---a Lie group
exponential on $\SO$, a Euclidean update on $\mathbb{R}^3$---giving
$\mathrm{NFE}=T$. The objective that makes this accurate is
Proposition~\ref{pro:A_identity_main} and its JVP-free $\alpha$-Flow surrogate
(Section~\ref{sec:alphaflow-main}).}
  \label{fig:overview}
\end{figure}

\section{Method: MeanFlow in SE(3) space}
\label{sec:method}

Since $\SE=\SO\times\mathbb{R}^3$, the $\mathbb{R}^3$ component reduces to the conventional Euclidean MeanFlow; hence, we focus on introducing the MeanFlow on $\SO$ below. \\
Let a distribution path $\{p_t\in\mathcal{P}(\SO):t\in[0,1]\}$ be generated by a
velocity field $v_t:[0,1]\times\SO\to\mathcal{T}\SO$. We define the
\emph{average velocity} over $[s,t]$ through the time-ordered exponential
{\footnotesize
\begin{align}
\exp\!\big((t-s)\,\Omega^{\text{avg}}(s,t,R_t,x_t)\big)
:=\mathcal{T}\exp\!\Big(\int_s^t\Omega(\tau,R_\tau,x_\tau)\,d\tau\Big).
\label{eq:avg_omgea_main}
\end{align}}
We denote the instantaneous (body-frame) angular velocity (vector form) by
\begin{align}
  \omega_t := \Omega(t,R_t,x_t)^{\vee}\in\mathbb{R}^3.
  \label{eq:omega_t-def}
\end{align}
Analogous to Euclidean MeanFlow, differentiating \eqref{eq:avg_omgea_main} gives an identity
linking the average velocity, its trajectory derivative, and the instantaneous velocity.

\begin{proposition}\label{pro:A_identity_main}
Differentiating \eqref{eq:avg_omgea_main} with respect to $t$ yields
{\footnotesize
\begin{align}
&J\big((t-s)A^{\text{avg}}\big)\Big(A^{\text{avg}}(s,t,R_t,x_t)+(t-s)\tfrac{d}{dt}A^{\text{avg}}\Big)
=\omega_t,\label{eq:omgea_identity_main}\\
&\text{where }A^{\text{avg}}=(\Omega^{\text{avg}})^\vee,\nonumber\\
&\qquad\frac{d}{dt}A^{\text{avg}}=\frac{\partial}{\partial t}A^{\text{avg}}
+\langle\nabla_R A^{\text{avg}},\dot{R}_t\rangle+\langle \nabla_x A^{\text{avg}},\dot{x}_t \rangle,
\label{eq:dA/dt}
\end{align}}
and the right Jacobian $J:\mathbb{R}^3\to\mathbb{R}^{3\times3}$ is
{\footnotesize
\begin{align}
J(A):=I-\frac{1-\cos\|A\|_2}{\|A\|_2^2}A^{\wedge}
+\frac{\|A\|_2-\sin\|A\|_2}{\|A\|_2^3}(A^{\wedge})^2,\label{eq:right_jacobian}
\end{align}}
with $\|A\|_2$ the $\ell_2$ norm of $A\in\mathbb{R}^3$; as $\|A\|\to0$, $J(A)\to I$ (we use its
Taylor expansion there). 
\end{proposition}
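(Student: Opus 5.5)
Fix $s$ and view both sides of \eqref{eq:avg_omgea_main} as curves in $\SO$ parametrized by $t\ge s$. Write $G(t):=\mathcal{T}\exp\big(\int_s^t\Omega(\tau,R_\tau,x_\tau)\,d\tau\big)$. By the definition of the time-ordered exponential (equivalently, by comparing with \eqref{eq:so3_ode} and using $R_t=R_sG(t)$), $G$ solves the left-trivialized ODE $\dot G(t)=G(t)\,\Omega(t,R_t,x_t)$ with $G(s)=I$. Hence the body-frame velocity of $G$ is $G(t)^{-1}\dot G(t)=\Omega(t,R_t,x_t)$, whose vee is exactly $\omega_t$ from \eqref{eq:omega_t-def}. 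The plan is to compute the same body-frame velocity from the left-hand side $\exp(\xi(t)^\wedge)$, with $\xi(t):=(t-s)A^{\text{avg}}(s,t,R_t,x_t)\in\mathbb{R}^3$, and to equate the two expressions.

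The key tool is the standard formula for the derivative of the exponential map in left-trivialized form. For a differentiable curve $\xi(t)\in\mathbb{R}^3$,
\begin{align*}
\Big(\exp(\xi^\wedge)^{-1}\tfrac{d}{dt}\exp(\xi^\wedge)\Big)^\vee = J(\xi)\,\dot\xi,
\end{align*}
where $J$ is the right Jacobian of $\SO$. I would recall or derive it from
\begin{align*}
\exp(-X)\tfrac{d}{dt}\exp(X)=\int_0^1 e^{-uX}\dot X e^{uX}\,du=\frac{1-e^{-\mathrm{ad}_X}}{\mathrm{ad}_X}\dot X .
\end{align*}
Under the hat map, $\mathrm{ad}_{\xi^\wedge}$ corresponds to $\xi^\wedge$ acting on $\mathbb{R}^3$. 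Since $(\xi^\wedge)^3=-\|\xi\|^2\xi^\wedge$, summing the power series $\sum_k(-1)^k(\xi^\wedge)^k/(k+1)!$ collapses it to the closed form \eqref{eq:right_jacobian}. The odd terms give $-\frac{1-\cos\|\xi\|}{\|\xi\|^2}\xi^\wedge$ and the even terms give $\frac{\|\xi\|-\sin\|\xi\|}{\|\xi\|^3}(\xi^\wedge)^2$. This series bookkeeping, in particular getting the sign conventions of the right rather than the left Jacobian consistent with the left-trivialized ODE, is the main step to check carefully. The limit $J\to I$ as $\|\xi\|\to0$ follows from the series, since both coefficients are analytic even functions of $\|\xi\|$.

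Equating the two body-frame velocities gives $J(\xi(t))\,\dot\xi(t)=\omega_t$. Finally, expand $\dot\xi$ by the product rule to get $\dot\xi=A^{\text{avg}}+(t-s)\frac{d}{dt}A^{\text{avg}}(s,t,R_t,x_t)$. The total derivative along the trajectory is given by the chain rule through the explicit $t$-dependence and through $(R_t,x_t)$, which yields \eqref{eq:dA/dt}. Here $\langle\nabla_R A^{\text{avg}},\dot R_t\rangle$ is understood as the directional derivative on $\SO$ along $\dot R_t=R_t\Omega_t$.

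One technical caveat is that $\log$ is only smooth away from $\|\xi\|=\pi$. Throughout, I would therefore assume that $\Omega^{\text{avg}}$ is the differentiable branch defined by \eqref{eq:avg_omgea_main}; this holds for $t$ near $s$ and generically elsewhere. Under that assumption the identity holds wherever $\xi$ is differentiable.
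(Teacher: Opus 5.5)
Your proposal is correct and follows essentially the paper's proof (Proposition~\ref{pro:Omega_identity} in the appendix). Both arguments differentiate both sides and use the Wilcox/dexp formula $\exp(-X)\tfrac{d}{dt}\exp(X)=\int_0^1 e^{-uX}\dot X e^{uX}\,du$ to identify $J(\xi)=\int_0^1 e^{-u\xi^\wedge}\,du$. They then cancel the common factor $R_s^\top R_t$ (your body-frame velocity) and finish with the product and chain rules. The differences are cosmetic: you pass through $\mathrm{ad}_X$ where the paper uses the conjugation identity $Qx^\wedge Q^\top=(Qx)^\wedge$, and you explicitly sum the series to the closed form \eqref{eq:right_jacobian}, which the paper only asserts.
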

\noindent This proposition is split into Propositions~\ref{pro:Omega_identity}
and~\ref{pro:dA/dt} in Appendix~\ref{sec:SE3MF-details}, with proofs.

\subsection{Model parametrization and training loss}
\label{sec:model-param}
The network exposes two interchangeable heads: an \emph{endpoint} ($x$-)head predicting the
clean state $(\hat R_0^\theta,\hat x_0^\theta)$, and an \emph{average-velocity} ($u$-)head
$A_\theta^{\text{avg}}:[0,1]\times[0,1]\times\SO\times\mathbb{R}^3\to\mathbb{R}^3$ and
$v_\theta^{\text{avg}}$, with $\Omega_\theta^{\text{avg}}=(A_\theta^{\text{avg}})^\wedge$. The two
carry the same information, related by the invertible map
\begin{align}
&A_\theta^{\text{avg}}=\tfrac1t\log\!\big((\hat R_0^\theta)^\top R_t\big)^{\vee}
\!\iff\!\hat R_0^\theta=R_t\exp\!\big(-t\,A_\theta^{\text{avg}\wedge}\big),\nonumber
\\
&v_\theta^{\text{avg}}=\tfrac1t\big(x_t-\hat x_0^\theta\big)
\!\iff\!\hat x_0^\theta=x_t-t\,v_\theta^{\text{avg}},\nonumber
\end{align}
which, on the constant-velocity ($x$-prediction) geodesic, is consistent with the
$(t-s)$-normalized average of \eqref{eq:avg_omgea_main}. The losses below regress the
velocity head; the endpoint head is produced at inference.

Based on \eqref{eq:omgea_identity_main}, we derive two equivalent mean flow training losses:
\begin{align}
&\mathcal{L}^J_{\SO}:=\mathbb{E}_{\substack{s<t\\ q(R_0,R_1)}}
\Big[\big\|\omega_\theta-\omega_t\big\|_2^2\Big],\label{eq:mf-se3-loss}\\
&\quad \omega_\theta=\text{sg}\big(J((t-s)A_\theta^{\text{avg}})\big)\Big(A_\theta^{\text{avg}}+(t-s)\,\text{sg}(\tfrac{d}{dt}A_\theta^{\text{avg}})\Big),\nonumber\\
&\mathcal{L}^{J^{-1}}_{\SO}:=\mathbb{E}\left[\|A_\theta^{\text{avg}}-\text{sg}(A_\theta^{\mathrm{tgt}})\|^2\right]\label{eq:mf-se3-loss2}\\
&\quad A_\theta^{\mathrm{tgt}}=J^{-1}((t-s)A_\theta^{\text{avg}})\omega_t-(t-s)\frac{d}{dt}A_\theta^{\text{avg}}\nonumber 
\end{align}
where all model arguments are $(s,t,R_t,x_t)$ and $\tfrac{d}{dt}A_\theta^{\text{avg}}$ is
given by \eqref{eq:dA/dt}.  Loss
\eqref{eq:mf-se3-loss2} is well-defined since $J$ is invertible on the principal branch
$\|(t-s)A_\theta^{\text{avg}}\|_2\le\pi$. We refer to Appendix~\ref{sec:SE3MF-details}
for details. 

\paragraph{Manifold derivatives via Euclidean JVPs.}
Although $R_t\in\SO$ is manifold-valued, $\tfrac{d}{dt}A_\theta^{\text{avg}}$ is a
directional derivative along the interpolation trajectory $(R_t,x_t)$, which we evaluate with
a standard Euclidean Jacobian--vector product (e.g.\ \texttt{torch.jvp}) through the chosen
matrix representation of $R_t$. A formal equivalence is given in
Appendix~\ref{sec:SE3MF-details}, Proposition \ref{pro:jvp_equivalent}.

\begin{proposition}[Correctness of the MeanFlow objective, informal]
\label{prop:mf-loss-correctness}
If $\mathcal{L}_{\SO}$ in \eqref{eq:mf-se3-loss} or \eqref{eq:mf-se3-loss2} is zero, the model recovers the correct
relative rotations along the path:
$\exp\!\big(((t-s)A_\theta^{\text{avg}}(s,t,R_t,x_t))^\wedge\big)=R_s^\top R_t$ for all
$0\le s<t\le1$.
\end{proposition}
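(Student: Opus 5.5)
The plan is to treat a vanishing loss as a pointwise ODE in $t$ for each fixed $s$, and then use ODE uniqueness to identify the model's rotation with the true relative rotation. Vanishing of the (expected squared) loss gives, for almost every $(s,t)$ and $q$-almost every path (and then everywhere by continuity of the network and the trajectory), the pointwise identity
\begin{align*}
J\big((t-s)A_\theta^{\text{avg}}\big)\Big(A_\theta^{\text{avg}}+(t-s)\tfrac{d}{dt}A_\theta^{\text{avg}}\Big)=\omega_t ,
\end{align*}
with arguments $(s,t,R_t,x_t)$. The stop-gradients do not affect the value of the residual, only its gradients. For $\mathcal{L}^{J^{-1}}_{\SO}$, invertibility of $J$ on the principal branch $\|(t-s)A_\theta^{\text{avg}}\|\le\pi$ converts the identity $A_\theta^{\text{avg}}=J^{-1}\omega_t-(t-s)\tfrac{d}{dt}A_\theta^{\text{avg}}$ into the same equation. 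So both losses reduce to one case.

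Next, fix $s$ and a trajectory, and set $B(t):=(t-s)A_\theta^{\text{avg}}(s,t,R_t,x_t)\in\mathbb{R}^3$ and $M(t):=\exp(B(t)^\wedge)$. By the chain rule along the trajectory (Proposition~\ref{pro:dA/dt}, or equivalently \eqref{eq:dA/dt}), $\tfrac{d}{dt}B=A_\theta^{\text{avg}}+(t-s)\tfrac{d}{dt}A_\theta^{\text{avg}}$. The identity above therefore says $J(B)\dot B=\omega_t$. The defining property of the right Jacobian \eqref{eq:right_jacobian}, namely $\tfrac{d}{dt}\exp(B^\wedge)=\exp(B^\wedge)\,(J(B)\dot B)^\wedge$, then gives
\begin{align*}
\dot M(t)=M(t)\,\Omega(t,R_t,x_t),\qquad M(s)=\exp(0)=I.
\end{align*}
I take this Jacobian identity to be the same computation used in the proof of Proposition~\ref{pro:A_identity_main}. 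The boundary condition $B(s)=0$ needs $A_\theta^{\text{avg}}$ to be bounded near $t=s$, which I will state as a regularity assumption.

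The true relative rotation $N(t):=R_s^\top R_t$ satisfies the same linear ODE by \eqref{eq:so3_ode}: $\dot N=R_s^\top R_t\Omega_t=N\Omega_t$, with $N(s)=I$. Since the ODE is linear with continuous coefficients, uniqueness gives $M(t)=N(t)$ for all $t\in[s,1]$. This is exactly the claim $\exp\big(((t-s)A_\theta^{\text{avg}})^\wedge\big)=R_s^\top R_t$, and it coincides with the definition \eqref{eq:avg_omgea_main}.

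The main obstacle is the passage from a zero expected loss to a pointwise identity for all $0\le s<t\le1$. This needs continuity of $A_\theta^{\text{avg}}$ and of the paths, plus full support of the $(s,t)$ sampling, and I would state these as standing assumptions of the informal statement. A secondary concern is that $J$ must stay invertible, i.e.\ $B$ must not cross $\|B\|=2\pi$, in the $J^{-1}$ version; the principal-branch restriction handles this. In the $J$ version no invertibility is needed, because the ODE argument only uses the forward identity.
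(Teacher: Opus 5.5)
Your proposal is correct and follows the paper's proof of Proposition~\ref{pro:loss_zero}. Both arguments use the right-Jacobian formula for $\frac{d}{dt}\exp(B^\wedge)$ to turn the zero-residual identity into the linear ODE $\dot M=M\,\Omega(t,R_t,x_t)$, and then conclude by uniqueness against the true path. The paper fixes $s=0$ and compares $R_0\exp(B^\wedge)$ with $R_t$, while you compare $\exp(B^\wedge)$ with $R_s^\top R_t$ directly, which is the same argument. Your explicit reduction of the $J^{-1}$ form and your regularity condition $B(s)=0$ make precise what the paper leaves implicit. That condition is genuinely needed, since zero loss determines $\exp(B^\wedge)$ only up to a constant left factor.
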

\noindent A formal statement and proof are in Proposition~\ref{pro:loss_zero} (Appendix).

\paragraph{Inference.}
We step backward using the learned average flow,
{\footnotesize$$
R_{s}\gets R_t\exp\!\big(-(t-s)\,\Omega_\theta^{\text{avg}}(s,t,R_t,x_t)\big),
\qquad t=1,\ldots,\tfrac1T,
$$}
and analogously $x_s\gets x_t-(t-s)\,v_\theta^{\text{avg}}$. See Figure \ref{fig:overview}. 
Reported results use a variant of this update, the exponential rotation schedule of
Appendix~\ref{sec:stable-rescale}.

\subsection{Practical implementation: SE(3)N adaptation}
Each protein sample is $R^N\in\mathbb{R}^{N\times3\times3}$, $x^N\in\mathbb{R}^{N\times3}$, with $N$
the number of residues. The model is
$f_\theta(s,t,R^N_t,x^N_t)=[A_\theta^{\text{avg},N};\,v_\theta^{\text{avg},N}]
\in\mathbb{R}^{N\times3}\times\mathbb{R}^{N\times3}$, where the $i$-th block
$A_{\theta,i}^{\text{avg},N}\in\mathbb{R}^3$ parametrizes the $\mathfrak{so}(3)$ component via
$\Omega_{\theta,i}^{\text{avg}}=(A_{\theta,i}^{\text{avg},N})^\wedge$. The loss
\eqref{eq:mf-se3-loss} (or \eqref{eq:mf-se3-loss2}) is summed over residues.

\subsection{\texorpdfstring{$\SE$}{SE(3)} \texorpdfstring{$\alpha$}{alpha}-Flow: rotation formulation and MeanFlow limit}
\label{sec:alphaflow-main}
The differential target \eqref{eq:omgea_identity_main} requires the trajectory derivative
$\tfrac{d}{dt}A^{\text{avg}}_\theta$ via a JVP, which could be fragile on the rotation branch. We
therefore also adopt an $\alpha$-Flow formulation~\cite{zhang2025alphaflow} that replaces this
derivative by a two-segment construction using only forward evaluations of
$A^{\text{avg}}_\theta$.

Given $0\le s<t\le1$ and a ratio $\alpha\in(0,1]$, insert an intermediate time
$m=\alpha s+(1-\alpha)t$ with step $\delta:=t-m=\alpha(t-s)$, splitting $[s,t]$ into a
\emph{far} segment $[s,m]$ and a \emph{near} segment $[m,t]$. With
$D(a,b):=R_a^\top R_b\in\SO$ the accumulated relative rotation, the segments compose
\emph{multiplicatively} (rather than additively as in Euclidean space),
\begin{align}
D(s,t)=D(s,m)\,D(m,t)
\qquad\text{(Appendix~\ref{sec:alpha-flow}, Prop.~\ref{prop:af-additivity})}.\nonumber
\end{align}
We anchor the near factor to data and bootstrap the far factor from the model. Stepping back
from $R_t$ along the data angular velocity $\omega_t:=\Omega(t,R_t,x_t)^\vee$ gives the intermediate
state $R_m=R_t\exp(-\delta\,\omega_t^{\wedge})$ and the near factor $D(m,t)=\exp(\delta\,\omega_t^{\wedge})$;
a stop-gradient model query at $(s,m,R_m,x_m)$ returns $A_m:=A^{\text{avg}}_\theta(s,m,R_m,x_m)$
and the far factor $D(s,m)=\exp((m-s)A_m^{\wedge})$. Composing and mapping back to the Lie
algebra gives the target average generator over $[s,t]$:
\begin{align}
A^{\text{avg}}_{\text{tgt}}(s,t)
=\frac{1}{t-s}\,\log\!\Big(
\underbrace{\exp\!\big((m-s)A_m^{\wedge}\big)}_{D(s,m)\ (\text{model})}\;
\underbrace{\exp\!\big(\delta\,\omega_t^{\wedge}\big)}_{D(m,t)\ (\text{data})}
\Big)^{\vee}.
\label{eq:alphaflow-rot-tgt-main}
\end{align}
The scalar $\tfrac{1}{t-s}$ must stay outside the $\log(\exp\cdot\exp)$: since $A_m$ and $\omega_t$ do
not commute, folding it in would corrupt the Baker--Campbell--Hausdorff cross term and no longer
give the generator of $D(s,t)$. We regress
\begin{align}
\mathcal{L}^\alpha_{\text{rot}}
=\frac{1}{\alpha}\sum_{i=1}^{N}
\big\|A^{\text{avg}}_{\theta,i}(s,t,R_t,x_t)-\mathrm{sg}\big(A^{\text{avg}}_{\text{tgt},i}\big)\big\|_2^2 ,
\label{eq:alphaflow-rot-loss-main}
\end{align}
which uses only forward evaluations (one log, two exps; no JVP). The numerically stabilized
form and the abelian translation branch --- where composition reduces to a convex combination
of velocities as in the Euclidean $\alpha$-Flow~\cite{zhang2025alphaflow} --- are given in
Appendix~\ref{sec:af-trans}.

The ratio $\alpha$ interpolates between flow matching and MeanFlow: at $\alpha=1$ the far factor
vanishes ($m=s$) and $A^{\text{avg}}_{\text{tgt}}=\omega_t$, while as $\alpha\to0$ a first-order BCH
expansion recovers the differential MeanFlow objective \eqref{eq:omgea_identity_main} in
gradient (Appendix~\ref{sec:af-meanflow-limit}).

\section{Experiments}\label{sec:experiment}
We evaluate our method on unconditional protein backbone generation and compare it against recent diffusion- and flow-based baselines. Our experiments are designed to assess both generation quality and sampling efficiency, with a particular focus on few-step generation. To this end, we report performance under different numbers of sampling steps, allowing us to examine how well each method maintains designability, diversity, and novelty as the computational budget is reduced.

\subsection{Datasets}

Following prior works~\cite{yue2025reqflow,woo2026riemannian}, we conduct experiments on the SCOPe dataset~\cite{chandonia2022scope,frameflow}, which consists of 3,673 preprocessed protein backbones with residue lengths between 60 and 128. We refer to Figure \ref{fig:scope-length-hist} in the Appendix for a visualization of the backbone length-frequency distribution.

\begin{figure}[t]
  \centering
  \includegraphics[width=0.49\linewidth]{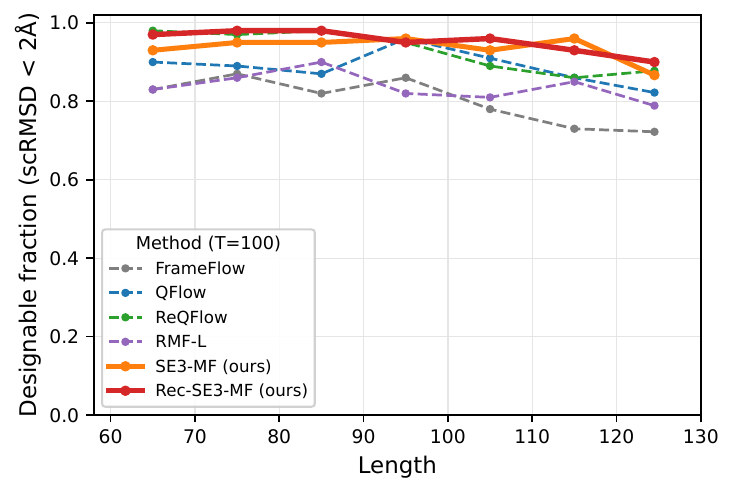}\hfill
\includegraphics[width=0.49\linewidth]{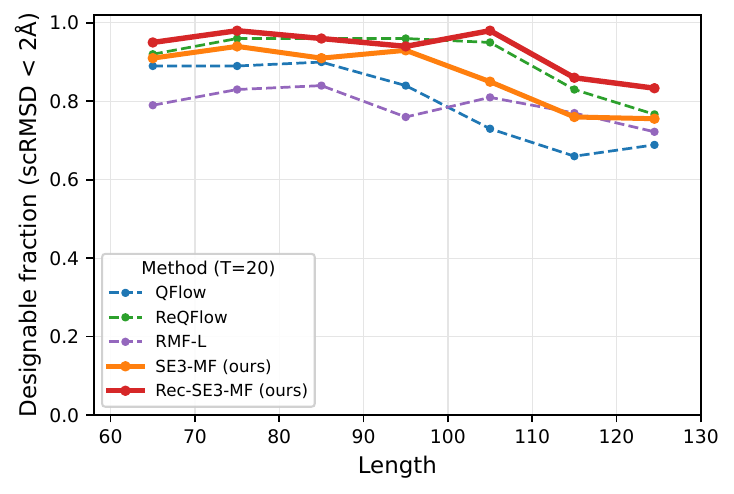}
  \caption{Designable fraction versus length (top: $T{=}100$, bottom: $T{=}20$).}
  \label{fig:fraction-vs-length-T100-T20}
\end{figure}

\begin{figure}[t]
  \centering
  \includegraphics[width=0.49\linewidth]{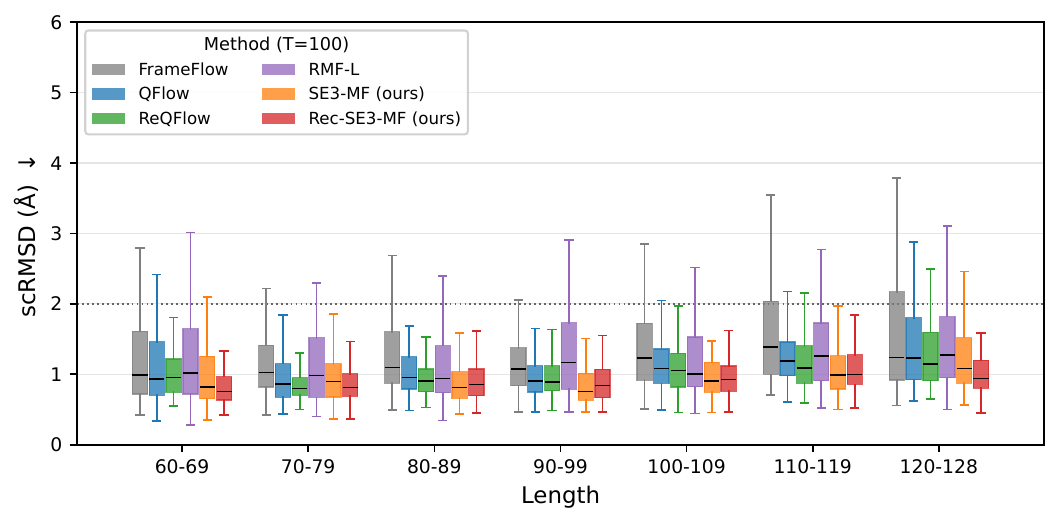}\hfill
\includegraphics[width=0.49\linewidth]{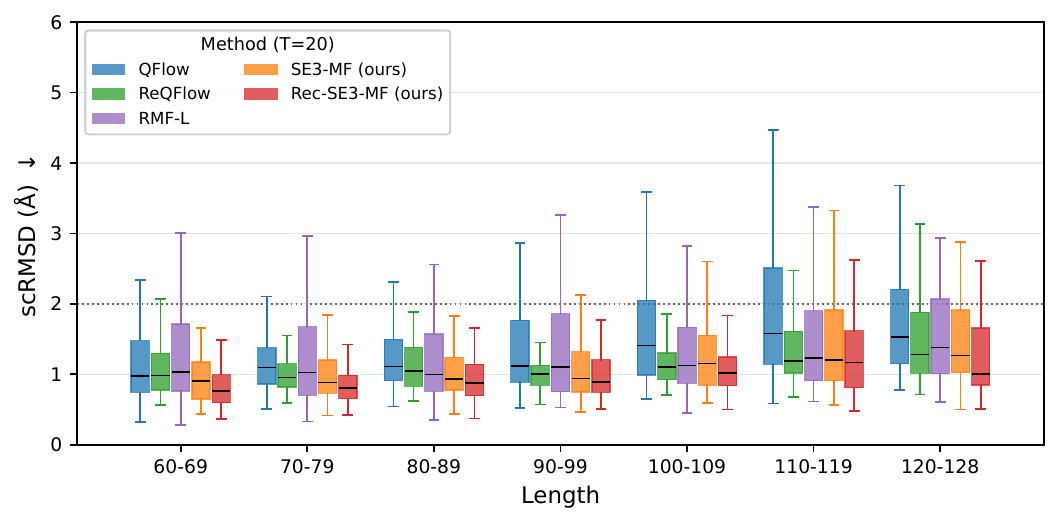}
  \caption{scRMSD distributions (top: $T{=}100$, bottom: $T{=}20$).}
  \label{fig:scrmsd-boxplot-T100-T20}
\end{figure}

\begin{table*}[t]
\centering
\begingroup
\small
\setlength{\tabcolsep}{5pt}
\begin{tabular}{@{}llccccc@{}}
\toprule
\multirow{2}{*}{Steps} & \multirow{2}{*}{Method}
& \multicolumn{3}{c}{Designability}
& \multirow{2}{*}{Diversity $\mathrm{TM}\downarrow$}
& \multirow{2}{*}{Novelty $\mathrm{TM}\downarrow$} \\
\cmidrule(lr){3-5}
& & Fraction $\uparrow$ & scRMSD $\downarrow$ & scTM $\uparrow$ & & \\
\midrule

\multirow{6}{*}{500 / 100}
& FrameFlow (500) & 0.849 & $1.439 \pm 1.137$ & $0.879 \pm 0.084$ & 0.369 & \underline{0.654} \\
& FrameFlow (100) & 0.803 & $1.576 \pm 1.367$ & $0.872 \pm 0.090$ & \underline{0.360} & \textbf{0.638} \\
& QFlow (500)     & \underline{0.900} & \underline{$1.271 \pm 1.113$} & \underline{$0.897 \pm 0.078$} & 0.399 & 0.720 \\
& QFlow (100)     & 0.885 & $1.319 \pm 1.018$ & $0.889 \pm 0.077$ & 0.393 & 0.699 \\
& RMF (100)       & 0.832 & $1.425 \pm 1.145$ & $0.885 \pm 0.081$ & \textbf{0.343} & 0.719 \\
& \textbf{SE3MF} (100) & \textbf{0.936} & $\mathbf{1.106} \pm 0.870$ & $\mathbf{0.909} \pm 0.067$ & 0.415 & 0.736 \\
\midrule

\multirow{3}{*}{50}
& QFlow     & \underline{0.870} & \underline{$1.432 \pm 1.218$} & $0.882 \pm 0.080$ & \underline{0.379} & \textbf{0.684} \\
& RMF       & 0.825 & $1.446 \pm 1.189$ & \underline{$0.885 \pm 0.081$} & \textbf{0.344} & \underline{0.717} \\
& \textbf{SE3MF} & \textbf{0.906} & $\mathbf{1.246} \pm 1.190$ & $\mathbf{0.902} \pm 0.078$ & 0.413 & 0.722 \\
\midrule

\multirow{3}{*}{20}
& QFlow     & 0.778 & $1.703 \pm 1.658$ & $0.853 \pm 0.098$ & \underline{0.377} & \textbf{0.648} \\
& RMF       & \underline{0.806} & \underline{$1.554 \pm 1.438$} & \underline{$0.877 \pm 0.091$} & \textbf{0.345} & 0.722 \\
& \textbf{SE3MF} & \textbf{0.867} & $\mathbf{1.369} \pm 1.144$ & $\mathbf{0.883} \pm 0.087$ & 0.408 & \underline{0.703} \\
\midrule

\multirow{3}{*}{10}
& QFlow     & 0.559 & $2.691 \pm 2.263$ & $0.773 \pm 0.144$ & \underline{0.379} & \textbf{0.615} \\
& RMF       & \textbf{0.778} & $\mathbf{1.641} \pm 1.432$ & $\mathbf{0.870} \pm 0.088$ & \textbf{0.346} & 0.713 \\
& \textbf{SE3MF} & \underline{0.728} & \underline{$1.940 \pm 1.741$} & \underline{$0.832 \pm 0.118$} & 0.402 & \underline{0.659} \\

\bottomrule
\end{tabular}
\endgroup
\caption{Unconditional protein backbone generation on SCOPe, grouped by sampling
budget. Baselines: FrameFlow~\cite{frameflow}, QFlow~\cite{yue2025reqflow},
Riemannian MeanFlow (RMF)~\cite{woo2026riemannian}. Within each budget group,
best in \textbf{bold} and second-best \underline{underlined}.}
\label{tab:scope_comparison}
\end{table*}

\begin{table*}[t]
\centering
\begingroup
\small
\setlength{\tabcolsep}{6pt}
\begin{tabular}{@{}llccccc@{}}
\toprule
\multirow{2}{*}{Steps} & \multirow{2}{*}{Method}
& \multicolumn{3}{c}{Designability}
& \multirow{2}{*}{Diversity $\mathrm{TM}\downarrow$}
& \multirow{2}{*}{Novelty $\mathrm{TM}\downarrow$} \\
\cmidrule(lr){3-5}
& & Fraction $\uparrow$ & scRMSD $\downarrow$ & scTM $\uparrow$ & & \\
\midrule

\multirow{2}{*}{100}
& ReQFlow          & 0.946 & $1.120 \pm 0.635$ & $0.901 \pm 0.062$ & \textbf{0.420} & \textbf{0.695} \\
& \textbf{RecSE3MF} & \textbf{0.968} & $\mathbf{0.974} \pm 0.620$ & $\mathbf{0.920} \pm 0.057$ & 0.448 & 0.734 \\
\midrule

\multirow{2}{*}{50}
& ReQFlow          & 0.946 & $1.131 \pm 0.633$ & $0.900 \pm 0.062$ & \textbf{0.418} & \textbf{0.689} \\
& \textbf{RecSE3MF} & \textbf{0.970} & $\mathbf{0.941} \pm 0.483$ & $\mathbf{0.923} \pm 0.048$ & 0.452 & 0.728 \\
\midrule

\multirow{2}{*}{20}
& ReQFlow          & 0.910 & $1.270 \pm 0.736$ & $0.883 \pm 0.074$ & \textbf{0.416} & \textbf{0.678} \\
& \textbf{RecSE3MF} & \textbf{0.929} & $\mathbf{1.133} \pm 1.314$ & $\mathbf{0.903} \pm 0.073$ & 0.454 & 0.726 \\
\midrule

\multirow{2}{*}{10}
& ReQFlow          & 0.837 & $1.552 \pm 1.113$ & $0.846 \pm 0.092$ & \textbf{0.414} & \textbf{0.662} \\
& \textbf{RecSE3MF} & \textbf{0.894} & $\mathbf{1.269} \pm 1.040$ & $\mathbf{0.891} \pm 0.084$ & 0.453 & 0.710 \\

\bottomrule
\end{tabular}
\endgroup
\caption{\textbf{Post-train comparison on SCOPe.} Within each budget, best in
\textbf{bold}. Training budgets are in Table~\ref{tab:efficiency}.}
\label{tab:scope_posttrain}
\end{table*}

\subsection{Baselines}
We compare recent flow-based backbone generators: FrameFlow~\cite{frameflow},
QFlow and ReQFlow~\cite{yue2025reqflow}, and Riemannian MeanFlow (RMF)~\cite{woo2026riemannian}.
Among these, ReQFlow and RMF are most closely related to our approach, as they also
target efficient few-step or accelerated generation. As a representative of earlier
(pre-2024) diffusion- and flow-matching methods we include FrameFlow, which demonstrated strongest generation capacity on SCOPe; we discuss the earlier baselines (e.g.\ FrameDiff, Genie) in Appendix~\ref{sec:related-classic}.

\subsection{Implementation Details}

For experiments on SCOPe, we use publicly available checkpoints when possible to reproduce baseline results under a consistent evaluation protocol. All experiments are conducted using 4 H-100 GPUs. Unless otherwise stated, we follow the evaluation settings used in prior work \cite{yue2025reqflow}, including the same datasets, sampling protocols, and evaluation metrics, to ensure a fair comparison across methods.

\subsection{Training details}
\label{sec:protein-se3mf-training}

\paragraph{Model and trainer.}
We implement our method within the public QFlow/ReQFlow codebase~\cite{yue2025reqflow}
and inherit its overall training pipeline, adapting it in the following respects to fit the MeanFlow model.\\
\emph{(i) Representation and interpolation.} QFlow/ReQFlow parametrize rotations by unit
quaternions; we instead work directly with rotation matrices under the decoupled
$\SE=\SO\times\mathbb{R}^3$ representation of \citet{yim2023se}. Accordingly, we build the
data--noise interpolation and the mini-batch coupling with our own $\SE$ geodesic
interpolant and optimal-transport (OT) coupling in rotation-matrix form
(Appendix~\ref{sec:bg-fm}; global-OT coupling in Eq.~\eqref{eq:global-ot}), rather than
the quaternion interpolation of QFlow.\\
\emph{(ii) Model.} We keep the ReQFlow IPA trunk but make it consume a rotation-matrix
state $(R_t,x_t)\in\SE^N$ and condition on \emph{two} times $(s,t)$---via a shared
two-time embedding and a per-block AdaLN-Zero gate---and predict endpoints
$(\hat R_0^\theta,\hat x_0^\theta)$ (Appendix~\ref{sec:architecture}).\\
\emph{(iii) Objective.} In place of the (V-)QFlow flow-matching loss, we train with our
$\SE$ MeanFlow objective (Section~\ref{sec:method}) and its $\alpha$-Flow variant
(Section~\ref{sec:alphaflow-main}); derivations and the stable implementation are in
Appendix~\ref{sec:SE3MF-details} and~\ref{sec:training-details}.
The time sampler is the two-time extension ($s\le t$) of the QFlow sampler, with the
marginal schedule of $t$ left unchanged, and
$s\mid t\sim\mathcal{U}[t_{\min},t]$ with $t_{\min}=10^{-6}$
(Table~\ref{tab:train-config}). All remaining pipeline components are inherited
from the QFlow/ReQFlow codebase.

\paragraph{Stable training.}
The differential MeanFlow target requires a time derivative obtained via a
Jacobian--vector product (JVP), which can be numerically brittle through the backbone
network. We use two remedies: the JVP-free $\alpha$-Flow objective
(Section~\ref{sec:alphaflow-main}) as a warm-up, and a small-$t$ stabilized form of the
JVP-based loss (Appendix~\ref{sec:stable-rescale}, Algorithm~\ref{alg:stable-derivative-B}).

\paragraph{Pre-training (two stages).}
Stage~1 is the $\alpha$-Flow warm-up on $\SE^N$
(Section~\ref{sec:alphaflow-main}; Appendix~\ref{sec:stage1}); Stage~2 switches to the
endpoint$+$MeanFlow objective (Appendix~\ref{sec:stage2}).

\paragraph{Post-training.}
We further apply a rectification (self-reflow) stage following ReQFlow's
rectified-flow strategy, keeping the Stage-2 MeanFlow objective
(Appendix~\ref{sec:posttrain}). We denote the resulting model RecSE3MF.
\begin{table}[t]
\centering
\begingroup
\small
\setlength{\tabcolsep}{4pt}
\begin{tabular}{lcccc}
\toprule
Method & Inf Steps & $\alpha$-Helix  & $\beta$-Strand  & $\mathrm{C}\alpha$-valid $\uparrow$\\
\midrule
\multirow{2}{*}{FrameFlow}
& 500 & $0.450\pm0.323$ & $0.248\pm0.218$ & $0.962\pm0.041$\\
& 100 & $0.465\pm0.308$ & $0.234\pm0.212$ & $0.987\pm0.014$\\
\midrule
\multirow{4}{*}{QFlow}
& 100 & $0.502\pm0.302$ & $0.208\pm0.206$ & $0.993\pm0.012$\\
& 50  & $0.469\pm0.291$ & $0.223\pm0.204$ & $0.996\pm0.008$\\
& 20  & $0.476\pm0.273$ & $0.201\pm0.184$ & $0.894\pm0.046$\\
& 10  & $0.461\pm0.246$ & $0.168\pm0.160$ & $0.831\pm0.061$\\
\midrule
\multirow{4}{*}{RMF}
& 100 & $0.346\pm0.232$ & $0.278\pm0.179$ & $0.996\pm0.007$\\
& 50  & $0.342\pm0.232$ & $0.279\pm0.178$ & $0.996\pm0.008$\\
& 20  & $0.340\pm0.230$ & $0.277\pm0.178$ & $0.996\pm0.008$\\
& 10  & $0.341\pm0.229$ & $0.276\pm0.173$ & $0.996\pm0.008$\\

\midrule

\multirow{4}{*}{\textbf{SE3MF}}
& 100 & $0.536\pm0.339$ & $0.200\pm0.238$ & $0.971\pm0.022$\\
& 50  & $0.517\pm0.336$ & $0.213\pm0.236$ & $0.971\pm0.022$\\
& 20  & $0.505\pm0.324$ & $0.210\pm0.224$ & $0.970\pm0.023$\\
& 10  & $0.497\pm0.299$ & $0.193\pm0.203$ & $0.975\pm0.020$\\
\midrule
\midrule
\multirow{4}{*}{ReQFlow}
& 100 & $0.511\pm0.312$ & $0.224\pm0.214$ & $0.993\pm0.009$\\
& 50  & $0.509\pm0.311$ & $0.223\pm0.212$ & $0.986\pm0.014$\\
& 20  & $0.507\pm0.307$ & $0.214\pm0.212$ & $0.860\pm0.066$\\
& 10  & $0.502\pm0.310$ & $0.206\pm0.209$ & $0.712\pm0.105$\\
\midrule
\multirow{4}{*}{\textbf{RecSE3MF}}
& 100 & $0.559\pm0.357$ & $0.207\pm0.249$ & $0.979\pm0.017$\\
& 50  & $0.559\pm0.360$ & $0.206\pm0.249$ & $0.979\pm0.017$\\
& 20  & $0.571\pm0.354$ & $0.197\pm0.246$ & $0.978\pm0.017$\\
& 10  & $0.558\pm0.359$ & $0.198\pm0.245$ & $0.979\pm0.017$\\
\bottomrule
\end{tabular}%
\endgroup
\caption{Evaluation scores at different sampling steps. Methods follow Table~\ref{tab:scope_comparison} and Table~\ref{tab:scope_posttrain}.}
\label{tab:SCOPe_ss}
\end{table}

\begin{table}[t]
\centering
\begingroup
\small
\setlength{\tabcolsep}{1pt}
\begin{tabular}{@{}lccc@{}}
\toprule
Method & Params & Train Steps & Time/step (ms) $\downarrow$ \\
\midrule
\multicolumn{4}{@{}l}{\emph{Pre-trained}}\\
FrameFlow   & $16.8$M & $177$k & $57$ \\
QFlow & $16.8$M & $90$k  & $43$ \\
RMF & $437$M  & $598$k & $454$ \\
\textbf{SE3MF} (ours)        & $16.8$M & $106$k & $43$ \\
\midrule
\multicolumn{4}{@{}l}{\emph{Post-trained (rectification)}}\\
ReQFlow & $16.8$M & $90$k\,$+$\,$4.6$k & $43$ \\
\textbf{RecSE3MF} (ours)      & $16.8$M & $106$k\,$+$\,$6.5$k & $43$ \\
\bottomrule
\end{tabular}
\endgroup
\caption{Model size, training budget, and sampling cost. Post-training steps are
given as pre-training $+$ rectification. Time/step is wall-clock per sampling
step on a single NVIDIA H100 80GB (single process, batch $10$, length $128$,
fp32); each method runs one network forward per step, so total sampling time
$\approx$ steps $\times$ time/step.}
\label{tab:efficiency}
\end{table}
\subsection{Evaluation metrics and settings}
We evaluate generated protein backbones with four metrics, following prior
work~\citep{yue2025reqflow}: designability, diversity, novelty, and efficiency.
For each chain length $N$ (from $60$ to $128$) we generate $10$ backbones.
Designability is the primary measure of sample quality and assesses whether a
generated backbone admits an amino-acid sequence that folds back into a consistent
structure. For each backbone we design $8$ sequences with ProteinMPNN~\citep{dauparas2022robust},
predict the folded structure of each with ESMFold~\citep{esm}, and take the minimum
self-consistency RMSD (scRMSD) over the $8$ designs. A backbone is deemed
\emph{designable} when this scRMSD is at most $2\,\text{\AA}$. We report the fraction
of designable backbones (per length, averaged over lengths), denoted Fraction, the
mean scRMSD, and the mean self-consistency TM-score (scTM), where higher scTM and
Fraction and lower scRMSD are better.

To assess distributional properties, we measure structural diversity and novelty over
the designable subset. Diversity is the pairwise TM-score~\citep{zhang2004scoring}
among designable structures of the same length, averaged across lengths, where lower
values indicate a less redundant set. Novelty compares each designable sample against
the Protein Data Bank (PDB) with Foldseek~\citep{van2022foldseek} and records the
maximum TM-score to the retrieved structures; the average of these maxima summarizes
similarity to known proteins, with lower values indicating greater novelty. Finally,
we quantify efficiency by the number of sampling steps used to generate the backbones.

\subsection{Results and Discussion}
\begin{figure}[t]
  \centering
  \includegraphics[width=0.6\linewidth]{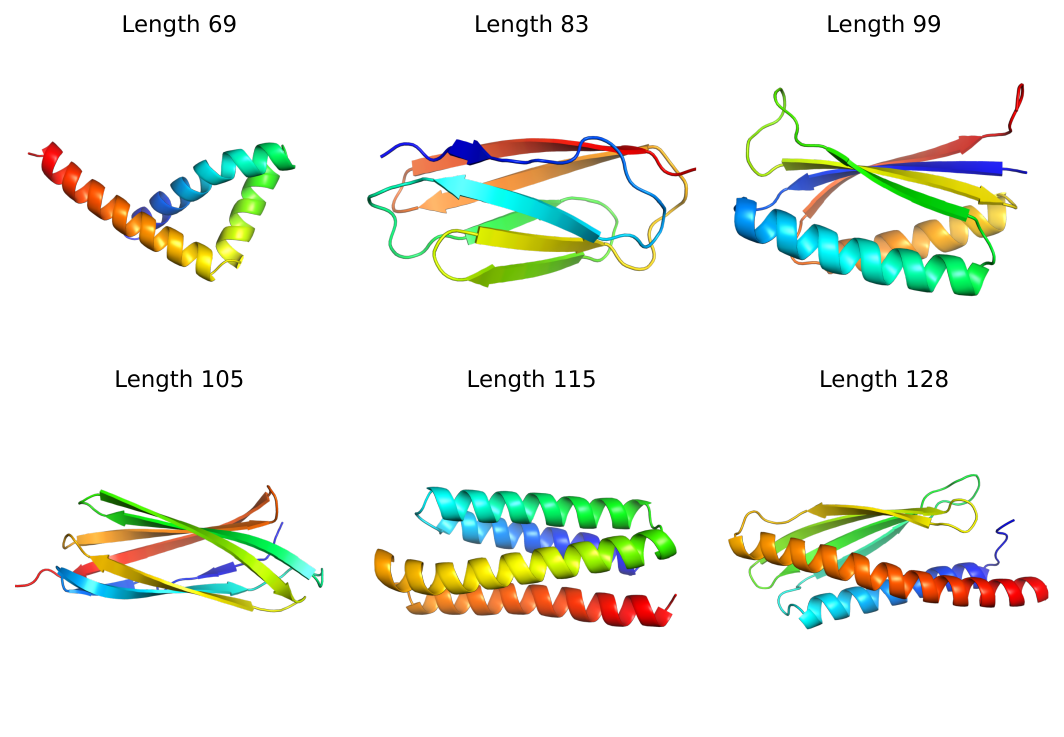}
  \caption{Qualitative samples on SCOPe generated at different sampling budgets. See Appendix for additional diagnostics.}
  \label{fig:samples-gallery}
\end{figure}
SE(3)-MeanFlow (SE3MF) consistently improves few-step protein backbone generation. Across the
20--100-step regime, it achieves the strongest designability among pretrained
baselines, ranking first in designable fraction, scRMSD, and scTM
(Table~\ref{tab:scope_comparison}). The gains are preserved across protein
lengths (Figures~\ref{fig:fraction-vs-length-T100-T20}
and~\ref{fig:scrmsd-boxplot-T100-T20}), indicating that the improvement is not
restricted to short or structurally simple backbones. At 20 steps, SE(3)-MeanFlow
retains a designable fraction of $0.867$, compared with $0.806$ for RMF and
$0.778$ for QFlow, demonstrating a strong designability--efficiency trade-off
under substantial step reduction; Figure~\ref{fig:samples-gallery} shows
designable backbones generated at this budget.

These results support the effectiveness of specializing MeanFlow to the
Lie-group structure of protein frames. Empirically, the complete formulation maintains strong
designability as the sampling budget is reduced while preserving stable local
backbone geometry, with $\mathrm{C}\alpha$-validity remaining near $0.97$ from
100 to 10 steps (Table~\ref{tab:SCOPe_ss}). The secondary-structure
statistics are likewise stable across sampling budgets
(Figure~\ref{fig:ss-scope-T100-T20}), suggesting that accelerated generation
does not substantially alter the structural composition of the samples.

Rectification further strengthens the aggressive few-step regime. RecSE3MF
outperforms ReQFlow on the three designability metrics at nearly all sampling
budgets and reaches a designable fraction of $0.894$ at 10 steps
(Table~\ref{tab:scope_posttrain}). This suggests that rectification and
MeanFlow play complementary roles: rectification simplifies the transport
paths, while MeanFlow learns accurate finite-interval motion along those paths.

The improved designability comes at a cost in coverage. Both SE(3)-MeanFlow and
RecSE3MF have higher diversity TM-scores than the coverage-oriented baselines
(lower is better), and their novelty is weakest at large budgets ($0.736$ at
$100$ steps, against $0.719$ for RMF); the gap narrows as the budget falls, with
novelty second-best in its group at $20$ and $10$ steps. Diversity is nearly flat
in the number of steps ($0.415$ to $0.402$), so the concentration reflects the
learned model rather than step reduction. Overall, SE(3)-MeanFlow advances the
designability--efficiency frontier while preserving stable geometric and
structural statistics under substantially reduced sampling budgets.
\paragraph{Training budget and model size.}
Table~\ref{tab:efficiency} compares model size, training budget and per-step
sampling cost. Our network is the ReQFlow/QFlow trunk with a small ($0.33\%$)
AdaLN addition, hence comparable in size to QFlow and checkpoint-compatible with
it (Appendix~\ref{sec:architecture}), and $\sim\!26\times$ smaller than RMF; its
total budget is on the same order as the flow-matching baselines from the same
codebase, and far below RMF's $\sim\!598$k steps (their cap is $1000$k).
We conjecture that part of this gap reflects RMF's semigroup consistency
objective rather than its model size alone: straightening the transport paths
enough for few-step sampling appears to demand a large budget, and this cost
grows with the number of residues. Appendix~\ref{sec:abl-controlled} tests this by fixing the model, the
initialization and the training budget, and swapping only the objective; under
this control the semigroup target does not reach the few-step regime while ours
does. 

On the low-dimensional $\mathrm{SO}(3)^2$ benchmark of Appendix~\ref{sec:toy},
a second controlled ablation in which only the loss varies, every
average-velocity method reaches the few-step regime under a much smaller budget.
\begin{remark}
Epoch counts are not comparable across methods, as the data loaders differ, whereas
each optimizer step processes the same amount of data. We therefore report budgets
in optimizer steps; see Appendix~\ref{sec:protein-extra} for details. 
\end{remark}

\section{Conclusion}
We introduced \textbf{SE(3)-MeanFlow}, a few-step generative framework on $\SE^N$.
Exploiting the decoupled $\SE=\SO\times\mathbb{R}^3$ structure, we derive
simulation-free average-velocity objectives directly in the Lie algebra, avoiding
parallel transport, and stabilize training with a JVP-free $\alpha$-Flow warm-up and
a small-$t$ MeanFlow scheme supporting both training from scratch and
rectification-based post-training. On SCOPe, SE(3)-MeanFlow advances the
designability--efficiency frontier while using fewer sampling steps and a smaller
model. Its main limitation is a modest reduction in diversity; future work will seek
objectives that better balance coverage and designability, extend to conditional and
motif-scaffolding tasks, and push toward one-step generation.

\section*{Acknowledgments}
Y.B. was supported in part by the Purdue Institute for Physical AI (IPAI) Postdoctoral Fellows Program. G.L. would like to thank the support of National Science Foundation (DMS-2533878, DMS-2053746, DMS-2134209, ECCS-2328241, CBET-2347401 and OAC-2311848), and U.S.~Department of Energy (DOE) Office of Science Advanced Scientific Computing Research program, under the ``Uncertainty Quantification for Multifidelity Operator Learning (MOLUcQ)'' project (Project No. 81739), DE-SC0023161, the SciDAC LEADS Institute, and DOE--Fusion Energy Science, under grant number: DE-SC0024583.

\bibliographystyle{plainnat}
\bibliography{refs}

\newpage
\appendix
\appendix



\makeatletter
\def\addcontentsline#1#2#3{%
  \addtocontents{#1}{\protect\contentsline{#2}{#3}{\thepage}{}}%
}
\makeatother

\etocsettocstyle{\section*{Appendix Contents}}{}
\etocsetnexttocdepth{section}  
\tableofcontents

\section{Background: MeanFlow in Euclidean Space}
Consider a probability path $\{p_t:t\in[0,1]\}$ with $p_0=p_{\text{data}}$ and $p_1=p_{\text{prior}}$, where we assume $p_t$ is generated by the following ODE system:
\begin{align}
  \begin{cases}
  X_0\sim p_0 &\text{initial distribution}\\
  dX_t=v(t,X_t)dt& \text{ODE}\\ 
p_t=\text{Law}(X_t)& \text{Marginal distribution in time}
\end{cases}
\end{align}
Equivalently,  $(v_t,p_t)$ satisfies  the continuity equation:
\begin{align}
  \frac{d}{dt}p_t = -\nabla\cdot(v_tp_t) \label{eq:cont_eq}
\end{align}
with the same initial distribution condition.

\citet{geng2026mean} proposed the mean velocity field $u(s,t,x_t)$ defined as
\begin{equation}
u(s,t,x_t) := \frac{1}{t-s}\int_s^t v(\tau,x_\tau)d\tau \in \mathbb{R}^d.\label{eq:u_Euclidean}
\end{equation}

Multiplying both sides by $(t-s)$ and differentiating with respect to $t$ yields the \emph{MeanFlow identity}:
\begin{align}
&\begin{cases}
\text{L.H.S.}&=\frac{d}{dt}(t-s)u(s,t,x_t)=u(s,t,x_t)+(t-s)\frac{d}{dt}u(s,t,x_t)\nonumber\\
\text{R.H.S.}&=v(t,x_t)=x_1-x_0\nonumber \\
\end{cases}\nonumber 
\end{align}

We compute the R.H.S. from the data and the L.H.S. from the model; the training loss is derived from these calculations: 
\begin{align}
&\mathcal{L}=\begin{cases}
  &\mathbb{E}_{t,(x_0,x_1)\sim \pi_0}\left[\left\|u_\theta+\text{sg}((t-s)(\frac{d}{dt}u_t)-v(t,x_t))\right\|^2\right]\nonumber\\
  &\qquad\text{\cite{geng2026mean}}\\
  &\mathbb{E}_{t,(x_0,x_1)\sim \pi_0}\left[\left\|u_\theta+(t-s)\text{sg}(\frac{d}{dt}u_t)-v(t,x_t)\right\|^2\right]\nonumber\\
  &\qquad\text{\cite{geng2026improved}}
  \end{cases}\label{eq:mf_loss-details}\\
&\text{where }v(t,x_t)=x_1-x_0 \nonumber\\
&\frac{d}{dt}u_\theta(r,t,x_t)=\frac{\partial u_\theta}{\partial x_t}v(t,x_t) + \frac{\partial u_\theta}{\partial t} \nonumber 
\end{align}
and $\text{sg}$ denotes the stop-gradient operation, which is used to prevent backpropagation through the target.

\paragraph{One-step training objective.}
MeanFlow learns a parametric field $u_\theta(s,t,x_t)$ via regression, using the standard linear interpolation
between a data sample $x_0\sim p_{\text{data}}$ and a prior sample $x_1\sim p_{\text{prior}}$:
\begin{equation}
  x_t = (1-t)x_0 + t x_1, \qquad v = x_1 - x_0.
\end{equation}
Using the MeanFlow identity, the target mean flow can be written as
\begin{equation}
  u_{\text{tgt}}(s,t,x_t) = v - (t-s)\frac{d}{dt}u_\theta(s,t,x_t),
\end{equation}
where $\frac{d}{dt}u_\theta$ is the total derivative (implemented as a Jacobian-vector product), and
$u_{\text{tgt}}$ is treated as a stop-gradient target.
The MeanFlow training loss is
\begin{equation}
  \mathcal{L}_{\text{MF}}(\theta)
  = \mathbb{E}_{\substack{x_0\sim p_{\text{data}},\\\ x_1\sim p_{\text{prior}},\\ s<t}}
  \left[\left\|u_\theta(s,t,x_t) - \mathrm{sg}\big(u_{\text{tgt}}(s,t,x_t)\big)\right\|_2^2\right].
\end{equation}

\section{Background: SO(3) Space}\label{sec:bg-so3}

\subsection{Basic concepts in SO(3) space}

The special orthogonal group in three dimensions, denoted as $\SO$, is defined as
$$
\SO
:=
\{ R \in \mathbb{R}^{3\times3} : R R^\top = R^\top R = I_3,\det(R)=1\}.
$$
The constraint $R R^\top = I_3$ consists of smooth polynomial equations, which implies that $\SO$ is a smooth embedded submanifold of $\mathbb{R}^9$.

Equipped with matrix multiplication $\SO^{\times2}\ni(S_1,S_2)\mapsto S_1S_2\in \SO$, $\SO$ forms a Lie group.
The identity element is $I_3$, and the inverse of any element $R \in \SO$ is given by $R^{-1} = R^\top$.

\paragraph{Tangent space.}
Let $R \in \SO$ be fixed.
Consider a smooth curve $R(t) \in \SO$ such that
$$
R(0) = R.
$$
Since $R(t) R(t)^\top = I_3$ holds for all $t$, differentiating at $t=0$ yields
$$
\dot R(0) R^\top + R \dot R(0)^\top = 0.
$$
This implies that $R^\top \dot R(0)$ is skew-symmetric.
Therefore, the tangent space at $R$ is given by
$$
T_R(\SO)
=
\{ R \Omega : \Omega \in \mathfrak{so}(3) \},
$$
where the Lie algebra $\mathfrak{so}(3)$ is defined as
$$
\mathfrak{so}(3)
:=
\{ \Omega \in \mathbb{R}^{3\times3} : \Omega^\top = -\Omega \}.
$$
This representation provides a global linear parameterization of tangent vectors on $\SO$. We also use the standard hat/vee identification between $\mathbb{R}^3$ and $\mathfrak{so}(3)$: for $\omega=(\omega_1,\omega_2,\omega_3)\in\mathbb{R}^3$,
\begin{align}
&\omega^{\wedge}:=
\begin{bmatrix}
0 & -\omega_3 & \omega_2\\
\omega_3 & 0 & -\omega_1\\
-\omega_2 & \omega_1 & 0
\end{bmatrix}\in\mathfrak{so}(3),\nonumber\\
&(\cdot)^\vee:\mathfrak{so}(3)\to\mathbb{R}^3\ \text{is the inverse of }(\cdot)^\wedge.   \nonumber
\end{align}

\paragraph{Lie algebra and Lie bracket.}
The vector space $\mathfrak{so}(3)$ is the Lie algebra associated with the Lie group $\SO$.
The Lie bracket on $\mathfrak{so}(3)$ is defined as the matrix commutator
$$
[\Omega_1,\Omega_2]
=
\Omega_1 \Omega_2 - \Omega_2 \Omega_1,
\quad
\Omega_1,\Omega_2 \in \mathfrak{so}(3).
$$
It is straightforward to verify that this operation preserves skew-symmetry, hence $\mathfrak{so}(3)$ is closed under the Lie bracket.

\subsection{Curves and flows in SO(3)}
In Euclidean space, a curve connecting $x_0$ and $x_1$ can be defined by the ODE
\begin{align}
  \begin{cases}
    X_0 = x_0, \\
    \dot X_t = v(t,X_t).
  \end{cases}
\end{align}
and thus $X_1=\int_0^1 v(t,x_t)dt$. We now extend this construction to the Lie group $\SO$.

\paragraph{ODE-defined curves on $\SO$.}
Let $R_t:[0,1]\to\SO$ be a time-dependent curve. A natural intrinsic definition of its evolution is given by
$$
\begin{cases}
  R_0=r_0\\
  \dot R_t = R_t \Omega_t,
  \quad
  \Omega_t \in \mathfrak{so}(3).
\end{cases}
$$
This ODE guarantees that $R_t$ remains in $\SO$ for all $t$, since
$$
\frac{d}{dt}(R_t R_t^\top)
=
R_t\Omega_t R_t^\top + R_t \Omega_t^\top R_t^\top
=
0.
$$
Given an initial condition $R_0 \in \SO$, the above equation defines a smooth curve on the manifold.

\paragraph{Exponential map and geodesics.}
When the angular velocity is constant, namely $\Omega_t = \Omega$, the ODE admits a closed-form solution
$$
R_t = R_0 \exp(t \Omega),
$$
where $\exp$ denotes the matrix exponential.
Under the canonical bi-invariant Riemannian metric on $\SO$, curves of this form are geodesics.

Given two points $R_0,R_1 \in \SO$, the geodesic connecting them is given by
$$
R_t
=
R_0 \exp\big(t \log(R_0^\top R_1)\big),
\quad
t \in [0,1],
$$
where $\log$ denotes the matrix logarithm mapping $\SO$ to $\mathfrak{so}(3)$.
This curve minimizes the path length among all smooth curves on $\SO$ connecting $R_0$ and $R_1$.
\paragraph{Inner product and geodesic distance in $\SO$.}
Under the canonical bi-invariant Riemannian metric, we identify $T_R\SO=\{R\Omega:\Omega\in\mathfrak{so}(3)\}$ and define the inner product
$$
\langle R\Omega_1, R\Omega_2\rangle_R
=
\tfrac12\,\mathrm{tr}(\Omega_1^\top\Omega_2)
=
\tfrac12\,\mathrm{tr}\big((R^\top\dot R_1)^\top (R^\top\dot R_2)\big),
$$
which is invariant under left and right multiplication.
The induced geodesic distance between $R_0,R_1\in\SO$ is
$$
\mathrm{dist}(R_0,R_1)
=
\frac{1}{\sqrt{2}}\,\big\|\log(R_0^\top R_1)\big\|_F,
$$
where $\|\cdot\|_F$ denotes the Frobenius norm.

\paragraph{Integration in SO(3).}
In Euclidean space, $x_1=\int_0^1 v(t,x_t)dt+x_0$.

Due to the non-commutative group structure, the endpoint in $\SO$ cannot be written as a
simple integral. Instead, the solution at $t=1$ admits the group-valued representation
$$
R(1)=R(0)
\mathcal{T}\exp\Big(\int_0^1 \Omega_t dt\Big),
$$
where $\mathcal{T}$ denotes the time-ordering operator:

\begin{align}
&\mathcal{T}\exp\Big(\int_0^1 A(t) dt\Big)
=
I_3
+\sum_{k=1}^{\infty}
\int_{0 \le t_1 \le \cdots \le t_k \le 1}
A(t_1)\cdots A(t_k)
dt_1 \cdots dt_k,    
\end{align}

where the time-ordering operator enforces the chronological ordering of the
matrix products.

\section{Background: SE(3) Space}\label{sec:bg-se3}

\subsection{Basic concepts in the $\SE$ space}

The special Euclidean group in three dimensions is defined as
$$
\SE=\{(R,x): R\in \SO,\ x\in\mathbb{R}^3\},
$$
representing orientation-preserving rigid motions in $\mathbb{R}^3$. Algebraically, $\SE$ is the
semidirect product
$$
\SE=\SO\ltimes(\mathbb{R}^3,+),
$$
where rotations act on translations. The group operation is given by
\begin{align}
(R_1,x_1)(R_2,x_2)=(R_1R_2,\ x_1+R_1 x_2),
\label{eq:se3_semidirect}
\end{align}
and the inverse by
$$
(R,x)^{-1}=(R^\top,\ -R^\top x).
$$
As a smooth manifold, $\SE$ is six-dimensional and diffeomorphic to $\SO\times\mathbb{R}^3$.

\paragraph{Lie algebra and tangent space.}
The Lie algebra $\mathfrak{se}(3)$ consists of pairs $(\Omega,v)$ with $\Omega\in\mathfrak{so}(3)$ and
$v\in\mathbb{R}^3$, i.e.,
$$
\mathfrak{se}(3)=\{(\Omega,v):\Omega^\top=-\Omega,\ v\in\mathbb{R}^3\}.
$$
The Lie bracket is given by
$$
[(\Omega_1,v_1),(\Omega_2,v_2)]
=
([\Omega_1,\Omega_2],\ \Omega_1 v_2 - \Omega_2 v_1).
$$
For any $(R,x)\in \SE$, the tangent space is obtained by left translation:
$$
T_{(R,x)}\SE=\{(R\Omega,\ Rv):\Omega\in\mathfrak{so}(3),\ v\in\mathbb{R}^3\}.
$$

\subsection{Curves and flows in SE(3)}

A time-dependent rigid motion is represented by a curve $(R_t,x_t)\in \SE$. A natural way to define
its intrinsic evolution is through the left-trivialized velocity:
$$
\begin{cases}
  \dot R_t = R_t\Omega_t,\\
  \dot x_t = R_t v_t,
\end{cases}
\qquad
(\Omega_t,v_t)\in\mathfrak{se}(3),
$$
where $\Omega_t$ governs angular motion and $v_t$ determines translational motion in the body frame.
Note that the translational velocity is expressed in the body frame and is therefore rotated by
$R_t$: in the full semidirect-product geometry the two components are coupled.

\subsection{The decoupled product geometry used in this work}

Following the setting in Section~3 of \citet{bose2024se}, we instead work with the product manifold
$$
\SO\times\mathbb{R}^3
$$
equipped with a product Riemannian metric, under which the composition simplifies to
\begin{align}
\label{eq:dec_met}
  (R_1,x_1)(R_2,x_2)=(R_1R_2,\ x_1+x_2).
\end{align}
Comparing with \eqref{eq:se3_semidirect}, this replaces $x_1+R_1x_2$ by $x_1+x_2$, i.e.\ it drops the
action of $R_1$ on $x_2$, so that the rotational and translational coordinates evolve independently.
The left-trivialized flow correspondingly reduces to $\dot R_t=R_t\Omega_t$ and $\dot x_t=v_t$.

\begin{remark}[Scope of this section]
\label{rmk:se3-scope}
This section is included as background only. Our method does \emph{not} use the semidirect-product
structure \eqref{eq:se3_semidirect} or any property specific to it. Following
FoldFlow~\cite{bose2024se} and ReQFlow~\cite{yue2025reqflow}, we work throughout with the decoupled
product geometry \eqref{eq:dec_met}, in which rotation and translation are independent. Accordingly,
all constructions in this paper---the interpolation path, the MeanFlow identity, the $\alpha$-Flow
target, the semigroup loss, and the training objectives---are derived \emph{separately} on $\SO$ and
on $\mathbb{R}^3$, and the $\SE$ ($\SE^N$) losses are simply the sum of the two branches. The
translational velocity is likewise taken in the ambient frame rather than the body frame. Whenever we
write $\SE\cong\SO\times\mathbb{R}^3$ in the main text, it refers to this decoupled convention rather
than to a group isomorphism.
\end{remark}

\section{Background: SE(3) Flow Matching}\label{sec:bg-fm}
Our method builds on flow matching, which we briefly review here---first on a general
Riemannian manifold, and then in the decoupled $\SE$ geometry used throughout the paper.

\subsection{Riemannian flow matching}
Flow matching \citep{lipman2022flow,tong2023improving} learns a time-dependent velocity
field whose flow transports a prior density $p_1$ to the data density $p_0$. On a
Riemannian manifold $\mathcal{M}$, one fixes a conditional coupling $(z_0,z_1)\sim q$ and
connects the endpoints by the minimizing geodesic
$$
z_t=\exp_{z_0}\!\big(t\,\log_{z_0}(z_1)\big),\qquad t\in[0,1],
$$
whose time derivative is the conditional (target) velocity
$\dot z_t\in\mathcal{T}_{z_t}\mathcal{M}$. Regressing a model field $u_\theta(t,z_t)$ onto
this target,
$$
\mathbb{E}_{t,\,q(z_0,z_1)}\big[\,\|u_\theta(t,z_t)-\dot z_t\|_{g}^2\,\big],
$$
recovers at its minimizer the marginal velocity that generates the interpolating path
$p_t$. Sampling then integrates the learned field, e.g.\ backward from a prior draw
$z_1$ to a data sample $z_0$.

\subsection{SE(3) flow matching}
Following \citet{bose2024se}, we adopt the decoupled product geometry
$\SE\cong\SO\times\mathbb{R}^3$, so that the geodesic and its velocity
split into independent rotational and translational parts. For a data frame $(R_0,x_0)$
and a prior frame $(R_1,x_1)$, the conditional path is the $\SO$ geodesic paired
with the Euclidean straight line,
$$
R_t=R_0\exp\!\big(t\,\log(R_0^\top R_1)\big),\qquad
x_t=(1-t)\,x_0+t\,x_1 .
$$
Differentiating gives the closed-form conditional velocities, which are constant along
each path:
$$
\omega_t:=\log(R_0^\top R_1)^\vee\in\mathbb{R}^3,\qquad
v_t:=x_1-x_0\in\mathbb{R}^3,
$$
so that $\dot R_t=R_t\,\omega_t^{\wedge}$ and $\dot x_t=v_t$. The model predicts the body angular
and translational velocities $A_\theta(t,R_t,x_t)$ and $v_\theta(t,x_t)$---equivalently the
instantaneous ($s{=}t$) evaluation of the two-time head,
$A_\theta(t,R_t,x_t)=A^{\mathrm{avg}}_\theta(t,t,R_t,x_t)$---and the $\SE$
flow-matching objective regresses them onto these targets,
\begin{align}
\mathcal{L}_{\mathrm{FM}}
&=\mathbb{E}_{t,\,q(z_0,z_1)}\sum_{i=1}^{N}\Big[\big\|A_{\theta,i}(t,R_{t,i},x_{t,i})-\omega_{t,i}\big\|_2^2 +\big\|v_{\theta,i}(t,x_{t,i})-v_{t,i}\big\|_2^2\Big],
\label{eq:fw-loss}
\end{align}
summed over the $N$ residues. Equation~\eqref{eq:fw-loss} is the data-anchored objective to
which our MeanFlow and $\alpha$-Flow targets reduce in the appropriate limit---for
instance at $\alpha=1$, where $A^{\mathrm{avg}}_{\mathrm{tgt}}=\omega_t$---and it supplies the
boundary condition that keeps the consistency objective from collapsing.

\subsection{Mini-batch optimal-transport coupling}
\label{sec:bg-ot}
The flow-matching objective \eqref{eq:fw-loss} is defined for any coupling
$q(z_0,z_1)$ of the data and prior marginals. The simplest choice is the
independent coupling $q=p_0\otimes p_1$, under which trajectories from
different pairs cross and the marginal velocity field is far from constant
along each path. Following \citet{tong2023improving,pooladian2023multisample} and
FoldFlow~\cite{bose2024se}, we instead re-pair each batch by discrete optimal
transport. Given $M$ data frames $\{z_0^k\}$ and $M$ prior frames $\{z_1^k\}$
with uniform weights, the Optimal Transport problem \cite{villani2009optimal,flamary2021pot}
\begin{align}
&\min_{\Pi\in\mathcal{U}_M}\ \sum_{k,l}\Pi_{kl}\,c(z_0^k,z_1^l),
\nonumber\\
&\mathcal{U}_M=\{\Pi\ge 0:\Pi\mathbf{1}=\Pi^\top\mathbf{1}=\tfrac1M\mathbf{1}\},
\nonumber
\end{align}
attains its optimum at a vertex of $\mathcal{U}_M$, i.e.\ (Birkhoff) at a
permutation $\pi^\star\in\mathcal{S}_M$, so the plan reduces to a linear
assignment solvable exactly in $O(M^3)$ by the Hungarian algorithm. On the
decoupled product geometry \eqref{eq:dec_met} we take the squared $\SE^N$
geodesic cost,
\begin{align}
c\big((R_0,x_0),(R_1,x_1)\big)
&=\sum_{i=1}^{N}\Big[\lambda_R\,d_{\SO}^2(R_0^{i},R_1^{i})
+\lambda_x\,\|x_0^{i}-x_1^{i}\|_2^2\Big], \nonumber\\
 d_{\SO}(R,R')&=\tfrac{1}{\sqrt2}\|\log(R^\top R')\|_F,
\nonumber
\end{align}
with $(\lambda_R,\lambda_x)=(0.5,0.5)$. Re-pairing shortens the average
transport distance and straightens the induced probability path; the resulting
velocity field is closer to constant along each trajectory, which is exactly
the regime in which few-step average-velocity sampling is accurate. This is
also why the controlled benchmark of Appendix~\ref{sec:toy} disables OT
(Section~\ref{sec:toy-controls}): with OT the instantaneous- and
average-velocity parameterizations nearly coincide, and the comparison would no
longer be informative.

\section{Related Work}\label{sec:related}
\subsection{Diffusion and early flow-matching baselines}
\label{sec:related-classic}
The first wave of $\SE^N$ backbone generators are diffusion models over residue frames.
FrameDiff~\cite{yim2023se} formulates SE(3)-invariant score-based diffusion on multiple
frames and generates designable monomers up to $500$ residues without a pretrained
structure-prediction network ($17.4$M parameters). Genie~\cite{genie} instead diffuses
oriented $\mathrm{C}_\alpha$ residue clouds with SE(3)-equivariant triangle updates; it is
parameter-light ($4.1$M) and attains high diversity and novelty, but its designability is
comparatively low and degrades sharply as the number of sampling steps is reduced.
FrameFlow~\cite{frameflow} keeps the frame representation but replaces diffusion with
$\SE$ flow matching, reporting roughly $2\times$ higher designability than FrameDiff at
about $5\times$ fewer sampling steps, and a $\sim\!23\times$ sampling speed-up over Genie
at markedly higher designability. Because FrameFlow dominates both earlier models on the
designability--efficiency axis that is our focus, we take it as the representative of this
pre-2024 generation in the main-text comparison (Table~\ref{tab:scope_comparison}) and do
not separately tabulate FrameDiff or Genie. The flow-matching methods most closely related
to ours---FoldFlow, ReQFlow, and Riemannian MeanFlow---are discussed next.

\subsection{FoldFlow \cite{bose2024se}}
\citet{bose2024se} introduces $\SE$ stochastic flow matching for protein backbone generation.
Their training objective is a flow-matching regression that fits a time-dependent vector field $(v^R_\theta,v^x_\theta)$ to the drift of a chosen probability path (an $\SE$ bridge) between data $(R_0,x_0)$ and noise $(R_1,x_1)$:
\begin{align}
&\mathcal{L}_{\mathrm{FM}}(\theta)
=\mathbb{E}_{t,(R_0,x_0),(R_1,x_1)}\left[\|v_\theta^R(R_t,t)- v^R(R_t,t\mid R_0,R_1)\|^2_{\SO}+\|v_\theta^x(x_t,t)-v^x_t(x_t,t|x_0,x_1)\|^2\right]\nonumber\\
&\text{where }v^R_t(R_t|R_0,R_1):=\dot{R}_t\mid_{R_0,R_1}:=R_t\Omega_t,\nonumber\\
&\Omega_t=\log(R_0^\top R_1),\quad R_t=R_0e^{t\Omega_t},\nonumber\\ &v^{x}_t(x_t|x_0,x_1)=x_1-x_0,\quad x_t=(1-t)x_0+tx_1.\nonumber
\end{align}
In this work, the authors introduce three variants:
\begin{itemize}
\item In the base setting, $q$ is the independent coupling between $(p_0=p_{\text{data}},p_1=p_{\text{prior}})$.

\item In addition, the authors introduce the (mini-batch) optimal transport coupling obtained by drawing batch-size samples $p_0^B\sim \text{i.i.d. } p_{0},p_1^B\sim\text{i.i.d. } p_{1}$, and the method is denoted as FoldFlow-OT.
\item Furthermore, they introduce perturbation in the rotation interpolation $R_t$ by the IGSO-distribution:
$$\tilde{R}_t\mid_{R_0,R_1}\sim \mathcal{IG}_{SO(3)}(R_t,\gamma^2(t)t(1-t)).$$
where $\gamma^2(t)>0$ is a predefined function.
\end{itemize}

\textbf{Adaptation to protein backbone space: $\SE^N$}
As discussed in the main text, a protein backbone with $N$ residues can be represented as a product space $\SE^N$, i.e., $x=(g_1,\dots,g_N)$ with $g_i=(R_i,x_i)\in \SE$.
The $\SE$ flow-matching loss then extends by summing (or averaging) the per-residue $\SE$ losses, which is equivalent to \emph{concatenating} all residue-wise tangent vectors into a single $6N$-dimensional stacked vector:
\begin{align}
&\mathcal{L}_{\mathrm{FM}}^{\SE^N}(\theta)
=\mathbb{E}\frac{1}{N}\Big[\,\sum_{i=1}^N \|v^R_{\theta,i}(R_{t,i},t)- v^R_{t,i}(R_{t,i}\mid R_{0,i},R_{1,i})\|^2_{\SO}
+\sum_{i=1}^N \|v^x_{\theta,i}(x_{t,i},t)- v^x_{t,i}(x_{t,i}\mid x_{0,i},x_{1,i})\|^2\,\Big]\nonumber
\end{align}

In addition, the auxiliary loss (see section \ref{sec:auxiliary_loss}) is included in the final training loss.

\subsection{FoldFlow-2 \cite{huguet2024sequence}}

\citet{huguet2024sequence} extends FoldFlow to a sequence-conditioned 
generative model. The underlying generative task remains SE(3)$^N$ flow 
matching as in FoldFlow, and the loss structure is identical. The key 
novelty is that the vector field is now conditioned on a (possibly masked) 
amino acid sequence $\bar{a}$:

\begin{align}
&\mathcal{L}_{\mathrm{FM}}^{\mathrm{FF2}}(\theta)
= \mathbb{E}_{t,\,\pi(x_0,x_1),\,\bar{a}}
\frac{1}{N}\Big[
\sum_{i=1}^N \|v^R_{\theta,i}(R_{t,i},t|\bar{a})
- v^R_{t,i}(R_{t,i}\mid R_{0,i},R_{1,i},\bar{a})\|^2_{\SO} +\sum_{i=1}^N \|v^x_{\theta,i}(x_{t,i},t|\bar{a})
- v^x_{t,i}(x_{t,i}\mid x_{0,i},x_{1,i},\bar{a})\|^2
\Big],\nonumber
\end{align}

where $\pi(x_0,x_1)$ is the minibatch OT coupling (as in FoldFlow-OT), 
and $\bar{a}=a\odot m$ is the masked sequence with mask 
$m\sim\mathrm{Bern}(0.5)$ applied uniformly across all residues. 
This stochastic masking enables a single model to handle both 
conditional and unconditional generation:
\begin{itemize}
    \item $\bar{a}=[\varnothing]^N$ (fully masked, probability 0.5): 
    unconditional backbone generation, equivalent to FoldFlow-OT.
    \item $\bar{a}=a$ (unmasked, probability 0.5): 
    sequence-conditioned generation, i.e.\ protein folding.
    \item $\bar{a}=a\odot m$ (partially masked): 
    structure in-painting and motif scaffolding.
\end{itemize}

    
    

\textbf{Reinforced Fine-Tuning (ReFT).}
FoldFlow-2 further introduces a fine-tuning objective to align 
generations towards an auxiliary reward $r_{\mathrm{aux}}$. 
Given a preferential dataset $\mathcal{D}_{\mathrm{pref}}$ 
filtered by $r_{\mathrm{aux}}$, the ReFT objective is:
\begin{align}
    \max_{p_\theta}\;\mathcal{L}_{\mathrm{ReFT}}(\theta)
    = \mathbb{E}_{(x,a)\sim\mathcal{D}_{\mathrm{pref}}}
    \left[r_{\mathrm{aux}}(x)\log p_\theta(x\mid a)\right].
\end{align}
This is applied, for example, to improve secondary structure diversity 
by upweighting samples rich in $\beta$-sheets and coils.

\subsection{ReQFlow \cite{yue2025reqflow}}

\citet{yue2025reqflow} propose a quaternion-based flow model for protein
backbone generation. Similar to FrameFlow and FoldFlow, the protein
backbone is represented as a collection of residue-wise rigid frames in
$\SE^N$. However, instead of representing rotations by matrices
in $\SO$, ReQFlow parameterizes each residue frame as
$$
g_i = (x_i, q_i), \qquad x_i \in \mathbb{R}^3,\;\; q_i \in \mathbb{S}^3,
$$
where $x_i$ is the local translation and $q_i$ is a unit quaternion 
representing the 3D rotation. 

In particular, given a rotation matrix $R\in \SO$, let $\omega=\phi u=(\log(R))^\vee\in\mathbb{R}^3$ be its axis-angle vector, where $\phi=\|\omega\|,u=\frac{\omega}{\|\omega\|}$. The corresponding unit quaternion is given by
\begin{align}
&q=\exp(\frac{1}{2}\omega)=[\cos\frac{\phi}{2},\sin \frac{\phi}{2}u^\top]^\top \in \mathbb{S}^3.  \nonumber
\end{align}
Conversely, given a unit quaternion $q=(w,x,y,z)^\top \in \mathbb{S}^3$, the corresponding rotation matrix is
$$
R=\exp((2\log(q))^\wedge).
$$

Moreover, given 3D rotations $R_1,R_2$ with corresponding quaternion representations $q_1:=(s_1,u_1),q_2:=(s_2,u_2)$, their group action (matrix multiplication) can be expressed by quaternion multiplication:  
$$
q_1\otimes q_2 = \begin{bmatrix}
s_1s_2-u_1^\top u_2 \\ 
s_1u_2+s_2u_1+u_1\times u_2 
\end{bmatrix}.
$$

This quaternion algebra leads to a more numerically stable and efficient
treatment of rotations, especially when the rotation angle is very small
or close to $\pi$.

\paragraph{Quaternion Flow Matching (QFlow).}
In ReQFlow, the authors denote by $\mathcal{T}_0$ the prior distribution
 $$
\mathcal{N}(0,I_3)\times \mathcal{IG}_{SO(3)},
$$
where, with a slight abuse of notation, $\mathcal{IG}_{SO(3)}$ denotes the
isotropic Gaussian distribution in rotation space under the quaternion
representation. The target distribution $\mathcal{T}_1$ corresponds to the
real protein data distribution.

ReQFlow parameterizes the model as
 $$
T_{\theta,1}(x_t,q_t)=(x_{\theta,1},q_{\theta,1}),
$$
where $x_{\theta,1}$ and $q_{\theta,1}$ denote the predicted translation
and rotation at terminal time $t=1$, conditioned on the current state
$(x_t,q_t)$.

At time $t\in[0,1]$, the translation path follows the standard linear
interpolation
 $$
x_t=(1-t)x_0+tx_1,
$$
with translation velocity
 $$
v_t^x=x_1-x_0=\frac{x_1-x_t}{1-t},
\qquad
v_{\theta,t}^x=\frac{x_{\theta,1}-x_t}{1-t}.
$$

For the rotational component, ReQFlow adopts quaternion geodesic
interpolation:
 $$
q_t=q_0\otimes \exp\!\bigl(t\log(q_0^{-1}\otimes q_1)\bigr).
$$
The corresponding angular velocity and its model-based estimate are
\begin{align}
&v_t^q
=
2\log(q_0^{-1}\otimes q_1)
=
\frac{2\log(q_t^{-1}\otimes q_1)}{1-t},\nonumber\\
&v_{\theta,t}^q=\frac{2 \log (q_t^{-1}\otimes q_{\theta,1})}{1-t}.\nonumber 
\end{align}

The inference dynamics are then approximated by
 $$
\begin{cases}
x_{t+\Delta t}=x_t+\Delta t\, v_{\theta,t}^x,\\
q_{t+\Delta t}=q_t\otimes \exp\!\left(\frac{1}{2}\Delta t\, v_{\theta,t}^q\right).
\end{cases}
$$
Accordingly, the flow-matching loss is defined as
\begin{align}
\mathcal{L}_{\mathrm{QFlow}}
&=\mathbb{E}_{t,(x_0,q_0),(x_1,q_1)}\bigl[\|v^{x}_t-v^{x}_{\theta,t}\|^2\bigr]+\mathbb{E}_{t,(x_0,q_0),(x_1,q_1)}\bigl[\|v^{q}_t-v^{q}_{\theta,t}\|^2\bigr].\nonumber
\end{align}
In addition, the auxiliary loss (see section \ref{sec:auxiliary_loss}) is incorporated into the training process.

\subsection{Riemannian Gaussian Variational Flow Matching (RG-VFM) \cite{zaghen2025riemannian}}

\citet{zaghen2025riemannian} approach manifold generation from the \emph{variational}
rather than the velocity-matching side. Building on Variational Flow Matching, they
replace the instantaneous-velocity regression used by CFM/RFM---and, in our setting, by
FrameFlow, FoldFlow, and ReQFlow---with an \emph{endpoint} objective: a network predicts a
terminal mean $\mu_\theta(x_t)\in\mathcal{M}$, and the posterior $q_\theta(x_1\mid x_t)$ is
modeled as a Riemannian Gaussian
\[
\mathcal{N}_{\mathrm{Riem}}\big(x_1\mid \mu_\theta(x_t),\sigma\big)
\;\propto\;
\exp\!\Big(-\tfrac{\mathrm{dist}_g\big(x_1,\mu_\theta(x_t)\big)^2}{2\sigma^2}\Big).
\]
On a homogeneous manifold with closed-form geodesics, the normalizing constant is
independent of $\mu$, and the objective collapses to a squared geodesic (endpoint)
distance:
\begin{align}
\mathcal{L}_{\mathrm{RG\text{-}VFM}}(\theta)
&=\mathbb{E}_{t,x_1,x_t}\big[\|\log_{x_1}(\mu_\theta(x_t))\|_g^2\big]=\mathbb{E}_{t,x_1,x_t}\big[\mathrm{dist}_g\big(x_1,\mu_\theta(x_t)\big)^2\big],\nonumber
\end{align}
which recovers the Euclidean VFM/MSE loss $\|\mu_\theta(x_t)-x_1\|^2$ when
$\mathcal{M}=\mathbb{R}^d$. Unlike vanilla RFM, whose vector field lives in $T_{x_t}\mathcal{M}$
and requires $\mathrm{supp}(p_0)$ to lie on $\mathcal{M}$, the variational objective compares
endpoints in the single tangent space $T_{x_1}\mathcal{M}$ and only needs the local geometry
around $p_1$.

\textbf{Adaptation to the backbone setting (``variational ReQFlow'').}
Instantiating this objective in ReQFlow's frame representation $g_i=(x_i,q_i)\in
\mathbb{R}^3\times\mathbb{S}^3$ yields an endpoint-matching counterpart of QFlow. Rather than
regressing the translation/quaternion velocities as in $\mathcal{L}_{\mathrm{QFlow}}$, the
model predicts the terminal frame $(x_{\theta,1},q_{\theta,1})$ and minimizes the endpoint
geodesic distance directly:
\begin{align}
\mathcal{L}_{\mathrm{v\text{-}ReQFlow}}(\theta)
&=\mathbb{E}_{t}\Big[\,\sum_{i=1}^N\|x_{1,i}-x_{\theta,1,i}\|^2
+\lambda\sum_{i=1}^N \mathrm{dist}_{\SO}\big(q_{1,i},q_{\theta,1,i}\big)^2\,\Big], \nonumber\\
&\quad \mathrm{dist}_{\SO}(q_1,q_{\theta,1})=2\big\|\log(q_{\theta,1}^{-1}\otimes q_1)\big\|,\nonumber
\end{align}
i.e.\ the flow-matching term is replaced by a squared endpoint distance on the 
product space $\SE^N=(\SO \times\mathbb{R}^3)^N$.

\textbf{Relation to our work.}
At the loss level, RG-VFM can be read as a variant of ReQFlow in which the velocity-matching
FM objective is replaced by an endpoint geodesic distance: the network still predicts a
terminal mean $\mu_\theta(x_t)$ (an $x_1$-prediction parameterization), so sampling remains an
ODE integration and inherits the same limited few-step generation behavior as standard flow
matching. Our method is instead a MeanFlow-type model that directly learns the
average-velocity flow map $\Phi_{s,t}$ via the time-ordered exponential
(Eq~\ref{eq:avg_omega}), which is what enables few-step generation. The two are
nonetheless complementary rather than opposed: in the second phase of our training
(Section~\ref{sec:training-details}), we combine this endpoint geodesic loss with the MeanFlow
objective, using the former to stabilize the JVP (total-derivative) term in the MeanFlow loss. Finally, RG-VFM is validated only on a
synthetic spherical dataset and releases no $\SE^N$ protein checkpoint, so we do not include
it as a standalone baseline (see below).

\subsection{Riemannian MeanFlow (RMF) \cite{woo2026riemannian}}\label{sec:rmf}

Concurrent with our work, \citet{woo2026riemannian} generalize MeanFlow from Euclidean space to
Riemannian manifolds, with applications to scientific generative modeling
on manifolds such as the simplex and $\SE^N$. Instead of learning
an instantaneous velocity field and numerically integrating it during
sampling, RMF directly learns the flow map
\[
\Phi_{s,t}:\mathcal{M}\to\mathcal{M},
\]
which transports a point $x_s\sim p_s$ at time $s$ to its corresponding
point $x_t\sim p_t$ at time $t$ along the same integral curve.

The key geometric quantity in RMF is the \emph{average velocity}, defined by
\begin{equation}
    u_{s,t}(x_s)=
\begin{cases}
\dfrac{1}{t-s}\log_{x_s}(x_t), & t\neq s,\\[6pt]
v_s(x_s), & t=s,
\end{cases}
\label{eq:rmf_av}
\end{equation}

where $\log_{x_s}(x_t)\in T_{x_s}\mathcal{M}$ denotes the Riemannian
logarithmic map. Geometrically, $u_{s,t}(x_s)$ is the constant tangent
velocity that transports $x_s$ to $x_t$ along a geodesic over time
interval $t-s$.
By differentiating both sides with respect to $t$, we obtain:
\begin{align}
u_{s,t}(x_s)=d(\log_{x_s})_{x_t}[v_t]-(t-s)\partial_t u_{s,t}(x_s). \label{eq:rmf_loss}    
\end{align}
and it induces the training loss:
$$
\mathcal{L}_{\mathrm{RMF}}(\theta)=\mathbb{E}_{x_s,s,t}[\|u_{s,t}^\theta(x_s)-\text{sg}(\hat{u}_{\mathrm{tgt}})\|^2], 
$$
where $\hat{u}_{\mathrm{tgt}}=(t-s)D_s u_{s,t}^\theta(x_s)-\nabla_{v_s}^1 \log_{x_s} \Phi^\theta(x_s)$
and $\Phi_{s,t}^\theta(x_s)=\exp((t-s)u_{s,t}^\theta(x_s))$.

They further include a cycle-consistency regularizer
$$\mathcal{L}_{\text{cyc}}(\theta)=\mathbb{E}_{x_t,s,t}\big[d_g(\phi_{s,t}^\theta(\phi_{t,s}^\theta(x_t)),x_t)^2\big].$$

\paragraph{Training objective in the protein experiments.}
In the $\SE^N$ protein setting, RMF is trained with a flow-matching loss together with a
semigroup (flow-map composition) consistency loss. The semigroup term enforces
$\Phi_{r,t}=\Phi_{s,t}\circ\Phi_{r,s}$, but measures the residual as an \emph{endpoint
geodesic distance}---the $\SE$ distance between the one-step map $\Phi_{r,t}(x_r)$ and the
composed two-step map $\Phi_{s,t}(\Phi_{r,s}(x_r))$---rather than regressing
log-displacements in the Lie algebra, as in our finite semigroup loss
(Appendix~\ref{sec:semigroup}). The two agree at the minimizer, but differ in
conditioning: the endpoint-distance form couples the rotation and translation branches
through a single $\SE$ metric, whereas our log-displacement form keeps them separated and,
on the rotation branch, makes the BCH/right-Jacobian structure explicit. We compare both
average-velocity formulations on a controlled $\SO^2$ benchmark in
Appendix~\ref{sec:toy}.

\subsection{Riemannian MeanFlow via Parallel Transport (RMF-PT)}
\label{sec:rmf-pt}
Also concurrent with our work, \citet{zhong2026riemannian} extend MeanFlow to general
Riemannian manifolds by defining the average velocity as a parallel-transport integral:
the instantaneous velocities along the trajectory are transported to a common tangent
space before being averaged (Eq.~5 therein). To distinguish it from the Riemannian
MeanFlow of \citet{woo2026riemannian} discussed above, we refer to this method as
\textbf{RMF-PT}. Our approach differs in three respects.

\textbf{(i) Definition.}
RMF-PT defines the mean velocity through a path-dependent parallel-transport integral,
which requires the full trajectory $\{x_\tau\}_{\tau\in[s,t]}$.
We instead define it via the time-ordered exponential \eqref{eq:avg_omega}, which depends
only on the endpoints $R_s$ and $R_t$ and reduces to the logarithmic map
$\Omega^{\mathrm{avg}}=\tfrac{1}{t-s}\log(R_s^\top R_t)$.
The two definitions coincide when the velocity field is constant along the trajectory
(the geodesic case); for general fields they differ by BCH correction terms arising from
the non-commutativity of $\SO$.

\textbf{(ii) Identity and loss.}
Differentiating the two definitions yields structurally different identities. That of
RMF-PT involves the covariant derivative $\nabla_{\dot\gamma}u$ with Christoffel
corrections (Appendix~A of \citealt{zhong2026riemannian}), which their implementation
drops via a log-map approximation. Ours involves the \emph{exact} right Jacobian $J$ of
$\SO$ (Proposition~\ref{pro:Omega_identity}), which admits a closed form and requires no
approximation.

\textbf{(iii) Application.}
RMF-PT targets general manifolds and is evaluated on synthetic data (spheres, tori, and
$\SO$ rotations). Our method is built for $\SE^N$ protein backbone generation: we exploit
the decoupled product geometry $\SO\times\mathbb{R}^3$ (Appendix~\ref{sec:bg-se3}) to
obtain separate, simulation-free objectives for rotations and translations, and validate
them on \emph{de novo} backbone design.


\subsection{Baseline selection}
Since FoldFlow/FoldFlow2 and v-QFlow/v-ReQFlow do not have public checkpoints for the SCOPe
dataset, these methods are not included in the experiments. In addition, FoldFlow and QFlow are
essentially the same in nature---both are flow-matching models---and differ only in the
mathematical representation of rotations (rotation matrices vs.\ quaternions) and in the
backbone architecture. Since QFlow outperforms FoldFlow on the PDB dataset, we consider QFlow
already sufficiently representative of this family. v-QFlow, in turn, can be viewed as a
variant of QFlow whose performance largely matches QFlow's---strong at a moderate number of
sampling steps (e.g.\ $100/500$ steps) but degrading as the step count is reduced. We
therefore select QFlow alone as the representative method.

\section{MeanFlow on $\SE$: Derivation, Theoretical Properties, and $N$-component Implementation}
\subsection{Model and loss function}\label{sec:SE3MF-details}
We first introduce some fundamental results on the Riemannian manifold $\SO$: 

\begin{remark}
\begin{itemize}
\item \textbf{The hat map does not contribute to the derivative.}
Let $A:[0,1]\to\mathbb{R}^3$ be a smooth curve and define
$\Omega(t)=A(t)^{\wedge}$, where $(\cdot)^{\wedge}:\mathbb{R}^3\to\mathfrak{so}(3)$ is the (linear) hat operator, and $(\cdot)^{\vee}:\mathfrak{so}(3)\to\mathbb{R}^3$ is its inverse.
Since $(\cdot)^{\wedge}$ is linear and independent of $t$, the chain rule gives
$$
    \frac{d}{dt}\Omega(t)=\Big(\frac{dA}{dt}\Big)^{\wedge}\in\mathbb{R}^{3\times 3}.
$$

\item \textbf{Fr\'echet derivative of the matrix exponential.}
Let $R(t):[0,1]\to\mathbb{R}^{3\times 3}$ be a smooth matrix curve and define
$F(t)=\exp(R(t))$. By the chain rule on Banach spaces,
$$
    \dot{F}(t)=d(\exp)_{R(t)}[\dot{R}(t)],
$$
where $d(\exp)_R:\mathbb{R}^{3\times 3}\to\mathbb{R}^{3\times 3}$ denotes the
\emph{Fr\'echet derivative of the matrix exponential at $R$} in the direction
$H=\dot{R}(t)$. The derivative admits the integral (Wilcox) representation:
$$
    d(\exp)_R(H)=\int_0^1 e^{(1-\alpha)R}  H  e^{\alpha R} d\alpha.
$$
Equivalently, the derivative of the map $R\mapsto \exp(R)$ is the linear operator
$d(\exp)_R(\cdot)$ defined for all $H\in\mathbb{R}^{3\times 3}$.

\item \textbf{Derivative with respect to a point on a manifold.}
Consider a smooth curve $t\mapsto R(t)$ on the manifold $\SO$ and a smooth function $f:\SO\to\mathbb{R}$. Let $F(t)=f(R(t))$. By the chain rule, we have
$$
    \frac{d}{dt}F(t)
    = df_{R(t)}[\dot{R}(t)]
    = \langle \nabla_R^\mathcal{M} f(R(t)), \dot{R}(t) \rangle,
$$
where $df_{R}$ denotes the Fr\'echet differential of $f$ at $R$, and $\nabla_R^\mathcal{M} f(R)$ denotes the Riemannian gradient at $R$. Equivalently, the differential
$df_R$ is a linear operator defined on the tangent space
$\mathcal{T}_R\SO$ for every $R\in \SO$.
\item \textbf{On $\SO$, the Riemannian gradient coincides with the Euclidean gradient.}

On the Lie group $\SO$, we use the canonical inner product
$\langle\cdot,\cdot\rangle_R:\mathcal{T}_R(\SO)^{\otimes 2}\to\mathbb{R}$ defined by
$$
\langle A,B\rangle = \frac{1}{2} \mathrm{Tr}(A^\top B).
$$
Then, for any smooth function $f:\SO\to\mathbb{R}$ and its smooth extension
$\bar{f}:\mathbb{R}^{3\times 3}\to\mathbb{R}$, we have
$$
\nabla^{\mathcal{M}}_R f = \nabla_R \bar{f},\qquad
\forall R\in \SO\subset \mathbb{R}^{3\times 3}.
$$

\end{itemize}
\end{remark}

With these preliminaries in place, we now define the average angular velocity on $\SO$ and derive the corresponding MeanFlow identity. 

We define the \textbf{average velocity} as
\begin{align}
 \exp(\underbrace{(t-s)\Omega^{\text{avg}}(s,t,R_t,x_t)}_{\Omega^{s\to t}}):=\mathcal{T}\exp\Big(\int_s^t\Omega(\tau,R_\tau,x_\tau)\,d\tau\Big)\label{eq:avg_omega}
\end{align}
\begin{remark}
$\SO$ is not commutative. Thus, in general,
$\mathcal{T}\exp\big(\int_s^t \Omega(\tau,R_\tau,x_\tau)\,d\tau\big)\neq \exp\big(\int_s^t \Omega(\tau,R_\tau,x_\tau)\,d\tau\big)$ unless $\Omega_\tau$ is constant.
Therefore, we cannot directly define $\Omega^{\text{avg}}$ via $\int_s^t \Omega(\tau,R_\tau,x_\tau)\,d\tau$.
\end{remark}
Note that since each $\Omega(\tau,R_\tau,x_\tau)\in \mathfrak{so}(3)$, we have $\Omega^{\text{avg}}(s,t,R_t,x_t)\in \mathfrak{so}(3)$.  

In the extreme case, given $s=0,t=1$, the above mean velocity recovers $R_0$ from $R_1$:
$$\begin{cases}
R_1=R_0\exp(\Omega^{\text{avg}}(0,1,R_1,x_1)) &\text{Forward}\\
R_0=R_1\exp(-\Omega^{\text{avg}}(0,1,R_1,x_1))&\text{Backward}
\end{cases}
$$
We apply this convention to the ground truth as well:
$$
\Omega^{\text{avg}}(s,t,R_t,x_t)=A(s,t,R_t,x_t)^{\wedge}.
$$



 
\begin{remark}
With this model, it is clearer to define the mean velocity via \eqref{eq:avg_omega}, since $\int_s^t\Omega(\tau,R_\tau)\,d\tau$ is identified with an element of $\mathbb{R}^3$, and the above definition is a natural average in Euclidean space.
\end{remark}

With this definition, we can now derive a differential identity relating the average angular velocity to the instantaneous velocity, which will form the basis of our training objective.
\begin{proposition}[Derivative of the averaged angular velocity]\label{pro:Omega_identity}
Under the assumption $s,t$ are independent and using the above notations, set
$A^{s\to t}=(t-s)A^{\text{avg}}(s,t,R_t,x_t)$. 
Taking the derivative of both sides of \eqref{eq:avg_omega} with respect to $t$ yields
\begin{align}
\begin{cases}
\text{L.H.S.}&=R_s^\top R_t \left(J(A^{s\to t})\frac{d}{dt}A^{s\to t}\right)^{\wedge}\\
\text{R.H.S.}&=R_s^\top R_t\Omega(t,R_t,x_t)
\end{cases}.\label{eq:omega_avg_deriv}
\end{align}
Since $R_s^\top R_t$ is invertible, equating the two sides and applying $(\cdot)^\vee$ gives
\begin{align}
J(A^{s\to t})\,\tfrac{d}{dt}A^{s\to t}=\omega_t
\Longleftrightarrow
\tfrac{d}{dt}A^{s\to t}=J^{-1}(A^{s\to t})\,\omega_t ,
\label{eq:omega_identity_two_forms}
\end{align}
the two forms underlying \eqref{eq:mf-se3-loss} and \eqref{eq:mf-se3-loss2}
respectively; the equivalence uses invertibility of $J$
(Remark~\ref{rmk:jacobian-inverse}). Expanding
$\tfrac{d}{dt}A^{s\to t}=A^{\text{avg}}+(t-s)\tfrac{d}{dt}A^{\text{avg}}$ by
\eqref{eq:total_dOmega_dt} recovers \eqref{eq:omgea_identity_main}.
\end{proposition}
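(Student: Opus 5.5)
The plan is to differentiate both sides of \eqref{eq:avg_omega} in $t$ with $s$ held fixed, and to reduce each side to the form $R_s^\top R_t(\cdot)^\wedge$.

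\textbf{Right-hand side.} By construction, $\Phi(s,t):=\mathcal{T}\exp\big(\int_s^t\Omega_\tau\,d\tau\big)$ solves the left-trivialized ODE $\partial_t\Phi(s,t)=\Phi(s,t)\,\Omega(t,R_t,x_t)$ with $\Phi(s,s)=I$. One can see this directly from the Dyson series: differentiate the outermost integral in the upper limit $t$, so the latest factor $\Omega_t$ appears on the right. Since $R_\tau$ solves the same ODE \eqref{eq:so3_ode}, uniqueness gives $\Phi(s,t)=R_s^\top R_t$. Therefore the right-hand side derivative is $R_s^\top R_t\,\Omega(t,R_t,x_t)$, which is the claimed R.H.S.

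\textbf{Left-hand side.} Write $A(t):=A^{s\to t}$, a smooth curve in $\mathbb{R}^3$; its total derivative $\tfrac{d}{dt}A$ contains the chain-rule terms through $(R_t,x_t)$, as in \eqref{eq:dA/dt}. By the Remark above, $\tfrac{d}{dt}A^\wedge=(\tfrac{d}{dt}A)^\wedge$. The Wilcox formula then gives
\[
\tfrac{d}{dt}\exp(A^\wedge)=\int_0^1 e^{(1-\alpha)A^\wedge}(\dot A)^\wedge e^{\alpha A^\wedge}\,d\alpha .
\]
Factor out $\exp(A^\wedge)$ on the left to get
\[
\exp(A^\wedge)\int_0^1 e^{-\alpha A^\wedge}(\dot A)^\wedge e^{\alpha A^\wedge}\,d\alpha .
\]
Next use the identity $e^{-\alpha A^\wedge}y^\wedge e^{\alpha A^\wedge}=(e^{-\alpha A^\wedge}y)^\wedge$, which holds because conjugation by a rotation acts on $\mathfrak{so}(3)$ as that rotation acts on $\mathbb{R}^3$. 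The integral then becomes $\big(\int_0^1 e^{-\alpha A^\wedge}d\alpha\,\dot A\big)^\wedge$.

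\textbf{Main step: Rodrigues integral.} It remains to compute $\int_0^1\exp(-\alpha A^\wedge)\,d\alpha$ in closed form. Expand with Rodrigues' formula
\[
e^{-\alpha A^\wedge}=I-\sin(\alpha\theta)\,K+(1-\cos(\alpha\theta))\,K^2,\qquad K=A^\wedge/\theta,\ \theta=\|A\|.
\]
Integrate term by term in $\alpha$: the $K$ term gives $(1-\cos\theta)/\theta$ and the $K^2$ term gives $1-\sin\theta/\theta$. Rewriting in terms of $A^\wedge$ and $(A^\wedge)^2$ yields exactly $J(A)$ as in \eqref{eq:right_jacobian}, with signs consistent with the right Jacobian. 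This sign and normalization check is the step most likely to go wrong, so verify it against the small-$\theta$ expansion $J\approx I-\tfrac12A^\wedge$. Also check the $\theta\to0$ limit by Taylor expansion. Together with $\exp(A^\wedge)=R_s^\top R_t$ from the right-hand side step, this gives L.H.S.$=R_s^\top R_t\big(J(A)\dot A\big)^\wedge$.

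\textbf{Conclusion.} Cancel the invertible factor $R_s^\top R_t$ and apply $(\cdot)^\vee$ to obtain $J(A)\dot A=\omega_t$. The inverse form follows from invertibility of $J$ on $\|A\|<2\pi$, in particular on the principal branch. Finally, $\dot A=A^{\text{avg}}+(t-s)\tfrac{d}{dt}A^{\text{avg}}$ by the product rule, with $s$ independent of $t$, which recovers \eqref{eq:omgea_identity_main}.
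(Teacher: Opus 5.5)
Your proposal is correct and follows the paper's proof essentially step for step: the derivative rule for the time-ordered exponential on the right, the Wilcox formula plus the conjugation identity $Q y^\wedge Q^\top=(Qy)^\wedge$ on the left, and identification of $\int_0^1 e^{-\alpha A^\wedge}\,d\alpha$ with $J(A)$. Your explicit Rodrigues integration (which does reproduce \eqref{eq:right_jacobian}) and your ODE-uniqueness justification of $\mathcal{T}\exp\big(\int_s^t\Omega\big)=R_s^\top R_t$ fill in details that the paper only asserts.
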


\begin{proof}
Differentiate both sides of \eqref{eq:avg_omega} with respect to $t$.

For the right-hand side, the standard derivative rule for the time-ordered exponential gives
\begin{align}
&\frac{d}{dt}\,\mathcal{T}\exp\Big(\int_s^t\Omega(\tau,R_\tau,x_\tau)\,d\tau\Big)\nonumber\\
&=\Big(\mathcal{T}\exp\big(\int_s^t\Omega(\tau,R_\tau,x_\tau)\,d\tau\big)\Big)\,\Omega(t,R_t,x_t)\nonumber\\
&=\exp(\Omega^{s\to t})\,\Omega(t,R_t,x_t),\label{eq:rhs_omega_avg}
\end{align}
where $\exp(\Omega^{s\to t})=R_s^\top R_t$ by definition.

For the left-hand side, the Wilcox formula for the Fr\'echet derivative of the matrix exponential gives
\begin{align}
&\frac{d}{dt}\exp(\Omega^{s\to t})\nonumber\\
&=\exp(\Omega^{s\to t})\int_0^1 \exp\big(-\alpha\Omega^{s\to t}\big)\Big(\frac{d}{dt}\Omega^{s\to t}\Big)\exp\big(\alpha\Omega^{s\to t}\big)\,d\alpha\label{pf:wilcox}\\
&=\exp(\Omega^{s\to t})\int_0^1 \exp\big(-\alpha\Omega^{s\to t}\big)\Big(\frac{d}{dt}A^{s\to t}\Big)^{\wedge}\exp\big(\alpha\Omega^{s\to t}\big)\,d\alpha\nonumber\\
&=\exp(\Omega^{s\to t})\int_0^1 \left(\exp\big(-\alpha (A^{s\to t})^{\wedge}\big)\,\frac{d}{dt}A^{s\to t}\right)^{\wedge} \,d\alpha\nonumber\\
&=\exp(\Omega^{s\to t})\left(\int_0^1 \exp\big(-\alpha (A^{s\to t})^{\wedge}\big)\,d\alpha\;\frac{d}{dt}A^{s\to t}\right)^{\wedge}.\label{eq:lhs_omega_avg}
\end{align}
Here the second line uses $\Omega^{s\to t}=(A^{s\to t})^{\wedge}$ and the linearity of $(\cdot)^{\wedge}$.
The third line uses the conjugation identity $Qx^{\wedge}Q^\top=(Qx)^{\wedge}$ for $Q\in\SO$ and $x\in\mathbb{R}^3$.
The last line follows because $(\cdot)^{\wedge}$ is linear and independent of $\alpha$. 
We complete the proof by the identity of right Jacobian defined in \eqref{eq:right_jacobian}
\begin{align}
  J(A^{s\to t})=\int_0^1\exp(-\alpha(A^{s\to t})^\wedge)d\alpha.\nonumber
\end{align}
\end{proof}

\begin{remark}[Inverse of the right Jacobian]\label{rmk:jacobian-inverse}
Since $A^{\wedge}$ is skew with eigenvalues $0,\pm i\phi$ where $\phi:=\|A\|_2$, the
matrix $J(A)=\int_0^1 e^{-\alpha A^{\wedge}}d\alpha$ has eigenvalues $1$ and
$(1-e^{\mp i\phi})/(\pm i\phi)$, of modulus $2\sin(\phi/2)/\phi$, all nonzero for
$\phi\le\pi$. Hence $J$ is invertible on the principal branch, with
\begin{align}
J(A)^{-1}=I+\tfrac12 A^{\wedge}
+\Big(\frac{1}{\phi^{2}}-\frac{1+\cos\phi}{2\phi\sin\phi}\Big)(A^{\wedge})^{2},
\nonumber
\end{align}
and $J(A)^{-1}=I+\tfrac12A^{\wedge}+\tfrac1{12}(A^{\wedge})^{2}+O(\phi^{4})$ for small
$\phi$, which we use below the numerical threshold. This covers the model output: with
the endpoint parameterization $A^{\text{avg}}_\theta=\tfrac1t\log(\cdot)^{\vee}$ of
Section~\ref{sec:model-param} the principal branch gives
$\|A_\theta^{s\to t}\|_2=\tfrac{t-s}{t}\,\|\log(\cdot)^{\vee}\|_2\le\pi$, so
\eqref{eq:mf-se3-loss2} is well-defined.
\end{remark}

\begin{remark}
If the angular velocity is constant, i.e.\ $\Omega_\tau=\omega^{\wedge}$ for all $\tau$,
then $A^{s\to t}=(t-s)\omega$ and $\tfrac{d}{dt}A^{s\to t}=\omega$. Since
$\omega^{\wedge}\omega=\omega\times\omega=0$, every term of the series for $J$ beyond the
constant one annihilates $\omega$, so
$$
J(A^{s\to t})\,\tfrac{d}{dt}A^{s\to t}=J\big((t-s)\omega\big)\,\omega=\omega,
$$
and likewise $J^{-1}(A^{s\to t})\,\omega=\omega$: the Jacobian does not distort the
velocity when the axis is fixed, and the two forms of
\eqref{eq:omega_identity_two_forms} coincide.
\end{remark}

\begin{figure}[t]
  \centering
  \includegraphics[width=1.0\linewidth]{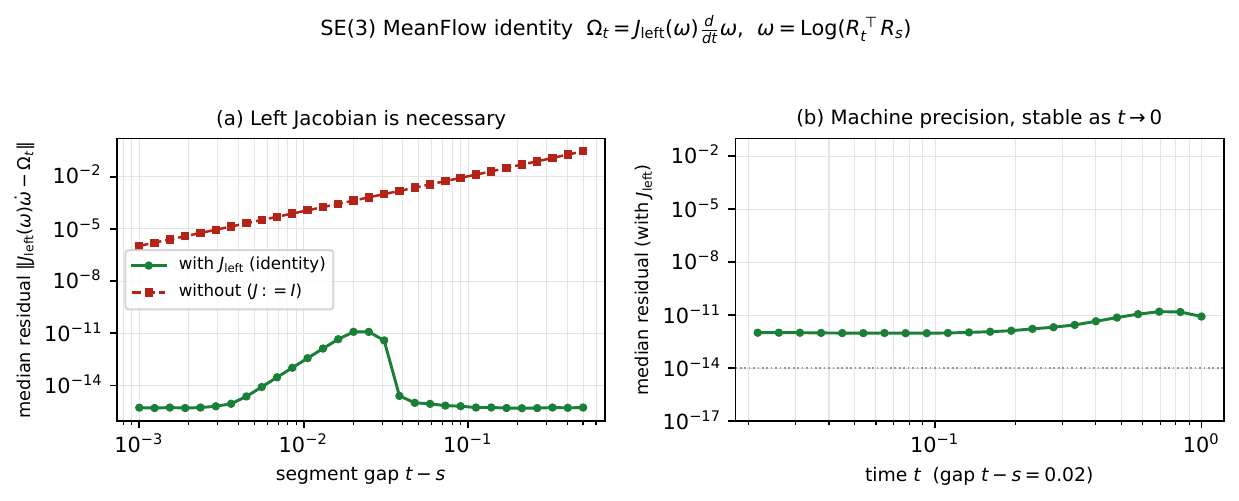}
  \caption{Numerical verification of Proposition~\ref{pro:Omega_identity}.
  On an analytic, non-geodesic $\mathrm{SO}(3)$ path we evaluate both sides of
  \eqref{eq:omega_avg_deriv} and plot the median residual
  $\lVert J(A^{s\to t})\,\tfrac{d}{dt}A^{s\to t}-\omega(t)\rVert$
  against the segment gap $t-s$. With the Jacobian term $J(A^{s\to t})$ (green) the
  residual stays at float64 machine precision ($\sim\!10^{-15}$, at worst
  $10^{-11}$), uniformly in the gap; dropping it ($J:=I$, red) breaks the identity by
  $\mathcal{O}(t-s)$, confirming both the proposition and the necessity of the
  Jacobian term.}
  \label{fig:omega-identity-numerical}
\end{figure}

We verify Proposition~\ref{pro:Omega_identity} numerically. We sample a smooth,
non-geodesic curve $R(\tau)=\exp\big((a+b\tau+c\tau^{2})^{\wedge}\big)$ on
$\mathrm{SO}(3)$ with $b,c$ non-parallel, so the averaged generator
$A^{s\to t}=\log(R_s^{\top}R_t)$ genuinely differs from the instantaneous
one and $J(A^{s\to t})$ acts non-trivially. For a batch of times we compute the
instantaneous body angular velocity $\omega(t):=\big(\Omega(t)\big)^{\vee}=(R_t^{\top}\dot
R_t)^{\vee}$ and the derivative $\tfrac{d}{dt}A^{s\to t}$ by forward-mode automatic
differentiation, and evaluate $J(A^{s\to t})$ from its closed form, all using the
same $\mathrm{SO}(3)$ exp/log/Jacobian routines as the training loss in double
precision. Figure~\ref{fig:omega-identity-numerical} reports the median residual of
\eqref{eq:omega_avg_deriv} over the batch versus $t-s$.

The identity in Proposition~\ref{pro:Omega_identity} involves the total time derivative $\frac{d}{dt}A^{s\to t}$, which we now compute explicitly.
\begin{proposition}\label{pro:dA/dt}
In the above notation, we have the total derivative
\begin{align}
\frac{d}{dt}A(s,t,R_t,x_t)
&=\partial_t A(s,t,R_t,x_t)+\langle\nabla_R A(s,t,R_t,x_t),\dot{R}_t\rangle+\langle\nabla_x A(s,t,R_t,x_t),\dot{x}_t\rangle.
\nonumber
\end{align}
\end{proposition}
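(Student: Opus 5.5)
The proof is a chain-rule computation along the trajectory $t\mapsto(s,t,R_t,x_t)$, with $s$ held fixed as in Proposition~\ref{pro:Omega_identity}. I would write $A(s,t,R_t,x_t)$ as the composition $t\mapsto\gamma(t):=(t,R_t,x_t)\in[0,1]\times\SO\times\mathbb{R}^3$ followed by $A(s,\cdot,\cdot,\cdot)$. Each component of $\gamma$ is smooth: the first is the identity, $R_t$ solves \eqref{eq:so3_ode} and so is smooth in $t$, and $x_t$ is the Euclidean interpolant. The derivative $\frac{d}{dt}A$ is then the differential of $A$ at $\gamma(t)$ applied to $\dot\gamma(t)=(1,\dot R_t,\dot x_t)$. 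It is enough to argue componentwise, treating each of the three scalar coordinates of $A\in\mathbb{R}^3$ as a real-valued function.

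The first step is to split the differential over the product manifold. The tangent space of the product is the direct sum of the factors' tangent spaces, so the differential decomposes as
\[
dA_{\gamma(t)}[(1,\dot R_t,\dot x_t)]=\partial_tA+dA^{R}_{R_t}[\dot R_t]+dA^{x}_{x_t}[\dot x_t].
\]
Here $dA^R$ and $dA^x$ are the partial differentials with the other arguments frozen. The $t$-term is the ordinary partial derivative. The $x$-term equals $\langle\nabla_xA,\dot x_t\rangle$ by the Euclidean chain rule.

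The second step handles the $R$-term, using the third and fourth items of the preliminary Remark. By the third item, $dA^R_{R_t}[\dot R_t]=\langle\nabla^{\mathcal M}_RA,\dot R_t\rangle$, where $\dot R_t=R_t\Omega_t\in\mathcal T_{R_t}\SO$. By the fourth item, the Riemannian gradient may be replaced by the Euclidean gradient of a smooth extension $\bar A$ to $\mathbb{R}^{3\times3}$. This extension exists because the network takes the matrix entries of $R$ as input. The result is the claimed formula.

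The main obstacle is that the extension is not unique, so one must check that the $R$-term does not depend on the choice. I would handle this directly. Since $\dot R_t$ is tangent to $\SO$, the directional derivative $d\bar A_{R_t}[\dot R_t]=\frac{d}{dt}\bar A(R_t)$ depends only on the values of $\bar A$ along the curve, where $\bar A=A$. Any normal component of $\nabla\bar A$ is therefore annihilated in the pairing with $\dot R_t$. I would also note a normalization point. The Frobenius pairing $\mathrm{Tr}(G^\top\dot R_t)$ and the metric $\tfrac12\mathrm{Tr}$ differ by a constant factor. I would read $\langle\cdot,\cdot\rangle$ as the pairing for which $\nabla$ is the gradient, so that the term equals the directional derivative, which is exactly the Euclidean JVP used in practice.
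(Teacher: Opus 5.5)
Your proposal is correct and takes the same route as the paper. The paper likewise uses the preliminary Remark to write the $R$-partial differential as a gradient pairing on $T_{R_t}\SO$, handles the $x$-term by the Euclidean chain rule, and concludes by the chain rule along $t\mapsto(R_t,x_t)$. Your additional checks on independence of the extension $\bar A$ and on the metric normalization are sound, and they settle a point the paper glosses over: the Riemannian gradient is only the tangential projection of the ambient gradient, so the Remark's identity $\nabla^{\mathcal M}_R f=\nabla_R\bar f$ holds only inside pairings with tangent vectors such as $\dot R_t$.
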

\begin{proof}
By the above remark, for $v\in T_{R_t}\SO$,
$$
d_RA(s,t,R_t,x_t)[v]=\nabla_R A(s,t,R_t,x_t)[v].
$$
Similarly, in the Euclidean component, $d_xA(s,t,R_t,x_t)[w]=\langle\nabla_xA(s,t,R_t,x_t),w\rangle$ for $w\in\mathbb{R}^3$.
Applying the chain rule along the curve $t\mapsto (R_t,x_t)$ gives the claimed identity.
\end{proof}

Substituting $A_\theta^{\text{avg}}(s,t,R_t,x_t)$ into the two identities of
    \eqref{eq:omega_identity_two_forms} and defining
    $A_\theta^{s\to t}=(t-s)A_\theta^{\text{avg}}(s,t,R_t,x_t)$, we obtain the two
    training losses of Section~\ref{sec:model-param}:
{\footnotesize\begin{align}
&\mathcal{L}^{J}_{\SO}
      :=
      \mathbb{E}
\Big[\big\|\,\mathrm{sg}\big(J(A_\theta^{s\to t})\big)
      \big(A_\theta^{\text{avg}}+(t-s)\,\mathrm{sg}(\tfrac{d}{dt}A_\theta^{\text{avg}})\big)
      -\omega_t\,\big\|_2^2\Big],
      \label{eq:loss_so3}\\
&\mathcal{L}^{J^{-1}}_{\SO}
      :=
      \mathbb{E}
      \Big[\big\|\,A_\theta^{\text{avg}}
      -\mathrm{sg}\big(J^{-1}(A_\theta^{s\to t})\,\omega_t\big)
      +\mathrm{sg}\big((t-s)\tfrac{d}{dt}A_\theta^{\text{avg}}\big)\,\big\|_2^2\Big],
      \label{eq:loss_so3_inv}
    \end{align}}
    where the expectation is taken over $s<t$ and $((R_0,x_0),(R_1,x_1))\sim q_{0,1}$, with $q_{0,1}$ a coupling between $p_0=p_{\text{data}}$ and $p_1=p_{\text{prior}}=p_{\text{noise}}$
    (e.g.\ the product measure or the optimal-transport coupling), and the stop-gradient is placed
    so that the gradient flows through $A_\theta^{\text{avg}}$ only. The two residuals are
    related by $J(A_\theta^{s\to t})$ and hence share their zero set; we train with
    \eqref{eq:loss_so3_inv}. In addition, the interpolation $R(t)$ (and its velocity
    $\dot{R}(t)$) is obtained from the geodesic interpolation:
\begin{align}
&\Omega(t)=\log(R_0^\top R_1), R(t)=R(0)\exp (t \Omega(t)),\nonumber\\ &\dot{R}_t=R_t\Omega(t).\nonumber 
\end{align}

The following proposition confirms that minimizing this loss recovers the correct relative rotation between any two points along the path.
\begin{proposition}\label{pro:loss_zero}
If $\mathcal{L}^{J}_{\SO}=0$ or $\mathcal{L}^{J^{-1}}_{\SO}=0$, i.e.
      $$
\big(J(A_\theta^{s\to t})\tfrac{d}{dt}A_\theta^{s\to t}\big)^\wedge
      =
      \Omega(t,R_t,x_t)
      \qquad \forall s<t\in[0,1],
      $$
      then
      $$
      \exp\big((A_\theta^{s\to t})^\wedge\big)=R_s^\top R_t.
      $$
    \end{proposition}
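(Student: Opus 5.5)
Fix $s$ and regard both sides as curves in $t\in[s,1]$. Set $G(t):=\exp\big((A_\theta^{s\to t})^\wedge\big)$ and $H(t):=R_s^\top R_t$. The plan is to show that $G$ and $H$ satisfy the same linear matrix ODE with the same initial value, and then conclude by uniqueness.

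The first step is to differentiate $G$. I would reuse the computation in the proof of Proposition~\ref{pro:Omega_identity}, namely the Wilcox formula \eqref{pf:wilcox} followed by the conjugation identity $Qx^\wedge Q^\top=(Qx)^\wedge$. This step never uses the definition \eqref{eq:avg_omega}; it holds for any smooth curve $t\mapsto A_\theta^{s\to t}\in\mathbb{R}^3$. It gives
\begin{align}
\dot G(t)=G(t)\Big(J(A_\theta^{s\to t})\tfrac{d}{dt}A_\theta^{s\to t}\Big)^\wedge=G(t)\,\Omega(t,R_t,x_t),\nonumber
\end{align}
where the second equality is exactly the hypothesis. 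For the loss $\mathcal{L}^{J^{-1}}_{\SO}=0$, the residual equals $J^{-1}$ applied to the $J$-residual, and $J$ is invertible on the principal branch (Remark~\ref{rmk:jacobian-inverse}). So the same relation holds in that case too.

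The second step is to differentiate $H$. By \eqref{eq:so3_ode}, $\dot H(t)=R_s^\top R_t\Omega(t,R_t,x_t)=H(t)\Omega(t,R_t,x_t)$. Thus $G$ and $H$ solve the same linear ODE $\dot Y=Y\Omega_t$. The coefficient $\Omega_t$ is a fixed continuous function of $t$ along the given path, so the ODE is linear with continuous coefficients and has a unique solution.

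The third step is to match initial data at $t=s$ and conclude. We have $H(s)=I$. Also $A_\theta^{s\to s}=(s-s)A_\theta^{\text{avg}}=0$, provided $A_\theta^{\text{avg}}$ stays bounded as $t\downarrow s$, which holds for a continuous network on $[0,1]^2$. Hence $G(s)=\exp(0)=I$. Picard--Lindelöf uniqueness then gives $G\equiv H$ on $[s,1]$, which is the claim. An alternative is to note that $\frac{d}{dt}(G H^{-1})=\dot G H^{-1}-G H^{-1}\dot H H^{-1}=0$.

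The main obstacle is the first step. The hypothesis is stated for all $s<t$, but it must hold along the whole interval with $A_\theta^{s\to t}$ differentiable in $t$. The Jacobian formula is also only cleanly valid where the curve is smooth, and the inverse-Jacobian case needs $\|A_\theta^{s\to t}\|\le\pi$. I would handle this by assuming continuity of $A^{\text{avg}}_\theta$ up to the diagonal. The Wilcox identity is global in $A$, since $J$ is defined as an integral with no branch issue, so the $\mathcal{L}^J$ case needs no branch restriction. The branch restriction enters only when converting the $J^{-1}$ residual, and there I would invoke Remark~\ref{rmk:jacobian-inverse}.
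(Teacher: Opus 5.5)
Your proof is correct and is essentially the paper's argument. Both derive $\dot G=G\,\Omega(t,R_t,x_t)$ from the Fr\'echet derivative of $\exp$ plus the hypothesis, then conclude by uniqueness for the linear ODE $\dot Y=Y\Omega_t$; the paper compares $R_0\exp\big((A_\theta^{0\to t})^\wedge\big)$ with $R_t$, which is your comparison of $G$ with $R_s^\top R_t$ left-multiplied by $R_s$, and your explicit treatment of the initial condition $A_\theta^{s\to s}=0$ and of the $J^{-1}$ case spells out two points the paper's proof uses only implicitly.
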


    \begin{proof}
      For simplicity we take $s=0,t=1$.
      Define the candidate reconstruction
      $$
      \tilde{R}_t := R_0\exp((A_\theta^{0\to t})^\wedge).
      $$
      By the closed-form Fr\'echet derivative of the exponential on $\SO$,
      $$
      \frac{d}{dt}\exp((A_\theta^{0\to t})^\wedge)
      =
      \exp((A_\theta^{0\to t})^\wedge)\big(J(A_\theta^{0\to t})\tfrac{d}{dt}A_\theta^{0\to t}\big)^\wedge.
      $$
      Using the condition $\mathcal{L}^J_{\SO}=0$ (or $\mathcal{L}^{J^{-1}}_{\SO}=0$) we have
      $$
      \big(J(A_\theta^{0\to t})\tfrac{d}{dt}A_\theta^{0\to t}\big)^\wedge
      =
      \Omega(t,R_t,x_t).
      $$
      Hence
      $$
      \dot{\tilde{R}}_t
      =
      \tilde{R}_t\Omega(t,R_t,x_t).
      $$
      On the other hand the geodesic interpolation satisfies
      $$
      \dot{R}_t = R_t\Omega(t,R_t,x_t),\qquad R_0\ \text{given}.
      $$
      Thus $R_t$ and $\tilde{R}_t$ solve the same ODE with the same initial condition.
      By uniqueness of solutions on $\SO$ we obtain
      $$
      \tilde{R}_t=R_t,\qquad \forall t\in[0,1].
      $$
      Taking $t=1$ yields
      $$
      R_0\exp((A_\theta^{0\to 1})^\wedge)=R_1
      \quad\Longleftrightarrow\quad
      \exp((A_\theta^{0\to 1})^\wedge) = R_0^\top R_1.
      $$
      For general $s<t$ the same argument gives
      $$
      \exp((A_\theta^{s\to t})^\wedge) = R_s^\top R_t,
      $$
      which completes the proof.
    \end{proof}

To implement the loss in \eqref{eq:loss_so3}, it remains to compute the total derivative $\frac{d}{dt}A_\theta^{s\to t}$.

\begin{proposition}\label{pro:jvp_equivalent}
From the product rule and chain rule, we have
\begin{align}
&\frac{d}{dt}(t-s)A_\theta(s,t,R_t,x_t)\nonumber\\
&=A_\theta(s,t,R_t,x_t)+(t-s)\Big(\partial_t A_\theta(s,t,R_t,x_t)+\langle\nabla_R A_\theta(s,t,R_t,x_t),\dot{R}_t\rangle
      +\langle\nabla_x A_\theta(s,t,R_t,x_t),\dot{x}_t\rangle\Big)
      \label{eq:total_dOmega_dt}
\end{align}

    \end{proposition}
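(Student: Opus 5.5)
The plan is to treat the left-hand side of \eqref{eq:total_dOmega_dt} as a product of the scalar function $t\mapsto(t-s)$ and the vector-valued function $t\mapsto A_\theta(s,t,R_t,x_t)\in\mathbb{R}^3$, where $s$ is held fixed. We use the standing assumption of Proposition~\ref{pro:Omega_identity} that $s$ and $t$ are independent variables. The product rule then gives
\[
\tfrac{d}{dt}\big[(t-s)A_\theta\big]=A_\theta+(t-s)\,\tfrac{d}{dt}A_\theta(s,t,R_t,x_t).
\]
This follows since $\tfrac{d}{dt}(t-s)=1$. The whole content of the claim therefore reduces to identifying the total derivative $\tfrac{d}{dt}A_\theta(s,t,R_t,x_t)$.

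For that second factor, I will view $A_\theta$ as a smooth map on $[0,1]\times[0,1]\times\SO\times\mathbb{R}^3$ composed with the curve $\gamma(t)=(s,t,R_t,x_t)$. Here $R_t=R_0\exp(t\log(R_0^\top R_1))$ and $x_t=(1-t)x_0+tx_1$ are the geodesic interpolants of Appendix~\ref{sec:bg-fm}. The curve has velocity $\dot\gamma=(0,1,\dot R_t,\dot x_t)$ with $\dot R_t=R_t\Omega_t\in T_{R_t}\SO$.

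The chain rule on the product manifold splits $dA_\theta[\dot\gamma]$ into three pieces. The first is the partial in the second time slot, $\partial_tA_\theta$. The second is the differential in the $R$-slot applied to $\dot R_t$. The third is the Euclidean differential in the $x$-slot applied to $\dot x_t$. The $s$-slot contributes nothing because $\dot s=0$.

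This is exactly Proposition~\ref{pro:dA/dt}, applied to $A_\theta$ in place of the ground-truth $A$, so I will simply invoke it. The step that needs care is the $R$-term. Proposition~\ref{pro:dA/dt} writes it as $\langle\nabla_RA_\theta,\dot R_t\rangle$ with the Riemannian gradient. In practice, though, $A_\theta$ is defined through the network on all of $\mathbb{R}^{3\times3}$ (the matrix representation of $R_t$).

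To connect the two, I will use the remark in Appendix~\ref{sec:SE3MF-details} that on $\SO$ with the inner product $\tfrac12\mathrm{Tr}(A^\top B)$, the Riemannian gradient of $f$ agrees with the Euclidean gradient of any smooth extension $\bar f$ when paired with tangent vectors. The underlying reason is that $\dot R_t$ is tangent, so $d\bar f_{R_t}[\dot R_t]=df_{R_t}[\dot R_t]$ regardless of the chosen extension. Consequently, a Euclidean JVP of the network in direction $(0,1,\dot R_t,\dot x_t)$ computes precisely $\tfrac{d}{dt}A_\theta$.

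Substituting this into the product-rule expression yields \eqref{eq:total_dOmega_dt}. I expect no real obstacle beyond that tangency observation: the extension-independence of the directional derivative along a tangent curve is the only point where the manifold structure enters.
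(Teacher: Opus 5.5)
Your proposal is correct and follows the same route as the paper's proof. You use the product rule (with $s$ fixed, so $\tfrac{d}{dt}(t-s)=1$) to split off $A_\theta$, then the chain rule along $t\mapsto(s,t,R_t,x_t)$ to expand $\tfrac{d}{dt}A_\theta$ into the $\partial_t$, $R$- and $x$-terms. Your extra observation that $d\bar f_{R_t}[\dot R_t]$ does not depend on the extension, because $\dot R_t$ is tangent, is a more careful version of the paper's remark equating the Riemannian and Euclidean gradients, and it is precisely what justifies evaluating the $R$-term with a Euclidean JVP.
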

\begin{proof}
Fix $s,t$ and $(R_t,x_t)$. For convenience, we denote
$$
A_\theta:=A_\theta(s,t,R_t,x_t).
$$

By definition, $(t-s)A_\theta$ is a product of the scalar function $t-s$ and the vector-valued function
$A_\theta(s,t,R_t,x_t)$. Hence, by the product rule,
\begin{align}
\frac{d}{dt}(t-s)A_\theta
&=\frac{d}{dt}(t-s)\,A_\theta+(t-s)\frac{d}{dt}A_\theta \nonumber\\
&=A_\theta+(t-s)\frac{d}{dt}A_\theta. \label{pf:product_rule}
\end{align}

Next, we compute the total derivative of $A_\theta(s,t,R_t,x_t)$ with respect to $t$.
Since $A_\theta$ depends on $t$ explicitly and implicitly through both $R_t$ and $x_t$,
by the chain rule,
\begin{align}
&\frac{d}{dt}A_\theta(s,t,R_t,x_t)
=\partial_t A_\theta(s,t,R_t,x_t)+d_RA_\theta(s,t,R_t,x_t)[\dot R_t]
+d_xA_\theta(s,t,R_t,x_t)[\dot x_t]. \label{pf:chain_rule}
\end{align}

Substituting \eqref{pf:chain_rule} into \eqref{pf:product_rule} yields
\begin{align}
&\frac{d}{dt}(t-s)A_\theta(s,t,R_t,x_t)
=A_\theta+(t-s)\Big(\partial_t A_\theta
+\langle\nabla_RA_\theta,\dot R_t\rangle
+\langle\nabla_xA_\theta,\dot x_t\rangle\Big). \nonumber
\end{align}

This proves \eqref{eq:total_dOmega_dt}.
\end{proof}

\subsection{Practical implementation: $\SE^N$ adaptation details}
Recall the $\SE^N$ parametrization of the main text: $f_\theta(s,t,R^N_t,x^N_t)=[A_\theta^{\text{avg},N};v_\theta^{\text{avg},N}]$, with the $i$-th block $A_{\theta,i}^{\text{avg},N}\in\mathbb{R}^{3}$ parametrizing the $i$-th $\mathfrak{so}(3)$ component. We now write the two per-component losses explicitly.

\paragraph{Independent $\mathfrak{so}(3)^N$ loss.}
Extending \eqref{eq:loss_so3} to $\mathfrak{so}(3)^N$ by treating each component
independently gives
\begin{align}
&\mathcal{L}_{\mathfrak{so}(3)^N}
:=\mathbb{E}_{\substack{s<t,\\ q(R^N_0,R^N_1)}}
\sum_{i=1}^{N}
\left\|
\mathrm{sg}\Big(J\big((t-s)A_{\theta,i}^{\mathrm{avg}}\big)\Big)\Big(A_{\theta,i}^{\mathrm{avg}}
+(t-s)\,\mathrm{sg}\big(\tfrac{d}{dt}\hat{A}_{\theta,i}^{\mathrm{avg}}\big)\Big)
-\omega_{t,i}
\right\|_2^2,\label{eq:mf-so3N-loss}
\end{align}
where $\omega_{t,i}:=\Omega_i(t,R_{t,i},x_{t,i})^{\vee}$ is the data instantaneous
angular velocity (cf.\ Eq.~\eqref{eq:omega_t-def}) and $J(\cdot)$ is the $\SO$ right
Jacobian applied per component. The $J^{-1}$ form \eqref{eq:loss_so3_inv} extends in
the same way, with the $i$-th summand replaced by
$\big\|A_{\theta,i}^{\mathrm{avg}}
-\mathrm{sg}\big(J^{-1}\big((t-s)A_{\theta,i}^{\mathrm{avg}}\big)\omega_{t,i}
-(t-s)\tfrac{d}{dt}\hat A_{\theta,i}^{\mathrm{avg}}\big)\big\|_2^2$.

\paragraph{Translation $\mathbb{R}^{3\times N}$ loss.}
For the translation component, the loss in Euclidean space \eqref{eq:mf_loss} extends directly to $N$ independent particles. With the model prediction $v_{\theta,i}^{\text{avg}}\in\mathbb{R}^3$ for residue $i$, the translation loss is
{\small
\begin{align}
\mathcal{L}_{\mathbb{R}^{3\times N}}
&:=\mathbb{E}_{\substack{s<t,\\ q_{0,1}}}
\sum_{i=1}^{N}
\Big\|
v_{\theta,i}^{\text{avg}}+(t-s)\,\text{sg}\!\left(\frac{d}{dt}\hat{v}_{\theta,i}^{\text{avg}}\right)-v_i(t,x_{t,i})
\Big\|_2^2,\label{eq:mf-R3N-loss}
\end{align}}
where $v_i(t,x_{t,i})=x_{1,i}-x_{0,i}$ is the constant ground-truth translation velocity along the linear interpolation path $x_{t,i}=(1-t)x_{0,i}+t x_{1,i}$. The combined MeanFlow loss on $\SE^N$ is therefore $\mathcal{L}^{\SE^N}_{\mathrm{MF}} = \mathcal{L}_{\mathfrak{so}(3)^N} + \mathcal{L}_{\mathbb{R}^{3\times N}}$.

\section{Common Settings: Auxiliary Loss, Prior and Inference}

\subsection{Auxiliary Loss}\label{sec:auxiliary_loss}

We include auxiliary losses from \citet{yim2023se} to enforce geometric 
consistency at the atomic level. Specifically, let $\mathcal{A}_0 \in 
\mathbb{R}^{N \times 4 \times 3}$ denote the ground-truth backbone atom 
coordinates (in \AA) of the four heavy atoms $(\text{N}, \text{C}_\alpha, 
\text{C}, \text{O})$ per residue, and let $\hat{\mathcal{A}}_0$ be the 
corresponding coordinates predicted by the model. We define a direct regression 
loss on backbone atom positions and a pairwise distance loss in a local 
neighborhood,
{\small
\begin{equation}
    \mathcal{L}_{\mathrm{bb}} = \frac{1}{4N} \sum \|\mathcal{A}_0 - 
    \hat{\mathcal{A}}_0\|^2,
    \mathcal{L}_{\mathrm{2D}} = \frac{\|\mathbf{1}\{D < 6\text{\AA}\}
    (D - \hat{D})\|^2}{\sum \mathbf{1}_{D < 6\text{\AA}} - N},
\end{equation}}
where $D \in \mathbb{R}^{N \times N \times 4 \times 4}$ is the tensor of 
pairwise distances between heavy atoms, i.e.\ $D_{ijab} = \|\mathcal{A}_{ia} - 
\mathcal{A}_{jb}\|$, and $\hat{D}$ is defined analogously from 
$\hat{\mathcal{A}}_0$. The auxiliary loss is then
\begin{equation}
    \mathcal{L}_{\mathrm{aux}} = \mathbb{E}_{\mathcal{Q}}\!\left[\mathcal{L}_{
    \mathrm{bb}} + \mathcal{L}_{\mathrm{2D}}\right],
\end{equation}
where $\mathcal{Q}(t, x_0, x_1, \tilde{x}_t) := \mathcal{U}(0,1) \otimes 
\bar{\pi}(x_0, x_1) \otimes \rho_t(\tilde{x}_t \mid x_0, x_1)$ is the 
factorized joint distribution. And following the settings in \citet{bose2024se,yim2023se}, we only apply $\mathcal{L}_{\mathrm{aux}}$ for 
$t < 0.75$, scaling it by $\lambda_{\mathrm{aux}}$. 


\subsection{Prior Distribution}\label{sec:prior}
Following the notation in \citet{geng2026mean,geng2026improved}, we denote protein backbones by $(R_0^N,x_0^N)$ and sample $(R_1^N,x_1^N)\in \SO^N\times \mathbb{R}^{N\times 3}$ from a simple prior.

\paragraph{Translation prior.}
In Euclidean space, the natural analogue of a ``standard'' prior is an isotropic Gaussian,
\begin{equation}
  x_1^N \sim \mathcal{N}\bigl(0,\sigma_x^2 I_{N\times 3}\bigr),
\end{equation}
where we identify $x_1^N\in\mathbb{R}^{N\times 3}$ with a vector in $\mathbb{R}^{3N}$ and typically set $\sigma_x=1$.

\paragraph{Rotation prior: Gaussian analogue on $\SO$.}
On the rotation group, the closest analogue of an isotropic Gaussian is an \emph{isotropic (heat-kernel) Gaussian} on $\SO$, denoted $\mathrm{IGSO(3)}(\sigma_R)$. It is the transition density of Brownian motion on $\SO$ at time $\sigma_R^2$, and is isotropic in the sense that its density depends only on the geodesic rotation angle
\begin{equation}
  \theta(R) := \|\log(R)\|_2 \in [0,\pi],\log:\SO\to\mathfrak{so}(3)\cong\mathbb{R}^3,
\end{equation}
with respect to the Haar measure $\mathrm{d}R$.

Let $\mathrm{IGSO}(3)^N$ denote the product measure of $\mathrm{IGSO}(3)$. Concretely, we use the factorized prior
\begin{equation}
p(R_1^N,x_1^N)=\mathrm{IGSO}(3)(\sigma_R)^N\times \mathcal{N}(0,I_{N\times 3}).
\end{equation}
The typical choice of $\sigma_R$ is $1.5$. In addition, as $\sigma_R\to\infty$, $\mathrm{IGSO(3)}(\sigma_R)$ approaches the uniform (Haar) distribution on $\SO$, which is another typical choice of the prior.

\begin{algorithm}[t]
\caption{MeanFlow training for $\SE^N$ (idealized form). This is the
plain MeanFlow objective, stated to make the structure of the method
explicit; it is \emph{not} the recipe used for our reported results.
The trained objective is the Stage-2 loss~\eqref{eq:stage2-loss}---which
adds the endpoint anchor and the auxiliary term---with the rotation and
translation losses computed in the small-$t$ stabilized form of
Algorithm~\ref{alg:stable-derivative-B}, after the $\alpha$-Flow warm-up
of Algorithm~\ref{alg:af-training}. See Appendix~\ref{sec:training-details}.}
\label{alg:meanflow-training}
\begin{algorithmic}[1]
\Require Data distribution $p_{\text{data}}$ over $\SE^N$; MeanFlow network $f_\theta$; OT-coupling flag
\For{each training iteration}
  \State Sample a mini-batch $\{(R_0^N,x_0^N)\}_{b=1}^B \sim p_{\text{data}}$.
  \Statex \hspace{\algorithmicindent} $R_0^N\in\mathbb{R}^{B\times N\times 3\times 3}$, $x_0^N\in\mathbb{R}^{B\times N\times 3}$.
  \State Sample $(R_1^N,x_1^N)$ from the prior; see Section~\ref{sec:prior}.
  \If{OT-coupling}
    \State Solve $OT((R_0,x_0),(R_1,x_1))$ and couple $(R_0,x_0)$ and $(R_1,x_1)$ using the optimal transport plan.
  \EndIf
  \State Sample times $0\le s<t\le 1$.
  \State Forward-interpolate to time $t$ to obtain $(R_t^N,\Omega_t^N)$ and $(x_t^N,v_t^N)$.
  \State Evaluate the network: $[A_\theta^{\text{avg},N},v_\theta^{\text{avg},N}]\gets f_\theta(s,t,R_t^N,x_t^N)$.
  \State Compute $\mathcal{L}_{\mathfrak{so}(3)^N}$ via Eq.~\eqref{eq:mf-so3N-loss}.
  \State Compute $\mathcal{L}_{\mathbb{R}^{3\times N}}$ via Eq.~\eqref{eq:mf-R3N-loss}.
  \State Compute the auxiliary loss $\mathcal{L}_{\text{aux}}$; see Section~\ref{sec:auxiliary_loss}.
  \State $\mathcal{L}\gets \mathcal{L}_{\mathfrak{so}(3)^N}+\mathcal{L}_{\mathbb{R}^{3\times N}}+\lambda_{\text{aux}}\,\mathbb{1}[t<0.75]\,\mathcal{L}_{\text{aux}}$.
  \State Update $\theta$ using a gradient step (or any optimizer).
\EndFor
\end{algorithmic}
\end{algorithm}

\subsection{Inference process}\label{sec:inference}

Similar to the original MeanFlow method in Euclidean space, we can adapt multi-step inference for the $\SE^N$ MeanFlow framework.

In particular, let $T\in\mathbb{N}$ denote the number of inference steps and let $\Delta t=\frac{1}{T}$ be the step size. We apply the following updates on $\SO$ and $\mathbb{R}^3$, respectively:
\begin{align}
&R_{t-\Delta t}=R_t e^{-\Delta t A_\theta^\wedge(s,t,R_t,x_t)},\nonumber\\
&x_{t-\Delta t}=x_t-\Delta t  v^{\text{avg}}_\theta(s,t,R_t,x_t).\nonumber
\end{align}
where $t$ decreases from $1$ to $1/T$ and $s=t-\Delta t$. We summarize the procedure in Algorithm~\ref{alg:meanflow-inference}.
\begin{algorithm}[t]
\caption{$\SE^N$ MeanFlow inference}
\label{alg:meanflow-inference}
\begin{algorithmic}[1]
\Require Batch size $B$; pretrained network $f_\theta$; number of steps $T$
\Ensure $(\hat{R}_0^N,\hat{x}_0^N)$
\State $\Delta t=1/T$
\State Sample $(R_1^N,x_1^N)$ from the prior over $\SE^N$.
\State $(R_t^N,x_t^N)\gets (R_1^N,x_1^N)$.
\For{$t = 1,1-\Delta t,\ldots,\Delta t$}
  \State $s\gets t-\Delta t$.
  \State $[A_\theta^{\text{avg},N},v_\theta^{\text{avg},N}]\gets f_\theta(s,t,R_t^N,x_t^N)$.
  \State $R_t^N\gets R_t^N\exp(-\Delta t\,(A_\theta^{\text{avg},N})^\wedge)$.
  \State $x_t^N\gets x_t^N-\Delta t\,v_\theta^{\text{avg},N}$.
\EndFor
\State $\hat{R}^N_0=R_t^N, \hat{x}^N_0=x_t^N$.
\end{algorithmic}
\end{algorithm}

\section{Ablation Study: A Controlled $\mathrm{SO}(3)$ Benchmark for Few-Step Generation}
\label{sec:toy}

Designability on SCOPe confounds the generative objective with the protein-specific IPA
trunk, the auxiliary losses, the self-conditioning recipe, and the folding oracle, so a
few-step win there is hard to attribute. This appendix therefore tests the central claim
of the paper --- that it is the \emph{average-velocity parameterization} that enables
few-step generation, rather than the network, coupling, or protein-specific machinery ---
on a controlled $\mathrm{SO}(3)^2$ benchmark, where we hold all confounders fixed and vary
only the training loss.

A full ablation on real proteins is prohibitively expensive: the design space includes many
hyperparameter combinations (e.g., OT variants in sampling such as per-GPU OT, global OT, or
no OT; loss normalizations and clipping; different loss mixtures; architectural toggles such
as enabling/disabling AdaLN; and optimizer / learning-rate choices). Each setting requires
substantial compute (4$\times$80GB GPUs), long GPU-hour budgets, and careful checkpoint
selection to reliably judge performance. Under our limited training budget we therefore do
\emph{not} run exhaustive real-dataset ablations; instead, on the synthetic
$\mathrm{SO}(3)^2$ benchmark we fix the model, learning rate, $\SE$ interpolator, and
training steps so that performance differences are attributable to the losses themselves.

\subsection{Data}
\label{sec:toy-data}

A sample is a pair of rotations $(R^{(0)},R^{(1)})\in\mathrm{SO}(3)^2$: the rotational
part of a two-residue backbone, with the translation branch switched off. Throughout we
visualize a rotation at its axis-angle coordinate
\begin{align}
w=\log(R)\in\mathbb{R}^3,\qquad \|w\|\le\pi,
\end{align}
so a distribution on $\mathrm{SO}(3)$ becomes a point cloud inside the $\pi$-ball
(Figure~\ref{fig:toy-data}).

The two atoms carry deliberately different geometry.

\paragraph{Atom 0 (\texttt{cube6}): a mode-seeking task.}
A mixture of six isotropic components centered at the six cube-face rotations (rotations
by $\pi/2$ about $\pm e_x,\pm e_y,\pm e_z$, hence $\|w\|=\pi/2$). The modes are sharp and
well separated, so a sampler must \emph{commit} to one basin; hedging between two modes
is immediately visible as mass in the empty region between them.

\paragraph{Atom 1 (\texttt{moons3d}): a curved-manifold task.}
The classical two-moons distribution, lifted into the Lie algebra so that its support is
a pair of interlocking crescents. Here the model must trace a thin, curved,
one-dimensional structure rather than collapse onto a few points.

\begin{figure}[t]
\centering
\includegraphics[width=0.4\textwidth]{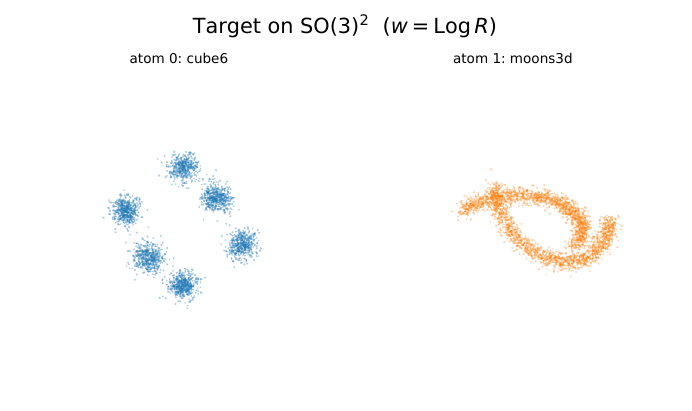}
\caption{The $\mathrm{SO}(3)^2$ target, drawn in axis-angle coordinates
$w=\log(R)$, $\|w\|\le\pi$. \textbf{Left}, atom~0 (\texttt{cube6}): six sharp
modes at the cube-face rotations --- a mode-seeking task. \textbf{Right}, atom~1
(\texttt{moons3d}): two interlocking crescents --- a curved-manifold task. Both must be
solved by the same network under the same objective.}
\label{fig:toy-data}
\end{figure}
\begin{figure*}[t]
\centering
\includegraphics[width=\textwidth]{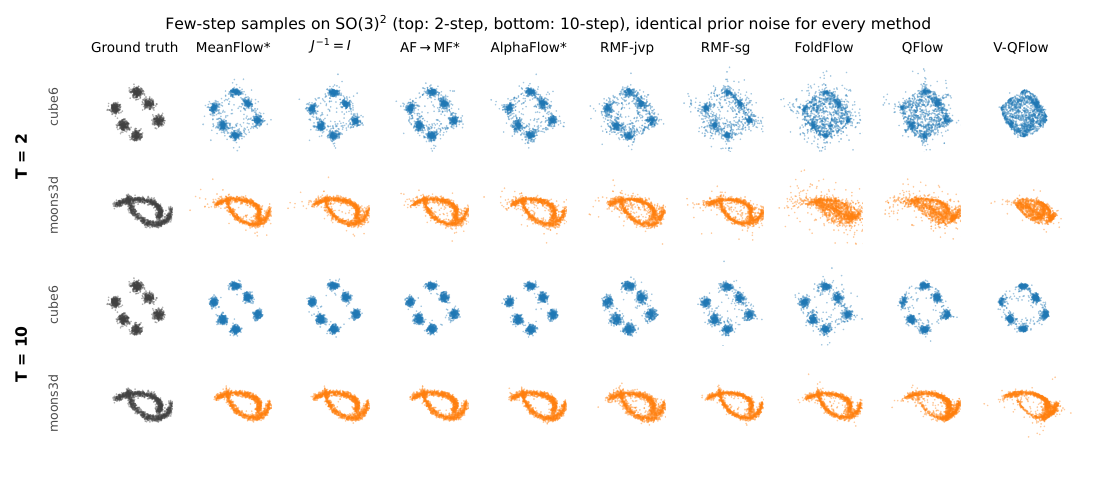}
\caption{Two-step ($T{=}2$) samples, in the same axis-angle coordinates as
Figure~\ref{fig:toy-data}, from the \emph{identical} frozen prior noise for every method.
Top row: atom~0 (\texttt{cube6}); bottom row: atom~1 (\texttt{moons3d}). The
average-velocity methods (ours and both RMF variants) already resolve the six modes and
the crescent geometry after two steps. FoldFlow and QFlow smear across the ball, and
V-QFlow contracts onto a shell.}
\label{fig:toy-samples}
\end{figure*}


\paragraph{Prior.}
The prior is the Haar (uniform) measure on $\mathrm{SO}(3)$, independently per atom. A
Haar draw sits a mean geodesic distance of $126.5^{\circ}$ from the identity, so the
transport distance is large and the prior carries no information about either target. We
draw $10{,}000$ training and $2{,}000$ held-out samples; the noise is resampled every
epoch, from a seeded stream (below).

\subsection{Experiment setting}
\label{sec:toy-controls}

All methods share the following. Only the training loss differs.

\begin{enumerate}
\item \textbf{Network.} The same $1.07$M-parameter $\SE$ MLP for every method,
      mapping $(R_t,t,s)\mapsto\omega\in\mathbb{R}^{2\times3}$, a body-frame angular
      velocity per atom. Single-time methods (FoldFlow) are fed $s{=}t$; endpoint
      methods (QFlow, V-QFlow) use the quaternion head of the same trunk, at matched
      rotation-branch capacity.
\item \textbf{Time sampler.} $t\sim\mathcal{U}(0,1)$ with \emph{no} lower cut-off
      ($t_\varepsilon=0$; the regular-$t$ losses never divide by $t$), and
      $s\sim\mathcal{U}(0,t)$ for the two-time methods, with a $50\%$ $s{=}t$ anchor
      branch. The three-time method (RMF semigroup) additionally draws an interior
      $m\in(s,t)$ and gets no $s{=}t$ branch, since a degenerate interval makes the
      semigroup identity vacuous.
\item \textbf{Budget.} $20$k steps, batch $500$, the same optimizer and learning rate,
      and EMA weights at evaluation.
\item \textbf{Data and noise stream.} \emph{Bit-identical} across methods. This is
      enforced by three independent RNG streams --- batch indices, coupling, and times ---
      so that methods which consume different numbers of random draws (OT draws a
      multinomial; a three-time method draws three uniforms where a single-time method
      draws one) nevertheless see the same $(R_0,R_1)$ pair at every step. With a single shared stream the sequences drift out of sync after the first step, silently, and the comparison is no longer controlled.
\end{enumerate}

\paragraph{No optimal-transport coupling.}
No method uses mini-batch OT. This is deliberate and it is the crux of the protocol: OT
is a flow-\emph{straightening} device. Under an OT coupling the learned probability path
is close to a straight geodesic, in which case the average velocity and the instantaneous
velocity nearly coincide --- so a plain flow-matching model gets few-step generation for
free, and the experiment can no longer distinguish the two parameterizations. Enabling OT
would hand the instantaneous-velocity baselines exactly the property under test. We
therefore run every method \emph{without} OT so that few-step behavior is attributable
to the objective alone.

\subsection{Evaluation Metric}
\label{sec:toy-metric}

We report the Wasserstein-2 distance between $2{,}000$ generated and $2{,}000$ held-out
samples, under the squared geodesic cost summed over the two atoms:
\begin{align}
&c\big((R^{(0)},R^{(1)}),(S^{(0)},S^{(1)})\big)
=\sum_{a\in\{0,1\}} d_{\mathrm{SO}(3)}^2\big(R^{(a)},S^{(a)}\big),\nonumber\\
&d_{\mathrm{SO}(3)}(R,S)=\big\|\log(R^\top S)\big\|,\nonumber
\end{align}
solved as an exact assignment and reported in degrees.


\paragraph{The floor.}
At finite sample size, $\mathcal{W}_2$ between two \emph{independent} draws of the target
is not zero. We measure this floor with the identical estimator and the identical sample
size, and report it in Table~\ref{tab:toy_so3}: $21.70^{\circ}$. No model can go below
it, and a model at the floor is indistinguishable from the target at this sample size.

\paragraph{Sampling.}
Models are integrated on a uniform Euler grid $t\in\mathrm{linspace}(1,0,T)$ for
$T\in\{1,2,5,10,20\}$, with no final-step special case and no $t_{\min}$ floor. Every
method is initialized from the same frozen prior-noise batch, so the sample sets --- and
hence the panels of Figure~\ref{fig:toy-samples} --- are directly comparable.

\begin{table*}[t]
\centering
\begingroup
\small
\setlength{\tabcolsep}{6pt}
\renewcommand{\arraystretch}{1.05}
\begin{tabular}{@{}lccccc@{}}
\toprule
\multirow{2}{*}{Method} & \multicolumn{5}{c}{$\mathcal{W}_2$ ($^{\circ}$) $\downarrow$ at $T$ steps} \\
\cmidrule(lr){2-6}
 & $T{=}1$ & $T{=}2$ & $T{=}5$ & $T{=}10$ & $T{=}20$ \\
\midrule
\multicolumn{6}{@{}l}{\emph{Average-velocity parameterization}}\\
\textbf{SE(3)-MeanFlow (ours)}                & \textbf{30.74} & \textbf{26.29} & 25.83 & 25.77 & 25.95 \\
\quad ablation: $J^{-1}\!:=\!I$ (log-map)  & 31.15 & 26.37 & 25.91 & 25.90 & 25.91 \\
\quad $\alpha$-Flow $\to$ MeanFlow (ours) & 31.13 & 26.67 & 25.30 & 25.31 & 25.35 \\
\quad $\alpha$-Flow (ours)       & 32.16 & 26.81 & 25.63 & 25.51 & 25.63 \\
RMF, JVP form~\cite{woo2026riemannian}        & 32.44 & 27.65 & \textbf{25.26} & 25.21 & 25.18 \\
RMF, semigroup$+$FM~\cite{woo2026riemannian}  & 31.52 & 29.87 & 25.36 & \textbf{24.75} & 24.83 \\
\midrule
\multicolumn{6}{@{}l}{\emph{Instantaneous-velocity / endpoint parameterization}}\\
FoldFlow~\cite{bose2024se}                    & 80.52 & 43.74 & 26.82 & 24.94 & \textbf{24.77} \\
QFlow~\cite{yue2025reqflow}                   & 77.99 & 42.68 & 29.44 & 31.28 & 31.96 \\
V-QFlow~\cite{yue2025reqflow}                 & 90.64 & 51.35 & 31.75 & 32.93 & 32.24 \\
\midrule
\texttt{Floor} & 21.70 & 21.70 & 21.70 & 21.70 & 21.70\\
\bottomrule
\end{tabular}
\endgroup
\caption{Few-step generation on the $\mathrm{SO}(3)^2$ toy benchmark.
$\mathcal{W}_2$ (degrees; lower is better) between $2{,}000$ generated and $2{,}000$
held-out samples, as the number of Euler steps $T$ falls from $20$ to $1$. All methods
share one network, one data/noise stream, one time sampler and one budget
(Section~\ref{sec:toy-controls}); \emph{none} uses an OT coupling. \texttt{Floor} is the
$\mathcal{W}_2$ between two independent draws of the target and is the lowest attainable
value. Best per column in bold. The separation is by \emph{parameterization}, not by
method family: average-velocity methods survive $T{=}1$; instantaneous-velocity and
endpoint-prediction methods collapse. Removing the $\mathrm{SO}(3)$ Jacobian ($J^{-1}\!:=\!I$) leaves the
average-velocity model unchanged within noise for $T\ge2$ and costs $0.4^{\circ}$
at $T{=}1$, where the sampler queries the largest interval.}
\label{tab:toy_so3}
\end{table*}

\subsection{Results}
\label{sec:toy-results}

Table~\ref{tab:toy_so3} shows a clean separation by \emph{parameterization}, not by
method family or by method quality.

\paragraph{20 steps: the fine-grid regime.}
At $T{=}20$ the two parameterizations are indistinguishable in quality: every
average-velocity method, together with FoldFlow, sits within a few degrees of the
$21.70^{\circ}$ floor, and the single best number is in fact FoldFlow's
($24.77^{\circ}$). With a fine enough Euler grid the instantaneous-velocity
parameterization is entirely adequate and our objective buys nothing. The
exceptions are the quaternion-parameterized methods, QFlow and V-QFlow, which
plateau roughly $10^{\circ}$ above the floor. Since QFlow optimizes a
flow-matching loss and V-QFlow an endpoint loss, both should in principle track
FoldFlow in this regime; we attribute the gap to the rotation-matrix backbone used
throughout this ablation, which forces repeated quaternion--matrix conversions
inside the model and the loss. Appendix~\ref{sec:q-practical} reports the same
effect at protein scale.

\paragraph{Few steps: MeanFlow methods outperform flow matching.}
At $T{=}2$ (and $T{=}5$), the MeanFlow-style average-velocity methods --- ours and both
RMF variants --- already approach the $21.70^{\circ}$ floor, with our model being
slightly better overall. In contrast, the flow-matching baselines FoldFlow, QFlow and
V-QFlow exhibit substantially higher errors at these low step counts.
Figure~\ref{fig:toy-samples} corroborates this qualitatively: after two steps our
samples already show six distinct modes and a recognizable pair of crescents, whereas
FoldFlow and QFlow do not.

At $T{=}1$, SE(3)-MeanFlow remains the best method, but all approaches are relatively far from the
floor. This is expected: in the extreme case where two rotations differ by an angle of
$\pi$, the shortest geodesic (and hence the induced velocity) is not well-defined due
to non-uniqueness (analogous to the ambiguity of shortest paths between the north and
south poles on a sphere). This suggests that $\SO$ is intrinsically ill-suited for
one-step generation. However, few-step generation methods can still be applied. 
\paragraph{Effect of the right Jacobian.}
Replacing $J^{-1}$ by the identity --- the log-map approximation
of~\citet{zhong2026riemannian} --- costs $0.41^{\circ}$ at $T{=}1$ and is within
$0.13^{\circ}$ of our model elsewhere (Table~\ref{tab:toy_so3}). This is the
expected pattern: since $J^{-1}(A)=I+\tfrac12A^{\wedge}+\cdots$ acts on
$A^{s\to t}=(t-s)A^{\mathrm{avg}}$, the correction scales with the interval, and a
$T$-step sampler queries only $t-s=1/T$. The same scaling holds on real training
data (Table~\ref{tab:jacobian-rho}). The Jacobian is closed-form and free, so we
retain it; its benefit is concentrated in the aggressive few-step regime.
\paragraph{The $\alpha$-Flow ablation isolates the mechanism.}
Our $\alpha$-Flow objective (Section~\ref{sec:alpha-flow}), trained alone with $\alpha$
annealed $1\to0.1$, performs competitively, but is consistently slightly worse than
SE(3)-MeanFlow at low step counts (Table~\ref{tab:toy_so3}). This is expected: $\alpha$-Flow
is a JVP-free surrogate that relaxes the full average-velocity consistency enforced by
MeanFlow. Appending a MeanFlow phase to the last $40\%$ of the same schedule --- same
network, same budget, same data --- improves the one-step error by about $1^{\circ}$,
with smaller gains at $T{=}2$, indicating that few-step capability is primarily supplied
by the MeanFlow objective.
Training with MeanFlow throughout gives the better one- and two-step error, while the
warm-up variant is marginally better for $T\ge5$; neither ordering is large relative to
the within-block spread. The few-step advantage reported in the main text is therefore
not an artifact of a short fine-tuning phase.

\paragraph{Concurrent work, Riemannian MeanFlow.}
Both RMF variants~\cite{woo2026riemannian} sit in the same block as our method and
perform comparably (RMF semigroup$+$FM reaches $31.52^{\circ}$ at $T{=}1$, statistically
indistinguishable from our $30.74^{\circ}$ at this sample size). We regard this as
supporting the paper's thesis rather than undercutting it: the thesis is about the
average-velocity parameterization, and RMF is an average-velocity method. The
$\SO$-specific contribution of our work --- the exact right-Jacobian identity of
Proposition~\ref{pro:A_identity_main} and the resulting $\SE^N$ objective ---
is evaluated on protein backbones in the main text, where the two methods do separate.

\paragraph{Caveat.}
The numbers in Table~\ref{tab:toy_so3} are from a single seed. The $\sim\!50^{\circ}$ gap
between the two blocks is far beyond any plausible seed variation, but small differences
\emph{within} a block (e.g.\ our $30.74^{\circ}$ versus RMF's $31.52^{\circ}$ at $T{=}1$)
should not be read as a ranking.

\section{Stable Training Framework, Part 1: MeanFlow Loss (Vanilla and Small-$t$ Stabilized)}\label{sec:stable-rescale}
\paragraph{Vanilla (no small-$t$ reparameterization).}
The MeanFlow identity on $\mathrm{SO}(3)$ (Proposition~\ref{pro:A_identity_main}, with full derivation in Appendix~\ref{sec:SE3MF-details}) yields a consistency target involving the trajectory derivative $\tfrac{d}{dt}A^{\mathrm{avg}}_\theta$. Our practical deep learning model is \emph{diff-frame}: it takes the current state $(R_t,x_t)$ and a time embedding (constructed from $(t,s)$) as input, and directly predicts the endpoints $(\hat{R}_0^\theta,\hat{x}_0^\theta)$. We then define the average velocities using the endpoint (``$x$-prediction'') parameterization
\begin{align}
A_{\theta}^{\mathrm{avg}}(s,t,R_t,x_t) &= \frac{1}{t}\,\log\!\big((\hat{R}_0^\theta)^{\top}R_t\big)^{\vee},\\
v_{\theta}^{\mathrm{avg}}(s,t,R_t,x_t) &= \frac{1}{t}\,(x_t-\hat{x}_0^\theta),
\end{align}
which is the $\SE$ analogue of pixel-space MeanFlow. This is the ``normal'' objective; if one samples times bounded away from $0$ and computes $\tfrac{d}{dt}A_{\theta}^{\mathrm{avg}}$ via a JVP, it can be used directly.

\paragraph{Motivation: small-$t$ numerical stability.}
Both branches contain explicit $1/t$ factors and the rotation target further requires differentiating through $\log((\hat R_0^\theta)^\top R_t)$, so naive autodiff becomes numerically fragile as $t\to 0$. Below we describe our implementation that preserves the same objective but avoids explicit $1/t$ during target construction.

\subsubsection{Positive $t_{\min}$ and last-step endpoint prediction}
Following prior work we enforce a strictly positive $t_{\min}>0$ and only sample training times from $[t_{\min},1]$. In our method we extend the lower bound to $t_{\min}=10^{-6}$.

At inference time, we integrate on a fixed grid $t\in\mathrm{linspace}(1,t_{\min},T)$. In the final iteration, we set $(t,s)=(t_{\min},0)$ and directly use the model's endpoint prediction $(\hat{R}_0^\theta,\hat{x}_0^\theta)$ as the generated sample rather than performing another Euler update. For the small-$t$ consistency losses we use a denominator clamp $t_\varepsilon$ when dividing by $t^2$ (only applied at the final loss normalization step).

\subsection{Auxiliary $B$-variables for stable time derivatives}
The MeanFlow consistency loss involves a time derivative term of the form $(t-s)\,\tfrac{d}{dt}A_{\theta}^{\mathrm{avg}}$ (and similarly for $v_{\theta}^{\mathrm{avg}}$). Directly differentiating $A_{\theta}^{\mathrm{avg}}=\tfrac{1}{t}\log((\hat{R}_0^\theta)^\top R_t)^{\vee}$ can lead to large gradients when $t$ is small.

We therefore introduce auxiliary variables that absorb the problematic factor $t$,
\begin{align}
B^A_{\theta}(s,t,R_t,x_t) &:= t\,A_{\theta}^{\mathrm{avg}}(s,t,R_t,x_t),\\
B^v_{\theta}(s,t,R_t,x_t) &:= t\,v_{\theta}^{\mathrm{avg}}(s,t,R_t,x_t),
\end{align}
and compute time-derivative information via $\tfrac{d}{dt}B^A_{\theta}$ and $\tfrac{d}{dt}B^v_{\theta}$. In implementation we avoid dividing by $t$ throughout the derivative-target construction; all $1/t$ factors are deferred, and we only apply the $t^{-2}$ normalization after forming the squared residuals (with $t$ clamped below by $t_\varepsilon$).
Using $B^A_{\theta}=tA_{\theta}^{\mathrm{avg}}$ and $h=t-s$, one can rewrite
\begin{align}
A_{\theta}^{s\to t} &= h\,A_{\theta}^{\mathrm{avg}} = \frac{h}{t}\,B^A_{\theta},\nonumber\\
 t\,\frac{d}{dt}A_{\theta}^{s\to t} &= B^A_{\theta} + h\Big(\frac{d}{dt}B^A_{\theta}-A_{\theta}^{\mathrm{avg}}\Big)
   \nonumber\\
   &= B^A_{\theta} + \Big(h\,\frac{d}{dt}B^A_{\theta}-\frac{h}{t}B^A_{\theta}\Big).\label{eq:Bt}
\end{align}
where $\frac{h}{t}\leq 1$. 
In implementation, we absorb the factor $h$ into the JVP tangents, so the computed JVP output corresponds to $h\,\tfrac{d}{dt}B^A_{\theta}$ directly (and analogously for $B^v_{\theta}$).
Here $\tfrac{d}{dt}B^A_{\theta}$ is the \emph{total} time derivative of $B^A_{\theta}(s,t,R_t,x_t)$ (with $s$ held fixed). By the chain rule,
\begin{align}
h\frac{d}{dt}B^A_{\theta} &= \frac{\partial B^A_{\theta}}{\partial t}
+ \Big\langle \frac{\partial B^A_{\theta}}{\partial R_t},\,h\frac{dR_t}{dt}\Big\rangle
+ \Big\langle \frac{\partial B^A_{\theta}}{\partial x_t},\,h\frac{dx_t}{dt}\Big\rangle,
\end{align}
where $\tfrac{\partial B^A_{\theta}}{\partial R_t}$ and $\tfrac{\partial B^A_{\theta}}{\partial x_t}$ include the implicit dependence through the network prediction $(\hat{R}_0^\theta,\hat{x}_0^\theta)=f_\theta(s,t,R_t,x_t)$. In practice we compute $\tfrac{d}{dt}B^A_{\theta}$ (and $\tfrac{d}{dt}B^v_{\theta}$) with a JVP given the instantaneous velocities $(\tfrac{dR_t}{dt},\tfrac{dx_t}{dt})$, 
which avoids explicitly forming $\tfrac{d}{dt}A_{\theta}^{\mathrm{avg}}$.
We apply the same construction to translation with $B^v_{\theta}=t\,v_{\theta}^{\mathrm{avg}}$.

\paragraph{Small-$t$ consistency losses (implementation form).}
To match the implementation, we form the residuals in a scaled way and defer all divisions by $t$ to the final normalization step. Let $(t-s)$ denote the step size and define
\begin{align}
&A_{\theta}^{s\to t}:=(t-s)A_\theta= \frac{t-s}{t}B^A_{\theta},\nonumber\\
&v_{\theta}^{s\to t} :=(t-s)v_\theta=\frac{t-s}{t}B^v_{\theta},\nonumber 
\end{align}
where $B^A_{\theta}=tA^{\mathrm{avg}}_{\theta}$ and $B^v_{\theta}=t v^{\mathrm{avg}}_{\theta}$.
With the $(t-s)$-absorbed JVP outputs $(t-s)\,\tfrac{d}{dt}B^A_{\theta}$ and $(t-s)\,\tfrac{d}{dt}B^v_{\theta}$, we construct the scaled derivative targets
\begin{align}
\begin{cases}
 t\,\frac{d}{dt}A_{\theta}^{s\to t} &:= B^A_{\theta}+\mathrm{sg}\!\left((t-s)\,\frac{d}{dt}B^A_{\theta}-\frac{t-s}{t}B^A_{\theta}\right),\\
 t\,\frac{d}{dt}v_{\theta}^{s\to t} &:= B^v_{\theta}+\mathrm{sg}\!\left((t-s)\,\frac{d}{dt}B^v_{\theta}-\frac{t-s}{t}B^v_{\theta}\right).    
\end{cases}
\label{eq:scaled_targets}
\end{align}
where the JVP-derived directional derivatives $ (t-s)\,\tfrac{d}{dt}B^A_{\theta}$ and $(t-s)\,\tfrac{d}{dt}B^v_{\theta}$ are computed as a single Jacobian--vector product of $f_\theta(s,t,R_t,x_t)$:
\begin{align}
&(B^A_{\theta},B^v_{\theta},\ldots),\ \big((t-s)\,\tfrac{d}{dt}B^A_{\theta},\ (t-s)\,\tfrac{d}{dt}B^v_{\theta},\ldots\big)
\notag\\
&\qquad :=\ \mathrm{JVP}_{(\dot{R}_t,\dot{x}_t,\dot{t},\dot{s})}\big[f_\theta(s,t,R_t,x_t)\big].\label{eq:jvp_scaledB}
\end{align}
with the (step-size absorbed) input tangent
\begin{equation}
\left\{
\begin{aligned}
\dot{R}_t &= R_t\,\big((t-s)\,\Omega_t\big), \\
\dot{x}_t &= (t-s)\,v_t
\end{aligned}
\right.
\qquad
(\text{MF})
\end{equation}
or, for IMF where the instantaneous velocities are taken from a single shared forward at $(t,t)$,
\begin{equation}
\left\{
\begin{aligned}
\dot{R}_t &= R_t\,\big(\frac{t-s}{t}\,(B_{\theta}^{\mathrm{A}}(t,t,R_t,x_t))^\wedge\big), \\
\dot{x}_t &= \frac{t-s}{t}B^v_{\theta}(t,t,R_t,x_t)
\end{aligned}
\right.
\qquad
(\text{IMF}).
\end{equation}
The time tangents are $\dot{t}=(t-s)$ in the small-$t$ parameterization (otherwise $\dot{t}=1$), and $\dot{s}=0$.
The per-sample small-$t$ losses (summing over residues) are computed from $B^A_{\theta},B^v_{\theta}$ and the JVP-derived directional derivatives $(t-s)\,\tfrac{d}{dt}B^A_{\theta}$ and $(t-s)\,\tfrac{d}{dt}B^v_{\theta}$ as
{\footnotesize
\begin{align}
\boxed{\,
\begin{aligned}
\mathcal{L}_{\mathrm{rot}}
&:=\frac{1}{\max(t,t_{\varepsilon})^{2}}\sum_{i=1}^{N}
\left\{\begin{aligned}
&\left\|\mathrm{sg}\!\left(J\!\left(A_{\theta,i}^{s\to t}\right)\right)
\left(t\,\tfrac{d}{dt}A_{\theta,i}^{s\to t}\right)-t\,\omega_{t,i}\right\|_2^2,&\texttt{jmode}=J,\\
&\left\|t\,\tfrac{d}{dt}A_{\theta,i}^{s\to t}
-\mathrm{sg}\!\left(J^{-1}\!\left(A_{\theta,i}^{s\to t}\right)\right)t\,\omega_{t,i}\right\|_2^2,&\texttt{jmode}=J^{-1}.
\end{aligned}\right.\\
\mathcal{L}_{\mathrm{trans}}
&:=\frac{1}{\max(t,t_{\varepsilon})^{2}}\sum_{i=1}^{N}
\left\|t\,v_{t,i}-t\,\tfrac{d}{dt}v_{\theta,i}^{s\to t}\right\|_2^2.
\end{aligned}
\,}\label{eq:smallt_losses}
\end{align}}
This is algebraically equivalent to the unscaled MeanFlow objectives but avoids explicit $1/t$ factors during target construction; the only division by $t$ is the final $\max(t,t_{\varepsilon})^{-2}$ normalization applied after squaring the residuals.  The two \texttt{jmode} branches share the
same JVP \eqref{eq:jvp_scaledB} and scaled targets \eqref{eq:scaled_targets}, differing
only in the final residual.

\begin{algorithm}[t]
\caption{Stable X-prediction style Mean-flow training loss}
\label{alg:stable-derivative-B}
\begin{algorithmic}[1]
\Require State $(R_t,x_t)$; times $s<t$; instantaneous velocities $(\Omega_t,v_t)$ (MF)
or $(B^{A}_{\theta}(t,t),B^{v}_{\theta}(t,t))$ (IMF); network $f_\theta$; clamp
$t_\varepsilon$; Jacobian placement $\texttt{jmode}\in\{J,J^{-1}\}$
\State $\Delta t\gets t-s$
\State Construct the (step-size absorbed) input tangent:
\Statex \quad (MF) $\dot{R}_t\gets R_t(\Delta t\,\Omega_t)$, $\dot{x}_t\gets \Delta t\,v_t$
\Statex \quad (IMF) $\dot{R}_t\gets R_t(\frac{\Delta t}{t}\,(B^{A}_{\theta}(t,t))^\wedge)$, $\dot{x}_t\gets \frac{\Delta t}{t}B^{v}_{\theta}(t,t)$ See (Eq. \eqref{eq:Bt}).
\State Set $\dot{t}\gets \Delta t$ in the small-$t$ parameterization (otherwise $\dot{t}\gets 1$), and $\dot{s}\gets 0$.
\State Compute a single JVP of $f_\theta(s,t,R_t,x_t)$ along $(\dot{R}_t,\dot{x}_t,\dot{t},\dot{s})$ to obtain
\Statex \quad primals $(B^A_{\theta},B^v_{\theta})$ and directional derivatives $\big(\Delta t\,\tfrac{d}{dt}B^A_{\theta},\Delta t\,\tfrac{d}{dt}B^v_{\theta}\big)$  (Eq. \eqref{eq:jvp_scaledB})
\State Form the scaled derivative targets via Eq.~\eqref{eq:scaled_targets}.
\State $\mathcal{J}\gets\mathrm{sg}\big(J(A^{s\to t}_\theta)\big)$
\If{$\texttt{jmode}=J$}
  \State $\mathrm{res}_{\mathrm{rot}}\gets\mathcal{J}\big(t\,\tfrac{d}{dt}A^{s\to t}_{\theta}\big)-t\,\omega_t$
\Else
  \State $\mathrm{res}_{\mathrm{rot}}\gets t\,\tfrac{d}{dt}A^{s\to t}_{\theta}-\mathcal{J}^{-1}\big(t\,\omega_t\big)$
        \Comment{Remark~\ref{rmk:jacobian-inverse}}
\EndIf
\State $\mathrm{res}_{\mathrm{trans}}\gets t\,\tfrac{d}{dt}v^{s\to t}_{\theta}-t\,v_t$
       \Comment{$J\equiv I$ on the abelian branch}
       
\State Normalize at the end by $\max(t,t_\varepsilon)^{-2}$ (Eq.~\eqref{eq:smallt_losses})
\end{algorithmic}
\end{algorithm}
\begin{algorithm}[t]
\caption{Stable inference with $t_{\min}>0$ and endpoint prediction
(linear and exponential rotation schedules)}
\label{alg:stable-inference-tmin}
\begin{algorithmic}[1]
\Require Prior sample $(R_1^N,x_1^N)$; network $f_\theta$; steps $T$; $t_{\min}>0$;
         rotation schedule $\in\{\texttt{linear},\texttt{exp}\}$, exp rate $c>0$
\Ensure $(\hat{R}_0^N,\hat{x}_0^N)$
\State $\{t_i\}_{i=0}^{T-1}\gets \mathrm{linspace}(1,t_{\min},T)$
\State $(R,x)\gets (R_1^N,x_1^N)$
\For{$i=0,\ldots,T-2$}
  \State $t\gets t_i,\ \ s\gets t_{i+1},\ \ \Delta t\gets t-s$
  \State $(\hat{R}_0^\theta,\hat{x}_0^\theta)\gets f_\theta(s,t,R,x)$
  \State $u_\theta\gets \log\big((\hat{R}_0^\theta)^\top R\big)^{\vee}$
         \Comment{log-map from $R$ toward endpoint $\hat{R}_0^\theta$}
  \If{schedule $=\texttt{linear}$}
     \State $A^{\mathrm{avg}}_\theta\gets \tfrac{1}{t}\,u_\theta$
            \Comment{rate $1/t$ (remaining time)}
  \Else \Comment{\texttt{exp}}
     \State $A^{\mathrm{avg}}_\theta\gets c\,u_\theta$
            \Comment{constant rate $c$}
  \EndIf
  \State $v^{\mathrm{avg}}_\theta\gets \tfrac{1}{t}(x-\hat{x}_0^\theta)$
         \Comment{translation: linear in both schedules}
  \State $R\gets R\exp\!\big(-\Delta t\,(A^{\mathrm{avg}}_\theta)^\wedge\big)$
  \State $x\gets x-\Delta t\,v^{\mathrm{avg}}_\theta$
\EndFor
\State $(\hat{R}_0^\theta,\hat{x}_0^\theta)\gets f_\theta(0,t_{\min},R,x)$
\State $(\hat{R}_0^N,\hat{x}_0^N)\gets (\hat{R}_0^\theta,\hat{x}_0^\theta)$
\end{algorithmic}
\end{algorithm}

\subsection{Rescaled MeanFlow: training and inference pseudocode}
The rescaled MeanFlow training objective and the corresponding stable inference procedure are summarized in
Algorithm~\ref{alg:stable-derivative-B} and Algorithm~\ref{alg:stable-inference-tmin}, respectively.
In inference, in addition to the standard linear rotation schedule, we also use an exponential (\texttt{exp})
rotation schedule inspired by ReQFlow~\cite{yue2025reqflow}. The main idea is to redefine the effective
rotation step sizes so that steps are larger on the noise side ($t\approx 1$) and become smaller and more
fine-grained near the data side ($t\approx 0$), which empirically improves few-step generation stability.

\section{Stable Training Framework, Part 2: JVP-Free Training via SE(3) $\alpha$-Flow}\label{sec:alpha-flow}

The stabilization in Section~\ref{sec:stable-rescale} mitigates, but does not
remove, the core difficulty of the \emph{differential} consistency target: it still
requires the trajectory derivative $\tfrac{d}{dt}A^{\mathrm{avg}}_\theta$ via a
Jacobian--vector product (JVP) of $f_\theta$. Since
$A^{\mathrm{avg}}_\theta=\tfrac1t\log((\hat R_0^\theta)^{\top}R_t)^\vee$ already carries
a $1/t$ factor, differentiating it injects a second $1/t$, so a head error
$\varepsilon$ on the rotation output propagates to the target at order
$\varepsilon/t^2$. This is benign for the flat translation branch but dominates the
instability of the rotation branch, where it compounds with $\mathrm{SO}(3)$ curvature
and fragile forward-mode autodiff through the IPA trunk. We adopt the $\alpha$-Flow
framework of \citet{zhang2025alphaflow}, which replaces the differential target by a
two-evaluation, JVP-free consistency target, reducing the amplification to
$\mathcal{O}(\varepsilon/t)$.

\subsection{Model parameterization: endpoint and average-velocity heads}
\label{sec:af-param}
The network $f_\theta$, evaluated at input $(s,t,R_t,x_t)$, may be read either as an \emph{endpoint}
($x$-)predictor returning $(\hat R_0^\theta,\hat x_0^\theta)$, or as an
\emph{average-velocity} ($u$-)predictor returning $(A^{\mathrm{avg}}_\theta,
v^{\mathrm{avg}}_\theta)$ --- the mean body angular velocity and mean translational
velocity of the predicted geodesic from the endpoint to $(R_t,x_t)$. The two views are equivalent and can be transferred to each other:
\begin{align}
&\text{(rotation)}\quad
A^{\mathrm{avg}}_\theta=\tfrac1t\log\!\big((\hat R_0^\theta)^\top R_t\big)^\vee \Longleftrightarrow\;\;
\hat R_0^\theta=R_t\exp\!\big(-t\,A^{\mathrm{avg}\wedge}_\theta\big),
\label{eq:af-convert-rot}\\[2pt]
&\text{(translation)}\quad
v^{\mathrm{avg}}_\theta=\tfrac1t\big(x_t-\hat x_0^\theta\big) \Longleftrightarrow\;\;
\hat x_0^\theta=x_t-t\,v^{\mathrm{avg}}_\theta.
\label{eq:af-convert-trans}
\end{align}
Both maps depend on $(s,t,R_t,x_t)$; we suppress the arguments where clear, and write
$A^{\mathrm{avg}}_\theta(s,t,R_t,x_t)$, $\hat R_0^\theta(s,t,R_t,x_t)$ etc.\ when
needed. We develop the $\alpha$-Flow targets and losses in the average-velocity
variables $(A^{\mathrm{avg}}_\theta,v^{\mathrm{avg}}_\theta)$, where the construction is
most transparent (the \emph{regular-$t$} form), and switch to the endpoint variables
only to expose and cancel the small-denominator factors, giving the numerically
stabilized \emph{small-$t$} form. Throughout, the data-side instantaneous quantities
are the body angular velocity $\omega_t=\Omega(t,R_t)^\vee$ and the translational velocity
$v_t=v(t,x_t)=x_1-x_0$.

\subsection{Time grid}
Let $\alpha\in[\alpha_{\min},1]$ be the consistency-step ratio, floored by a small
$\alpha_{\min}>0$. With $0\le s\le t\le1$, define the intermediate time and step
\begin{align}
&m=\alpha s+(1-\alpha)t,\qquad \delta=t-m=\alpha(t-s),\nonumber\\
&m-s=(1-\alpha)(t-s),\qquad s\le m\le t,\nonumber
\end{align}
so that $(m-s)+\delta=t-s$. The stepped-back (intermediate) state, obtained by
integrating the shift velocities backward over $[m,t]$, is
\begin{align}
x_m=x_t-\delta\,\tilde v_t,\qquad R_m=R_t\exp(-\delta\,\tilde\omega_t^{\wedge}),\nonumber
\end{align}
with the shift velocities $\tilde v_t,\tilde\omega_t$ fixed below.

\subsection{Translation target}\label{sec:af-trans}
Following \citet{zhang2025alphaflow}, the average velocity over $[s,t]$ decomposes across
the split point $m$ into a \emph{near} segment $[m,t]$ and a \emph{far} segment
$[s,m]$. The near segment uses the shift velocity
$$
\tilde{v}_t=
\begin{cases}
v_t, & \text{(data velocity)},\\[2pt]
u^{x}_\theta(m,t,x_t)=v^{\mathrm{avg}}_\theta(m,t,x_t), & \text{(model prediction)},
\end{cases}
$$
and we adopt the data velocity $\tilde v_t=v_t$. The far segment uses the
stop-gradient model average velocity at the stepped-back state $x_m=x_t-\delta\tilde v_t$,
\begin{align}
v^{\mathrm{avg}}_\theta(s,m,x_m)=\tfrac1m\big(x_m-\hat x_0^\theta(s,m,R_m,x_m)\big),\nonumber
\end{align}
and the $\alpha$-Flow target is the convex combination
\begin{align}
v^{\mathrm{avg}}_{\mathrm{tgt}}
=\alpha\,\tilde{v}_t+(1-\alpha)\,v^{\mathrm{avg}}_\theta(s,m,x_m).
\label{eq:af-trans-target}
\end{align}

\paragraph{Regular-$t$ loss.} Regressing the model average velocity directly,
\begin{align}
\mathcal{L}^{\mathrm{reg}}_{\mathrm{trans}}
=\frac{1}{\alpha}\sum_{i=1}^{N}
\big\|v^{\mathrm{avg}}_{\theta,i}(s,t,x_t)-\mathrm{sg}\big(v^{\mathrm{avg}}_{\mathrm{tgt},i}\big)\big\|_2^2 .
\label{eq:af-trans-loss-reg}
\end{align}
The $1/\alpha$ prefactor matches the $\alpha\to0$ scaling of the residual
(Prop.~\ref{prop:af-meanflow-limit}); no further $s,t$-dependent weight is needed
because the abelian target is a plain convex combination.

\paragraph{Small-$t$ (stabilized) form.} The average-velocity variables carry an
explicit $1/t$ (via $v^{\mathrm{avg}}=\tfrac1t(x_t-\hat x_0)$) and $1/m$ (in the far
term), which amplify head error as $t,m\to0$. Deferring the $1/t$ into the displacement
variable $B^{x}_\theta=t\,v^{\mathrm{avg}}_\theta=x_t-\hat x_0^\theta$ removes the
model-side division, and the corresponding displacement target is
\begin{align}
B^{x}_{\mathrm{tgt}}=t\,v^{\mathrm{avg}}_{\mathrm{tgt}}
=\alpha t\,\tilde{v}_t+\frac{(1-\alpha)t}{m}\big(x_m-\hat x_0^\theta(s,m,R_m,x_m)\big),\nonumber
\end{align}
where $0<\tfrac{(1-\alpha)t}{m}\le1$ is numerically bounded. Regressing on
displacements with a floored normalizer gives
\begin{align}
\mathcal{L}_{\mathrm{trans}}=\frac{1}{\alpha\,\max(t,t_\varepsilon)^{2}}
\sum_{i=1}^{N}\big\|B^{x}_{\theta,i}-\mathrm{sg}(B^{x}_{\mathrm{tgt},i})\big\|_2^2 .
\label{eq:af-trans-loss}
\end{align}
Since $\|v^{\mathrm{avg}}_\theta-v^{\mathrm{avg}}_{\mathrm{tgt}}\|_2^2
=t^{-2}\|B^{x}_\theta-B^{x}_{\mathrm{tgt}}\|_2^2$, \eqref{eq:af-trans-loss} coincides
with \eqref{eq:af-trans-loss-reg} for $t\ge t_\varepsilon$; the floor only regularizes the vanishing-$t$ limit.

\subsection{Rotation target}\label{sec:af-rot}
We mirror the translation branch on $\mathrm{SO}(3)$, replacing vector addition by
group composition. For $0\le a\le b\le1$, the accumulated relative rotation is
\begin{align}
D(a,b)&:=\mathcal{T}\exp\!\Big(\int_a^b\Omega(\tau)\,d\tau\Big)\nonumber\\
&=\exp\!\big((b-a)\,A^{\mathrm{avg}}(a,b)^{\wedge}\big)\nonumber\\
&=R_a^\top R_b\in\mathrm{SO}(3),
\label{eq:af-Dab}
\end{align}
where the middle equality collapses the time-ordered exponential to a single generator
and is exact under the geodesic (constant-body-velocity) assumption, which holds for
the data path and for the $x$-prediction geodesic to $\hat R_0^\theta$.
\begin{proposition}[Interval additivity]\label{prop:af-additivity}
For any $s\le m\le t$, $\;D(s,t)=D(s,m)\,D(m,t)$.
\end{proposition}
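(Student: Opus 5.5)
The cleanest route is through the last expression in \eqref{eq:af-Dab}, $D(a,b)=R_a^\top R_b$. With that representation the claim is a telescoping identity:
\[
D(s,m)\,D(m,t)=R_s^\top R_m R_m^\top R_t=R_s^\top R_t=D(s,t),
\]
using $R_mR_m^\top=I$ for $R_m\in\SO$. So I would first justify that $D(a,b)=R_a^\top R_b$ holds for every $a\le b$ along the trajectory, not only on a geodesic. The geodesic case is just where the middle equality of \eqref{eq:af-Dab} collapses.

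To justify this, fix $a$ and set $Y(b):=R_a^\top R_b$. Differentiating with the left-trivialized ODE \eqref{eq:so3_ode} gives
\[
\dot Y(b)=R_a^\top R_b\,\Omega_b=Y(b)\,\Omega_b,\qquad Y(a)=I.
\]
The time-ordered exponential $\mathcal{T}\exp\big(\int_a^b\Omega_\tau\,d\tau\big)$ solves the same linear matrix ODE with the same initial value. This is exactly the derivative rule already used in the proof of Proposition~\ref{pro:Omega_identity}, see \eqref{eq:rhs_omega_avg}. By uniqueness of solutions to linear ODEs with continuous coefficients, the two agree for all $b\ge a$. Hence $D(a,b)=R_a^\top R_b$ in general.

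Alternatively, one could argue directly at the level of the time-ordered exponential. Both $b\mapsto D(s,m)D(m,b)$ and $b\mapsto D(s,b)$ solve $\dot Z=Z\Omega_b$ with value $D(s,m)$ at $b=m$, so uniqueness again gives the semigroup property. I would mention this as the intrinsic version, since it does not rely on the matrix representation of $R_t$.

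The only delicate point is notational rather than mathematical. The middle expression $\exp((b-a)A^{\mathrm{avg}}(a,b)^\wedge)$ must be read with $A^{\mathrm{avg}}$ defined through \eqref{eq:avg_omega}, so it is equal to $D(a,b)$ by definition. The additivity statement concerns the group elements $D$, not their logarithms. In particular, the generators do not add: $(t-s)A^{\mathrm{avg}}(s,t)\neq (m-s)A^{\mathrm{avg}}(s,m)+(t-m)A^{\mathrm{avg}}(m,t)$ in general, because of the BCH cross terms. This is why \eqref{eq:alphaflow-rot-tgt-main} composes the two factors before taking $\log$. I would close the proof with a remark to this effect.
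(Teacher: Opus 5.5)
Your proposal is correct and follows the paper's proof, which is exactly the telescoping $D(s,t)=R_s^\top R_t=(R_s^\top R_m)(R_m^\top R_t)=D(s,m)\,D(m,t)$ using $R_mR_m^\top=I$. Your ODE-uniqueness argument that $D(a,b)=R_a^\top R_b$ holds off the geodesic, and the intrinsic time-ordered-exponential version, spell out what the paper only asserts in passing: that the regrouping is independent of the geodesic assumption and reflects the multiplicativity of $\mathcal{T}\exp$.
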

\begin{proof}
$D(s,t)=R_s^\top R_t=(R_s^\top R_m)(R_m^\top R_t)=D(s,m)D(m,t)$. The regrouping is exact
(independent of the geodesic assumption) and is the non-commutative $\mathrm{SO}(3)$
analogue of Euclidean displacement additivity: addition becomes group multiplication,
ordered far ($[s,m]$) before near ($[m,t]$).
\end{proof}

Mirroring \eqref{eq:af-trans-target}, the near segment $[m,t]$ uses the shift angular
velocity
$$
\tilde\omega_t=
\begin{cases}
\omega_t=\Omega(t,R_t)^\vee, & \text{(data angular velocity)},\\[2pt]
A^{\mathrm{avg}}_\theta(m,t,R_t,x_t), & \text{(model prediction; Shortcut)},
\end{cases}
$$
and we adopt the data angular velocity $\tilde\omega_t=\omega_t$. The far segment $[s,m]$ uses
the stop-gradient model average angular velocity at the stepped-back state
$(R_m,x_m)$, $R_m=R_t\exp(-\delta\,\tilde\omega_t^{\wedge})$:
\begin{align}
A_m&:=A^{\mathrm{avg}}_\theta(s,m,R_m,x_m)\nonumber\\
&=\tfrac1m\log\!\big((\hat R_0^\theta(s,m,R_m,x_m))^\top R_m\big)^\vee .
\nonumber
\end{align}
Then $D(s,m)=\exp\!\big((m-s)A_m^{\wedge}\big)$ and
$D(m,t)=\exp\!\big(\delta\,\tilde\omega_t^{\wedge}\big)=R_m^\top R_t$, with $x_m=x_t-\delta\tilde v_t$
as in Section~\ref{sec:af-trans}.

\paragraph{Regular-$t$ target and loss.} By Proposition~\ref{prop:af-additivity} and
\eqref{eq:af-Dab}, $\log D(s,t)^\vee=(t-s)\,A^{\mathrm{avg}}_{\mathrm{tgt}}$, so the
average-velocity target is the group-composition analogue of
\eqref{eq:af-trans-target}:
\begin{align}
A^{\mathrm{avg}}_{\mathrm{tgt}}
=\frac{1}{t-s}\,\log\!\Big(
\underbrace{\exp\!\big((m-s)A_m^{\wedge}\big)}_{D(s,m)}\;
\underbrace{\exp\!\big(\delta\,\tilde\omega_t^{\wedge}\big)}_{D(m,t)}\Big)^{\!\vee}.
\label{eq:af-rot-target-reg}
\end{align}
The loss mirrors \eqref{eq:af-trans-loss-reg}:
\begin{align}
\mathcal{L}^{\mathrm{reg}}_{\mathrm{rot}}
=\frac{1}{\alpha}\sum_{i=1}^{N}
\big\|A^{\mathrm{avg}}_{\theta,i}(s,t)-\mathrm{sg}\big(A^{\mathrm{avg}}_{\mathrm{tgt},i}\big)\big\|_2^2 .
\label{eq:af-rot-loss-reg}
\end{align}
The scalar $\tfrac{1}{t-s}$ must remain \emph{outside} $\log(\exp\cdot\exp)$: because
$A_m$ and $\tilde\omega_t$ do not commute, folding it into the two exponentials would
rescale each segment angle and alter the Baker--Campbell--Hausdorff (BCH) cross term,
no longer yielding $\log D(s,t)$. This is the non-commutative counterpart of the
translation branch, where the same normalization is instead absorbed into the linear
convex weights $\alpha,\,1-\alpha$.

\paragraph{Small-$t$ (stabilized) form.} Two small-denominator factors appear: the
overall $1/t$ shared with translation, and the rotation-specific $1/(t-s)$ together
with $\log(\cdot)$ near $I$. Deferring the $1/t$ into the displacement
$B^{A}_\theta=t\,A^{\mathrm{avg}}_\theta=\log((\hat R_0^\theta)^\top R_t)^\vee$ gives the
displacement target $B^{A}_{\mathrm{tgt}}=t\,A^{\mathrm{avg}}_{\mathrm{tgt}}$,
\begin{align}
\boxed{\;
\begin{aligned}
B^{A}_{\mathrm{tgt}}
&=\frac{t}{t-s}\,
\log\!\Big(
\underbrace{\exp\!\big((\tfrac{m-s}{m}B^A_m)^{\wedge}\big)}_{D(s,m)}\;\underbrace{\exp\!\big(\delta\,\tilde\omega_t^{\wedge}\big)}_{D(m,t)}
\Big)^{\!\vee},\\
B^{A}_m
&=\log\!\big((\hat R_0^\theta(s,m,R_m,x_m))^\top R_m\big)^\vee.
\end{aligned}
\;}
\label{eq:af-rot-target}
\end{align}
using $(m-s)A_m=\tfrac{m-s}{m}B^A_m$. Both segment angles scale as $O(t-s)$ and are
bounded: $\tfrac{m-s}{m}B^A_m$ has $\tfrac{m-s}{m}\le1$ and main log $\le\pi$, while
$\delta\,\tilde\omega_t=\alpha(t-s)\,\omega_t$ with $\|\omega_t\|\le\pi$. Hence
$\|\log D(s,t)^\vee\|=O(t-s)$, the $\tfrac{1}{t-s}$ factor is cancelled at the same
rate, and
\begin{align}
\big\|B^{A}_{\mathrm{tgt}}\big\|
\le\pi\Big(\tfrac{(1-\alpha)t}{m}+\alpha t\Big)\le 2\pi, \nonumber
\end{align}
mirroring the bounded weights $\tfrac{(1-\alpha)t}{m},\,\alpha t\le1$ of the translation
target: the target is analytically free of small-denominator blow-up. For numerical
stability when $t-s$ is below machine tolerance $t_\varepsilon$ (where $\log(\cdot)$
near $I$ loses precision), we avoid forming $\tfrac{1}{t-s}$ and use the first-order
BCH limit, whose $O(t-s)$ correction is then negligible:
\begin{align}
B^{A}_{\mathrm{tgt}}=
\begin{cases}
\dfrac{t}{t-s}\,\log\!\big(D(s,m)\,D(m,t)\big)^\vee, & t-s\ge t_\varepsilon,\\[8pt]
t\big[(1-\alpha)A_m+\alpha\,\tilde\omega_t\big], & t-s< t_\varepsilon.
\end{cases}\nonumber
\end{align}
The rotation loss is then identical in form to \eqref{eq:af-trans-loss},
\begin{align}
\mathcal{L}_{\mathrm{rot}}=\frac{1}{\alpha\,\max(t,t_\varepsilon)^{2}}
\sum_{i=1}^{N}\big\|B^{A}_{\theta,i}-\mathrm{sg}(B^{A}_{\mathrm{tgt},i})\big\|_2^2 ,
\label{eq:af-rot-loss}
\end{align}
and, via $\|A^{\mathrm{avg}}_\theta-A^{\mathrm{avg}}_{\mathrm{tgt}}\|_2^2
=t^{-2}\|B^{A}_\theta-B^{A}_{\mathrm{tgt}}\|_2^2$, coincides with
\eqref{eq:af-rot-loss-reg} whenever $t\ge t_\varepsilon$ and $t-s\ge t_\varepsilon$.
The target requires one logarithm and two exponentials and no differentiation of
$f_\theta$.

\begin{remark}[Fallback at $\alpha=1$ and choice of normalization]\label{rmk:af-limits}
At $\alpha=1$ the intermediate time hits the far end, $m=s$ and $\delta=t-s$, so the
far segment vanishes ($D(s,m)=I$) and
$B^{A}_{\mathrm{tgt}}=\tfrac{t}{t-s}\log D(m,t)^\vee=t\,\tilde\omega_t$: the objective
reduces to flow matching \eqref{eq:fw-loss}, anchoring the branch to the data
velocity and supplying the boundary condition that prevents collapse. The opposite
limit $\alpha\to0$ recovers the differential MeanFlow objective and is analyzed in
Section~\ref{sec:af-meanflow-limit}.

The group composition $\log(\exp\cdot\exp)$ is essential throughout: the naive Lie
sum $(m-s)A_m+\delta\,\tilde\omega_t$ drops the BCH/Jacobian correction and is not
equivalent.
\end{remark}

\subsection{The $\alpha\to 0$ Limit Recovers MeanFlow}
\label{sec:af-meanflow-limit}

In this subsection we show that the $\alpha$-Flow loss converges to the differential
MeanFlow loss as $\alpha\to0$. We treat the rotation branch; the translation branch
is the abelian case ($J\equiv I$, no BCH correction) and reduces to Euclidean
MeanFlow \eqref{eq:mf_loss} verbatim.
\begin{remark}[Alphaflow normalization]\label{rmk:af-normalization}
If $\alpha$-Flow is to interpolate exactly between flow matching and MeanFlow, the
prefactor should be $1/\alpha^2$ rather than $1/\alpha$, since the residual
regressed in \eqref{eq:af-rot-loss-reg} is $O(\alpha)$
(Proposition~\ref{prop:af-meanflow-limit}). We therefore analyze
\begin{align}
\widetilde{\mathcal{L}}_{\mathrm{rot}}
:=\frac{1}{\alpha^{2}}\sum_{i=1}^{N}
\big\|A^{\mathrm{avg}}_{\theta,i}(s,t,R_t,x_t)
-\mathrm{sg}\big(A^{\mathrm{avg}}_{\mathrm{tgt},i}\big)\big\|_2^2 ,
\label{eq:af-rot-loss-theory}
\end{align}
with $A^{\mathrm{avg}}_{\mathrm{tgt}}$ from \eqref{eq:af-rot-target-reg}; the
$1/\alpha$ prefactor of \eqref{eq:af-rot-loss-reg} follows the style of
\citet{zhang2025alphaflow} and is what we train with. We also use only the first case
of \eqref{eq:af-rot-target}, the second being a small-$(t-s)$ fallback introduced
for numerical convenience.
\end{remark}

\begin{proposition}[$\alpha\to0$ recovers the MeanFlow loss]
\label{prop:af-meanflow-limit}
Let $g(\tau):=(\tau-s)A^{\mathrm{avg}}_\theta(s,\tau,R_\tau,x_\tau)$ denote the model
log-displacement along the interpolation path, so that
$g(t)=(t-s)A^{\mathrm{avg}}_\theta(s,t)$ and, by the product rule,
$\dot g(t)=A^{\mathrm{avg}}_\theta+(t-s)\tfrac{d}{dt}A^{\mathrm{avg}}_\theta$ is the
bracketed quantity of \eqref{eq:omgea_identity_main}. If $g$ is $C^2$ on $t$, then
the target \eqref{eq:af-rot-target-reg} expands as
\begin{align}
A^{\mathrm{avg}}_{\mathrm{tgt}}
=A^{\mathrm{avg}}_\theta(s,t)+\alpha\Big(J\big(g(t)\big)^{-1}\omega_t-\dot g(t)\Big)+O(\alpha^2)
\label{eq:af-tgt-expand}
\end{align}
and consequently, per residue,
\begin{align}
\widetilde{\mathcal{L}}_{\mathrm{rot}}=\big\|\dot g(t)-J^{-1}\omega_t\big\|_2^{2}+O(\alpha),
\label{eq:af-loss-limit}
\end{align}
i.e.\ exactly the residual of the $J^{-1}$ form \eqref{eq:mf-se3-loss2} of the MeanFlow identity.
\end{proposition}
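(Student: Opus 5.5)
The plan is to expand the two factors of the target \eqref{eq:af-rot-target-reg} to first order in $\alpha$, and then combine them with the BCH formula written in right-Jacobian form. Put $h:=t-s$, so that $m-s=(1-\alpha)h$ and $\delta=\alpha h$.

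The far factor is a model query, $(m-s)A_m=(m-s)A^{\mathrm{avg}}_\theta(s,m,R_m,x_m)$. The stepped-back state $(R_m,x_m)=(R_t\exp(-\delta\omega_t^\wedge),\,x_t-\delta v_t)$ agrees to first order in $\delta$ with the interpolation path evaluated at $\tau=m$. This holds exactly on the geodesic data path, where $\omega_t$ and $v_t$ are constant, so $R_m=R_{\tau=m}$ there. Hence $(m-s)A_m=g(m)=g(t-\alpha h)$. Using $g\in C^2$, Taylor expansion gives
\begin{align}
(m-s)A_m = g(t)-\alpha h\,\dot g(t)+O(\alpha^2).\nonumber
\end{align}
Here $\dot g$ is the total derivative along the path, and Proposition~\ref{pro:jvp_equivalent} identifies it with $A^{\mathrm{avg}}_\theta+h\tfrac{d}{dt}A^{\mathrm{avg}}_\theta$.

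Next I compose with the near factor $\exp(\alpha h\,\omega_t^\wedge)$. For this I use the right-perturbation form of BCH:
\begin{align}
\log\big(\exp(X^\wedge)\exp(\epsilon y^\wedge)\big)^\vee=X+\epsilon J(X)^{-1}y+O(\epsilon^2).\nonumber
\end{align}
This follows by differentiating $\epsilon\mapsto\log(\exp(X^\wedge)\exp(\epsilon y^\wedge))$ at $\epsilon=0$. The Wilcox computation in the proof of Proposition~\ref{pro:Omega_identity} gives $\frac{d}{d\epsilon}\exp(Z(\epsilon)^\wedge)=\exp(Z^\wedge)(J(Z)\dot Z)^\wedge$, and this must equal $\exp(X^\wedge)y^\wedge$, so $\dot Z=J(X)^{-1}y$. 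Invertibility of $J$ comes from Remark~\ref{rmk:jacobian-inverse}, on the principal branch $\|g(t)\|<\pi$.

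I apply this with $X=g(t)-\alpha h\dot g(t)+O(\alpha^2)$ and $\epsilon y=\alpha h\,\omega_t$. The perturbation of $X$ enters $J(X)^{-1}$ only at order $\alpha$, so its effect on the product is $O(\alpha^2)$. Together with smoothness of $\log$ away from the cut locus, this gives
\begin{align}
\log\big(D(s,m)D(m,t)\big)^\vee=g(t)+\alpha h\big(J(g(t))^{-1}\omega_t-\dot g(t)\big)+O(\alpha^2).\nonumber
\end{align}
Dividing by $h$ outside the logarithm, as \eqref{eq:af-rot-target-reg} prescribes, and using $g(t)/h=A^{\mathrm{avg}}_\theta(s,t)$, yields \eqref{eq:af-tgt-expand}.

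For the loss, I substitute into \eqref{eq:af-rot-loss-theory}. The residual $A^{\mathrm{avg}}_\theta-A^{\mathrm{avg}}_{\mathrm{tgt}}$ equals $-\alpha(J^{-1}\omega_t-\dot g)+O(\alpha^2)$. Dividing its square by $\alpha^2$ gives $\|\dot g(t)-J^{-1}\omega_t\|^2+O(\alpha)$ per residue, which is \eqref{eq:af-loss-limit}. Comparing with $A^{\mathrm{tgt}}_\theta$ in \eqref{eq:mf-se3-loss2}, this is exactly the $J^{-1}$-form MeanFlow residual.

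The main obstacle is making the $O(\alpha^2)$ remainders uniform. This requires a uniform bound on the second derivative of $g$, justifying that the stepped-back state lies on the path to the needed order (exact for the geodesic interpolant), and keeping $\|g\|$ bounded away from $\pi$ so that $\log$ and $J^{-1}$ stay smooth. I would state these as standing assumptions rather than derive them.
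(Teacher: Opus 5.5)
Your proposal is correct and follows essentially the paper's proof. Like the paper, you use the constant data velocity to identify the far factor exactly as $\exp(g(m)^\wedge)$, apply first-order BCH in right-Jacobian form, Taylor-expand $g(m)$ and $J(g(m))^{-1}$ about $t$, divide by $t-s$ outside the $\log$, and substitute into the $1/\alpha^2$-normalized loss. The differences are minor: you derive the BCH coefficient from the Wilcox identity rather than quoting the $\mathrm{ad}_X/(1-e^{-\mathrm{ad}_X})$ series, and you make explicit the uniformity of the remainders and the strict condition $\|g\|<\pi$ needed for smoothness of $\log$, which the paper leaves implicit.
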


\begin{proof}
Since \eqref{eq:af-rot-loss-theory} decouples across residues, we take $N=1$ and
drop the index $i$. Write $m=t-\delta$ with $\delta=\alpha(t-s)$, and $J:=J(g(t))$.

Since the data velocities $\omega_t$ and $v_t$ are constant along the interpolation
path (Appendix~\ref{sec:bg-fm}), the stepped-back state satisfies
$R_t\exp(-\delta\,\omega_t^{\wedge})=R_m$ and $x_t-\delta v_t=x_m$ \emph{exactly}. The
two factors of \eqref{eq:af-rot-target-reg} are therefore
$D(s,m)=\exp(g(m)^{\wedge})$ and $D(m,t)=\exp(\delta\,\omega_t^{\wedge})$, with $g$ the
same function appearing in the statement; this is what lets the finite difference
below capture the \emph{total} derivative \eqref{eq:dA/dt} rather than $\partial_t$
alone.

The first-order BCH formula gives
$\log(e^{X}e^{Y})=X+\frac{\mathrm{ad}_X}{1-e^{-\mathrm{ad}_X}}Y+O(\|Y\|^2)$. On
$\mathfrak{so}(3)$ one has $\mathrm{ad}_{\phi^{\wedge}}\psi^{\wedge}
=(\phi^{\wedge}\psi)^{\wedge}$, so in vector coordinates
$\big(\tfrac{1-e^{-\mathrm{ad}_X}}{\mathrm{ad}_X}\big)^{\vee}
=\int_0^1e^{-u\phi^{\wedge}}du=J(\phi)$ by \eqref{eq:right_jacobian}. This is
invertible: $J(\phi)$ is a polynomial in the skew matrix $\phi^{\wedge}$, so its
eigenvalues are $1$ and $\tfrac{1-e^{\mp i\theta}}{\pm i\theta}$ with
$\theta=\|\phi\|\le\pi$, all nonzero. Taking $X=g(m)^{\wedge}$ and
$Y=\delta\,\omega_t^{\wedge}=O(\alpha)$,
\begin{align}
\log\!\big(D(s,m)D(m,t)\big)^{\vee}
=g(m)+\delta\,J\big(g(m)\big)^{-1}\omega_t+O(\delta^2).
\nonumber
\end{align}
Substituting $g(m)=g(t)-\delta\dot g(t)+O(\delta^2)$ and
$J(g(m))^{-1}=J^{-1}+O(\delta)$, we have: 
\begin{align}
A^{\mathrm{avg}}_{\mathrm{tgt}}
&=\frac{1}{t-s}\log\!\big(D(s,m)\,D(m,t)\big)^{\vee}
\nonumber\\
&=\frac{1}{t-s}\Big(g(m)+\delta\,J\big(g(m)\big)^{-1}\omega_t+O(\delta^{2})\Big)
\nonumber\\
&=\frac{1}{t-s}\Big(g(t)-\delta\,\dot g(t)
+\delta\,J^{-1}\omega_t+O(\delta^{2})\Big)
\nonumber\\
&=\frac{g(t)}{t-s}
+\frac{\alpha(t-s)}{t-s}\Big(J^{-1}\omega_t-\dot g(t)\Big)
+\frac{O\big(\alpha^{2}(t-s)^{2}\big)}{t-s}
\nonumber\\
&=A^{\mathrm{avg}}_\theta(s,t)
-\alpha\Big(\dot g(t)-J^{-1}\omega_t\Big)+O(\alpha^{2}),
\nonumber
\end{align}
which is \eqref{eq:af-tgt-expand}. Substituting into \eqref{eq:af-rot-loss-theory},
\begin{align}
\widetilde{\mathcal{L}}_{\mathrm{rot}}
&=\frac{1}{\alpha^{2}}
\Big\|A^{\mathrm{avg}}_\theta(s,t)
-\mathrm{sg}\big(A^{\mathrm{avg}}_{\mathrm{tgt}}\big)\Big\|_2^{2}
\nonumber\\
&=\frac{1}{\alpha^{2}}
\Big\|A^{\mathrm{avg}}_\theta(s,t)
-\Big(A^{\mathrm{avg}}_\theta(s,t)-\alpha\,\big(\dot g(t)-J^{-1}\omega_t\big)+O(\alpha^{2})\Big)\Big\|_2^{2}
\nonumber\\
&=\frac{1}{\alpha^{2}}
\Big\|\alpha\,\big(\dot g(t)-J^{-1}\omega_t\big)+O(\alpha^{2})\Big\|_2^{2}
\nonumber\\
&=\big\|\big(\dot g(t)-J^{-1}\omega_t\big)\big\|_2^{2}+O(\alpha),
\nonumber
\end{align}
which is \eqref{eq:af-loss-limit}.
\end{proof}
\begin{remark}[Relation to the two loss forms]
\label{rmk:af-reweight}
The limit \eqref{eq:af-loss-limit} is the $J^{-1}$ form \eqref{eq:mf-se3-loss2},
which is exactly the objective used in Stage~2
(Appendix~\ref{sec:training-details}): the $\alpha$-Flow warm-up and the MeanFlow
phase therefore optimize the same target up to $O(\alpha)$, rather than switching
objectives mid-training. The $J$ form \eqref{eq:mf-se3-loss} shares its zero set
but differs by the reweighting $J^{\top}J$, which is the identity on the
translation branch and on the diagonal $s=t$.
\end{remark}

\begin{figure}[t]
  \centering
  \includegraphics[width=0.62\linewidth]{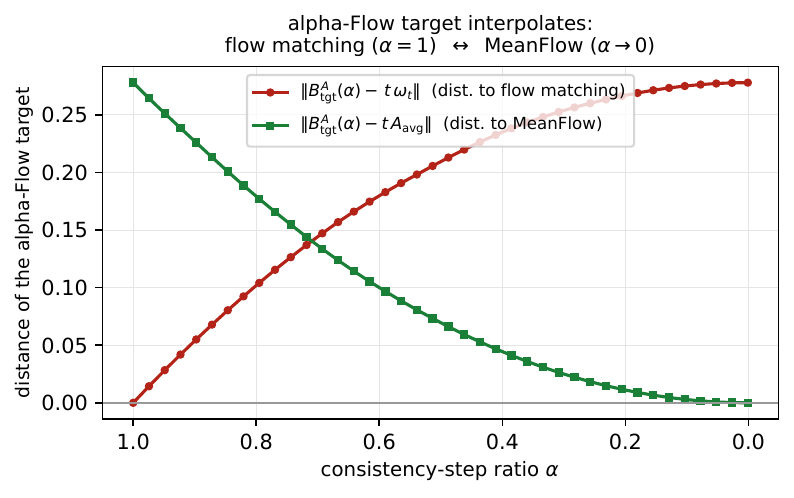}
  \caption{The $\alpha$-Flow target interpolates between flow matching and MeanFlow.
  On a smooth generator field $g(\tau)$ we form the rotation target
  $B^{A}_{\mathrm{tgt}}(\alpha)=tA^{\text{avg}}_{\mathrm{tgt}}$ of \eqref{eq:af-rot-target} over a range of $\alpha$
  and plot its distance to the two endpoints: to the flow-matching target
  $t\,\omega_t$ (red), which vanishes at $\alpha=1$ (Remark~\ref{rmk:af-limits}), and
  to the MeanFlow target $t\,A^{\mathrm{avg}}$ (green), which vanishes as
  $\alpha\to0$ (Proposition~\ref{prop:af-meanflow-limit}). The plot is drawn in the
  displacement variable $B^{A}=t\,A^{\mathrm{avg}}$ of Section~\ref{sec:af-rot}; the
  common factor $t$ affects both curves equally.}
  \label{fig:af-limit-numerical}
\end{figure}

We verify Proposition~\ref{prop:af-meanflow-limit} numerically. We take a smooth
generator field $g(\tau)$ and a data velocity $\omega_t$ drawn \emph{independently} of
$g$, so that $J(g(t))^{-1}\omega_t\neq\dot g(t)$ in general and the first-order term of
\eqref{eq:af-tgt-expand} is non-degenerate. For a range of $\alpha$ we form the
$\alpha$-Flow rotation target
$A^{\mathrm{avg}}_{\mathrm{tgt}}(\alpha)=\tfrac{1}{t-s}\log\!\big(
e^{g(m)^{\wedge}}e^{\delta\,\omega_t^{\wedge}}\big)^{\vee}$ with
$m=\alpha s+(1-\alpha)t$ and $\delta=\alpha(t-s)$, and compare it to the zeroth- and
first-order predictions of \eqref{eq:af-tgt-expand}, computing $\dot g$ by
forward-mode AD and $J(g(t))$ from its closed form. The
$\mathcal{O}(\alpha)$/$\mathcal{O}(\alpha^2)$ scalings
(Figure~\ref{fig:af-limit-numerical}) confirm the expansion and its coefficient;
consistently, $\widetilde{\mathcal{L}}_{\mathrm{rot}}$ converges to
$\lVert\dot g-J(g(t))^{-1}\omega_t\rVert_2^2$.

\begin{algorithm}[t]
\caption{JVP-free SE(3) $\alpha$-Flow training step}
\label{alg:af-training}
\begin{algorithmic}[1]
\Require $(R_t^N,x_t^N)$; times $s\le t$; data $(\Omega_t^N,v_t^N)$; head $f_\theta$;
         $\alpha\in[\alpha_{\min},1]$; clamp $t_\varepsilon$
\State $m\gets\alpha s+(1-\alpha)t$,\quad $\delta\gets t-m$,\quad $\omega_t:=\omega_t^N\gets(\Omega_t^N)^\vee$
\State $R_m^N\gets R_t^N\exp(-\delta\omega_t^{\wedge})$,\quad $x_m^N\gets x_t^N-\delta\,v_t^N$
\State $(\hat R_0^m,\hat x_0^m)\gets \mathrm{sg}\,f_\theta(s,m,R_m^N,x_m^N)$
\State $mA_m\gets\log((\hat R_0^m)^\top R_m^N)^\vee$,\quad $mu_m\gets x_m^N-\hat x_0^m$ \Comment{displacements}
\State $B^A_{\mathrm{tgt}}\gets\frac{t}{t-s}\log\big(\exp((\tfrac{m-s}{m}\,mA_m)^{\wedge})\exp(\delta\,\omega_t^{\wedge})\big)^\vee$
\State $B^x_{\mathrm{tgt}}\gets \alpha t\,v_t^N+\tfrac{(1-\alpha)t}{m}\,mu_m$
\State $(\hat R_0^\theta,\hat x_0^\theta)\gets f_\theta(s,t,R_t^N,x_t^N)$ \Comment{only graph path}
\State $B^A_\theta\gets\log((\hat R_0^\theta)^\top R_t^N)^\vee$,\quad $B^x_\theta\gets x_t^N-\hat x_0^\theta$
\State Form $\mathcal{L}_{\mathrm{rot}},\mathcal{L}_{\mathrm{trans}}$ (Eqs.~\eqref{eq:af-rot-loss},\eqref{eq:af-trans-loss});
\end{algorithmic}
\end{algorithm}


\section{Stable Training Framework, Part 3: Semigroup Loss of SE(3)-MeanFlow}
\label{sec:semigroup}

The MeanFlow identity \eqref{eq:omgea_identity_main} is the \emph{differential}
form of our average velocity: it is obtained by differentiating
\eqref{eq:avg_omega} in $t$, and its training target carries both a
Jacobian--vector product (for $\tfrac{d}{dt}A^{\mathrm{avg}}_\theta$) and the
right Jacobian $J$. In this section, we show that the \emph{same} definition
\eqref{eq:avg_omega} also admits a \emph{finite}, derivative-free characterization:
the average-velocity flow map is a genuine two-parameter semigroup on
$\SE$, and the consistency with this semigroup yields a JVP-free,
$J$-free training objective. Throughout, let $s\le m\le t\in[0,1]$. We write
\begin{align}
&\omega_t:=\Omega(t,R_t,x_t)^\vee=\log(R_0^\top R_1)^\vee,\nonumber\\
&v_t:=v(t,x_t)=x_1-x_0\nonumber
\end{align}
for the data-side instantaneous (constant body-frame) velocities used in flow matching
\eqref{eq:fw-loss}.

\subsection{The average-velocity flow map}

Recall from \eqref{eq:avg_omega} that the average angular velocity over $[s,t]$ is
defined by
\begin{align}
\exp\!\big((t-s)\,\Omega^{\mathrm{avg}}(s,t,R_t,x_t)\big)
&=\mathcal{T}\exp\!\Big(\int_s^t \Omega(\tau,R_\tau,x_\tau)\,d\tau\Big)=:D(s,t)\in\SO,
\label{eq:sg-D}
\end{align}
where the last equality $D(s,t)=R_s^\top R_t$ holds by integrating the
left-trivialized ODE $\dot R_\tau=R_\tau\Omega(\tau,R_\tau)$; this is exactly
\eqref{eq:af-Dab}. We write $A^{\mathrm{avg}}=(\Omega^{\mathrm{avg}})^\vee$, and define the
rotation log-displacement by
\begin{align}
&A^{s\to t}
:=(t-s)A^{\mathrm{avg}}(s,t,R_t,x_t)\in\mathbb{R}^{3},\nonumber\\
&\exp\!\big((A^{s\to t})^{\wedge}\big)
=D(s,t).\nonumber
\end{align}
For the translation branch on $\mathbb{R}^3$, the decoupled product metric \eqref{eq:dec_met} gives the
displacement $p(s,t):=x_t-x_s=(t-s)\,v^{\mathrm{avg}}(s,t,x_t)$. Based on the two notations of displacement above, we give the following definition.

\begin{definition}[Average-velocity flow map]\label{def:flowmap-sg}
For $s\le t$, we define $\Psi_{s\to t}:\SE\to\SE$ by
\begin{align}
\Psi_{s\to t}(R,x)
&:=\Big(\,R\,D(s,t),\ \ x+p(s,t)\,\Big)\nonumber\\
&=\Big(\,R\,\exp\!\big((A^{s\to t})^{\wedge}\big),\ \ x+(t-s)\,v^{\mathrm{avg}}(s,t)\,\Big).\nonumber
\end{align}
By construction, $\Psi_{s\to t}(R_s,x_s)=(R_t,x_t)$ along the interpolation path.
\end{definition}

\subsection{The semigroup property}
Having defined the average-velocity flow map, we next show that it forms a semigroup under time composition.
\begin{proposition}[Semigroup property of the average-velocity flow]\label{prop:semigroup}
For all $s\le m\le t$, the family $\{\Psi_{s\to t}\}_{0\le s\le t\le1}$ defined in Definition~\ref{def:flowmap-sg} satisfies
\begin{align}
\Psi_{m\to t}\circ\Psi_{s\to m}=\Psi_{s\to t},
\qquad
\Psi_{t\to t}=\mathrm{id}.
\label{eq:sg-flow}
\end{align}
Equivalently, in terms of the average velocities,
{\small
\begin{align}
&\textbf{(rotation)}\quad
\exp\!\big((A^{s\to m})^{\wedge}\big)\,\exp\!\big((A^{m\to t})^{\wedge}\big)
=\exp\!\big((A^{s\to t})^{\wedge}\big),
\label{eq:sg-rot-identity}\\[2pt]
&\textbf{(translation)}\quad
(m-s)\,v^{\mathrm{avg}}(s,m)+(t-m)\,v^{\mathrm{avg}}(m,t)=(t-s)\,v^{\mathrm{avg}}(s,t).
\label{eq:sg-trans-identity}
\end{align}}
\end{proposition}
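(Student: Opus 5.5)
The plan is to reduce everything to the endpoint representations already established: $D(a,b)=R_a^\top R_b$ from \eqref{eq:sg-D} (equivalently \eqref{eq:af-Dab}), and $p(a,b)=x_b-x_a$ on the translation branch. Once these are in place, the semigroup law is a consequence of telescoping. I will first prove the two componentwise identities \eqref{eq:sg-rot-identity} and \eqref{eq:sg-trans-identity}, and then assemble them into the flow-map statement \eqref{eq:sg-flow}.

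\textbf{Rotation.} By definition $\exp((A^{s\to m})^\wedge)=D(s,m)$, and likewise for the other two intervals. Proposition~\ref{prop:af-additivity} gives $D(s,m)D(m,t)=D(s,t)$, which is exactly \eqref{eq:sg-rot-identity}. I will stress that this step uses only the endpoint identity $D(a,b)=R_a^\top R_b$, which follows from integrating $\dot R_\tau=R_\tau\Omega_\tau$. It therefore holds for arbitrary (non-geodesic) angular velocity fields, because the time-ordered exponential composes multiplicatively: $\mathcal{T}\exp\int_s^t=\mathcal{T}\exp\int_s^m\cdot\mathcal{T}\exp\int_m^t$. The statement is about the group elements, not the logarithms $A^{s\to t}$ themselves. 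So there is no principal-branch issue and no BCH term to control.

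\textbf{Translation.} We have $(m-s)v^{\mathrm{avg}}(s,m)+(t-m)v^{\mathrm{avg}}(m,t)=(x_m-x_s)+(x_t-x_m)=x_t-x_s=(t-s)v^{\mathrm{avg}}(s,t)$, using \eqref{eq:u_Euclidean}.

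\textbf{Composition.} For any $(R,x)$, I expand
\[
\Psi_{m\to t}\big(\Psi_{s\to m}(R,x)\big)=\big(R\,D(s,m)D(m,t),\ x+p(s,m)+p(m,t)\big).
\]
By the two identities above, the right-hand side equals $\Psi_{s\to t}(R,x)$. The identity $\Psi_{t\to t}=\mathrm{id}$ follows from $D(t,t)=R_t^\top R_t=I$ and $p(t,t)=0$.

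\textbf{Main obstacle.} The main subtlety is the order of composition in the rotation branch. Since $\Psi$ acts by right multiplication, $\Psi_{s\to m}$ is applied first and contributes the left factor $D(s,m)$, and this ordering must match far-before-near. A second point needs care: $\Psi_{s\to t}$ is written with $D(s,t)$, which depends on the trajectory. I will make explicit that $D$ and $p$ are evaluated along a single fixed trajectory, so that the maps are well defined as functions of $(R,x)$ for that trajectory.
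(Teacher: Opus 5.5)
Your proposal is correct and follows the paper's proof essentially step for step. You get the rotation identity from Proposition~\ref{prop:af-additivity} via $D(a,b)=R_a^\top R_b$, which holds for arbitrary velocity fields by multiplicativity of the time-ordered exponential; you get the translation identity by telescoping displacements; and you compose the two factors under the decoupled product law to obtain $\Psi_{m\to t}\circ\Psi_{s\to m}=\Psi_{s\to t}$. Your explicit check of $\Psi_{t\to t}=\mathrm{id}$ and of the far-before-near ordering are small additions that the paper leaves implicit.
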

\begin{proof}
On $\mathrm{SO}(3)$, the transition operators compose by inserting
$R_m R_m^\top=I$:
\[
D(s,t)=R_s^\top R_t=(R_s^\top R_m)(R_m^\top R_t)=D(s,m)\,D(m,t),
\]
which is Proposition~\ref{prop:af-additivity}. (Equivalently, this is the
multiplicativity of the time-ordered exponential
$\mathcal{T}\exp(\int_s^t)=\mathcal{T}\exp(\int_s^m)\,\mathcal{T}\exp(\int_m^t)$,
so the statement holds for an arbitrary velocity field, not only for the geodesic
path.) Substituting $D(a,b)=\exp((A^{a\to b})^{\wedge})$ gives
\eqref{eq:sg-rot-identity}. On the flat translation factor, displacements add,
$x_t-x_s=(x_t-x_m)+(x_m-x_s)$, which is \eqref{eq:sg-trans-identity}. Combining the
two factors and using the product law \eqref{eq:dec_met} gives
$\Psi_{m\to t}(\Psi_{s\to m}(R,x))=(R\,D(s,m)D(m,t),\,x+p(s,m)+p(m,t))
=(R\,D(s,t),\,x+p(s,t))=\Psi_{s\to t}(R,x)$, i.e.\ \eqref{eq:sg-flow}.
\end{proof}

\begin{remark}[Where $\mathrm{SO}(3)$ curvature enters]\label{rmk:sg-bch}
The composition \eqref{eq:sg-rot-identity} is \emph{exact} and independent of
curvature: it is the associativity of group multiplication. Curvature appears only
if one tries to collapse the product of exponentials into a \emph{single} Lie-algebra
sum. By the Baker--Campbell--Hausdorff (BCH) formula,
\begin{align}
A^{s\to t}
&=\log\!\Big(\exp\!\big((A^{s\to m})^{\wedge}\big)\exp\!\big((A^{m\to t})^{\wedge}\big)\Big)^{\vee}\nonumber\\
&=A^{s\to m}+A^{m\to t}+\tfrac12\,A^{s\to m}\times A^{m\to t}+\cdots, \nonumber 
\end{align}
so the naive additive law $A^{s\to t}=A^{s\to m}+A^{m\to t}$ holds only when the two
axes are collinear ($A^{s\to m}\times A^{m\to t}=0$); in general it drops the BCH
cross terms. The translation identity \eqref{eq:sg-trans-identity} has no such
correction because $\mathbb{R}^3$ is abelian. This is the single place where the
rotation branch departs from the Euclidean MeanFlow semigroup.
\end{remark}
\subsection{The semigroup-consistency loss}

Proposition~\ref{prop:semigroup} characterizes the correct average-velocity field
\emph{without} any time derivative: the field is consistent iff its one-step
prediction on $[s,t]$ equals the two-step composition through any intermediate
$m\in[s,t]$. We turn this into a regression objective. Let $f_\theta$ output
$A^{\mathrm{avg}}_\theta(\cdot)$ and $v^{\mathrm{avg}}_\theta(\cdot)$ (in the
endpoint parameterization of Section~\ref{sec:stable-rescale},
$A^{\mathrm{avg}}_\theta(s,t,R_t)=\tfrac1t\log((\hat R_0^\theta)^\top R_t)^\vee$).
Given a triple $s<m<t$, form the intermediate state by stepping back the near
segment $[m,t]$,
\begin{align}
\begin{cases}
&R_m=R_t\exp\!\big(-(t-m)A^{\mathrm{avg}}_\theta(m,t,R_t,x_t)^{\wedge}\big),\\
&x_m=x_t-(t-m)\,v^{\mathrm{avg}}_\theta(m,t,x_t),
\end{cases}
\label{eq:sg-intermediate}
\end{align}
and define the composed (two-step) targets by \eqref{eq:sg-rot-identity}--\eqref{eq:sg-trans-identity}:
\begin{align}
A^{s\to t}_{\mathrm{tgt}}
&:=\log\!\Big(\underbrace{\exp\!\big((m-s)A^{\mathrm{avg}}_\theta(s,m,R_m,x_m)^{\wedge}\big)}_{D(s,m)}
\times\underbrace{\exp\!\big((t-m)A^{\mathrm{avg}}_\theta(m,t,R_t,x_t)^{\wedge}\big)}_{D(m,t)}\Big)^{\!\vee},
\label{eq:sg-rot-target}\\[2pt]
p^{s\to t}_{\mathrm{tgt}}
&:=(m-s)\,v^{\mathrm{avg}}_\theta(s,m,x_m)+(t-m)\,v^{\mathrm{avg}}_\theta(m,t,x_t).
\label{eq:sg-trans-target}
\end{align}
The model regresses its \emph{direct} one-step prediction on $[s,t]$ onto these
stop-gradient targets:
\begin{align}
\mathcal{L}^{\mathrm{semi}}_{\mathrm{rot}}
&=\mathbb{E}_{s<m<t}\,\Big\|(t-s)A^{\mathrm{avg}}_\theta(s,t,R_t,x_t)
-\mathrm{sg}\big(A^{s\to t}_{\mathrm{tgt}}\big)\Big\|_2^2,
\label{eq:sg-rot-loss}\\
\mathcal{L}^{\mathrm{semi}}_{\mathrm{trans}}
&=\mathbb{E}_{s<m<t}\,\Big\|(t-s)v^{\mathrm{avg}}_\theta(s,t,x_t)
-\mathrm{sg}\big(p^{s\to t}_{\mathrm{tgt}}\big)\Big\|_2^2.
\label{eq:sg-trans-loss}
\end{align}
The semigroup constraint alone admits trivial (collapsed) minimizers; it must be
anchored by the $s\to t$ boundary, where \eqref{eq:sg-D} degenerates to the
instantaneous velocity. This boundary term is exactly flow matching:
\begin{align}
\begin{cases}
\mathcal{L}^{\mathrm{bd}}_{\mathrm{rot}}
=\mathbb{E}_t\big\|A^{\mathrm{avg}}_\theta(t,t,R_t,x_t)-\omega_t\big\|_2^2,\\
\mathcal{L}^{\mathrm{bd}}_{\mathrm{trans}}
=\mathbb{E}_t\big\|v^{\mathrm{avg}}_\theta(t,t,x_t)-v_t\big\|_2^2.    
\end{cases}
\label{eq:sg-bd-loss}
\end{align}
The total objective is
\begin{align}
\mathcal{L}^{\mathrm{semi\text{-}MF}}
=\underbrace{\mathcal{L}^{\mathrm{bd}}_{\mathrm{rot}}+\mathcal{L}^{\mathrm{bd}}_{\mathrm{trans}}}_{\text{flow matching (anchor)}}
+\underbrace{\mathcal{L}^{\mathrm{semi}}_{\mathrm{rot}}+\mathcal{L}^{\mathrm{semi}}_{\mathrm{trans}}}_{\text{semigroup consistency}} .
\label{eq:sg-total}
\end{align}
Every term uses only forward evaluations of $f_\theta$ together with
$\exp/\log$ on $\mathrm{SO}(3)$ and addition on $\mathbb{R}^3$: there is no
Jacobian--vector product and no explicit right Jacobian $J$.

\begin{remark}[The normalization scalar must stay outside $\log(\exp\cdot\exp)$]
If one prefers to regress the average velocity $A^{\mathrm{avg}}_\theta(s,t)$ itself
rather than the log-displacement, the target is
$A^{\mathrm{avg}}_{\mathrm{tgt}}(s,t)=\tfrac{1}{t-s}A^{s\to t}_{\mathrm{tgt}}$ with
$A^{s\to t}_{\mathrm{tgt}}$ from \eqref{eq:sg-rot-target}. The scalar
$\tfrac{1}{t-s}$ must remain \emph{outside} $\log(\exp\cdot\exp)$: because the two
segment generators do not commute, absorbing it into the two exponentials would
rescale each segment angle and corrupt the BCH cross term, no longer producing
$\log D(s,t)$. On the translation branch the same normalization is harmlessly
absorbed into the linear weights. This is the finite-interval counterpart of the
observation made for \eqref{eq:af-rot-target}.
\end{remark}
\subsection{Consistency: zero loss implies exact reconstruction}

The boundary term fixes the instantaneous limit and the semigroup term propagates it
to all intervals; together they pin down the unique correct flow map. The following
is the finite (JVP-free) analogue of Proposition~\ref{pro:loss_zero}.

\begin{proposition}[Uniqueness of the semigroup minimizer]\label{prop:sg-consistency}
Suppose $A^{\mathrm{avg}}_\theta$ is continuous and, along the interpolation path,
satisfies the boundary condition $A^{\mathrm{avg}}_\theta(t,t,R_t,x_t)=\omega_t$ for all
$t$ together with the rotation semigroup identity
\begin{align}
&\exp\!\big((A^{s\to m}_\theta)^{\wedge}\big)\,\exp\!\big((A^{m\to t}_\theta)^{\wedge}\big)
=\exp\!\big((A^{s\to t}_\theta)^{\wedge}\big),\qquad \forall\,s\le m\le t,\nonumber 
\end{align}
where $A^{a\to b}_\theta:=(b-a)A^{\mathrm{avg}}_\theta(a,b,R_b,x_b)$. Then
$\exp((A^{s\to t}_\theta)^{\wedge})=R_s^\top R_t$ for all $s\le t$; in particular,
with $s=0,t=1$, $\hat R_0:=R_1\exp(-(A^{0\to1}_\theta)^{\wedge})=R_0$. The analogous
statement holds for translation with $v^{\mathrm{avg}}_\theta(t,t)=v_t$.
\end{proposition}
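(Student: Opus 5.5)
The plan is to turn the finite semigroup identity into the differential ODE already used in Proposition~\ref{pro:loss_zero}, and then conclude by ODE uniqueness. Fix $s$ and set $G(t):=\exp((A^{s\to t}_\theta)^\wedge)\in\SO$ for $t\ge s$. Then $G(s)=I$: the identity with $m=s=t$ gives $E^2=E$ with $E:=\exp((A^{s\to s}_\theta)^\wedge)$, and since $E$ is invertible this forces $E=I$. Alternatively, $A^{s\to s}_\theta=0\cdot A^{\mathrm{avg}}_\theta=0$ directly.

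For $h>0$, applying the semigroup identity with $m=t$ and endpoint $t+h$ gives
\[
G(t+h)=G(t)\exp\!\big((A^{t\to t+h}_\theta)^\wedge\big)=G(t)\exp\!\big(h\,A^{\mathrm{avg}}_\theta(t,t+h,R_{t+h},x_{t+h})^\wedge\big).
\]
Hence $G(t)^\top\big(G(t+h)-G(t)\big)/h=\big(\exp(hB_h^\wedge)-I\big)/h$, where $B_h:=A^{\mathrm{avg}}_\theta(t,t+h,R_{t+h},x_{t+h})$. By continuity of $A^{\mathrm{avg}}_\theta$ and of the path, $B_h\to A^{\mathrm{avg}}_\theta(t,t,R_t,x_t)=\omega_t$ as $h\downarrow0$, using the boundary condition. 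Since $(\exp(hB^\wedge)-I)/h\to B^\wedge$ uniformly for bounded $B$, the right derivative of $G$ satisfies $D^+G(t)=G(t)\,\omega_t^\wedge$.

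\textbf{Main obstacle.} Only a one-sided derivative is available directly, and we are not told $A^{\mathrm{avg}}_\theta$ is differentiable. I would first try to get the left derivative in the same way. Using $G(t)=G(t-h)\exp((A^{t-h\to t}_\theta)^\wedge)$, we get $G(t-h)=G(t)\exp(-hC_h^\wedge)$ with $C_h\to\omega_t$ by joint continuity on the diagonal. So the left derivative also equals $G(t)\omega_t^\wedge$, and $G$ is $C^1$ with $\dot G=G\,\omega_t^\wedge$ and $G(s)=I$. If joint continuity at $(t,t)$ is only one-sided, I would fall back on the standard lemma that a continuous function with continuous right derivative is $C^1$.

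The comparison curve $H(t):=R_s^\top R_t$ satisfies $H(s)=I$ and $\dot H=R_s^\top R_t\,\Omega_t=H\,\omega_t^\wedge$ along the geodesic interpolation. This is a linear ODE with continuous coefficients, so by uniqueness $G\equiv H$, i.e.\ $\exp((A^{s\to t}_\theta)^\wedge)=R_s^\top R_t$. Setting $s=0,t=1$ gives $R_1\exp(-(A^{0\to1}_\theta)^\wedge)=R_1R_1^\top R_0=R_0$.

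The translation statement follows the same steps in the abelian setting. Let $P(t):=(t-s)v^{\mathrm{avg}}_\theta(s,t)$. Additivity gives $P(t+h)-P(t)=h\,v^{\mathrm{avg}}_\theta(t,t+h)\to h\,v_t$, so $\dot P=v_t$ with $P(s)=0$. Therefore $P(t)=(t-s)v_t=x_t-x_s$, using that the data velocity is constant along the linear path.
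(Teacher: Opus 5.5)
Your proof is correct and follows the same route as the paper. Both combine the semigroup identity with the boundary condition and continuity to get a linear ODE for $\exp((A^{s\to t}_\theta)^\wedge)$, then match it to the ODE satisfied by $R_s^\top R_t$ and conclude by uniqueness. The differences are cosmetic: the paper fixes $t$ and differentiates in the left endpoint $s$, obtaining $\tfrac{d}{ds}G=-\omega_s^\wedge G$ with $G(t)=I$, whereas you fix $s$ and vary $t$. You also verify both one-sided derivatives, a step the paper glosses over.
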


\begin{proof}
Fix $t$ and set $G(s):=\exp((A^{s\to t}_\theta)^{\wedge})\in\mathrm{SO}(3)$, so
$G(t)=I$. For $h>0$ the semigroup identity gives
$G(s)=\exp((A^{s\to s+h}_\theta)^{\wedge})\,G(s+h)$, hence
\[
G(s+h)=\exp\!\big(-(A^{s\to s+h}_\theta)^{\wedge}\big)\,G(s).
\]
By the boundary condition and continuity,
$A^{s\to s+h}_\theta=h\,A^{\mathrm{avg}}_\theta(s,s+h)=h\,\omega_s+o(h)$, so
$\exp(-(A^{s\to s+h}_\theta)^{\wedge})=I-h\,\omega_s^{\wedge}+o(h)$ and therefore
\[
\frac{d}{ds}G(s)=-\,\omega_s^{\wedge}\,G(s),\qquad G(t)=I.
\]
On the other hand $D(s,t)=R_s^\top R_t$ obeys, using
$\dot R_s=R_s \omega_s^{\wedge}$ and skew-symmetry,
\begin{align}
&\frac{d}{ds}D(s,t)=\big(\partial_s R_s^\top\big)R_t=-\,\omega_s^{\wedge}R_s^\top R_t
=-\,\omega_s^{\wedge}D(s,t),\nonumber\\
&D(t,t)=I.    \nonumber
\end{align}

$G$ and $D(\cdot,t)$ solve the same linear ODE with the same terminal condition, so
by uniqueness $G(s)=D(s,t)=R_s^\top R_t$ for all $s\le t$. Taking $s=0,t=1$ gives
$\exp((A^{0\to1}_\theta)^{\wedge})=R_0^\top R_1$, i.e.\ $\hat R_0=R_0$. For
translation, the additive cocycle $p_\theta(s,t)$ with $p_\theta(t,t)$-derivative
$v^{\mathrm{avg}}_\theta(t,t)=v_t$ integrates to $p_\theta(s,t)=x_t-x_s$, giving
$\hat x_0=x_0$.
\end{proof}

Proposition~\ref{prop:sg-consistency} shows \eqref{eq:sg-total} is a valid training
objective: its global minimizer is the exact average-velocity field, and the two
ingredients are both necessary---the boundary term supplies the instantaneous data
velocity, and the semigroup term is the (curvature-exact) propagation rule that
extends it to every interval.

\subsection{Relation to the differential identity and to $\alpha$-Flow}

\begin{remark}[Equivalence with the right-Jacobian identity]\label{rmk:sg-equiv}
The semigroup loss \eqref{eq:sg-total} and the differential MeanFlow loss
\eqref{eq:mf-so3N-loss} have the same minimizer but realize the right Jacobian
differently. Differentiating the boundary-anchored semigroup identity
\eqref{eq:sg-rot-identity} in $t$ at $s\to t$ reproduces the differential identity
\eqref{eq:omgea_identity_main}, in which $J$ appears explicitly as the Fr\'echet
derivative of $\exp$. In the finite form, $J$ never appears as a separate factor:
its entire effect is absorbed into the BCH series of $\log(\exp\cdot\exp)$ in
\eqref{eq:sg-rot-target}. Concretely, a first-order BCH expansion of
\eqref{eq:sg-rot-target} reinstates $J$ and recovers \eqref{eq:mf-so3N-loss} in
gradient (cf.\ Remark~\ref{rmk:af-limits}). The two formulations are thus
gradient-equivalent to first order, but the finite form is JVP-free and, because all
segment angles are bounded by $\pi$, avoids the $\mathcal{O}(\varepsilon/t^2)$
amplification of the differential target discussed in Section~\ref{sec:alpha-flow}.
\end{remark}

\begin{remark}[$\alpha$-Flow as an instance]\label{rmk:sg-alpha}
The JVP-free $\alpha$-Flow objective of Section~\ref{sec:alpha-flow} is a particular
instantiation of \eqref{eq:sg-total}. Choosing the split point $m=\alpha s+(1-\alpha)t$,
replacing the near segment $[m,t]$ model prediction by the \emph{data} velocity
$\tilde\omega_t=\omega_t$ (so that $D(m,t)=\exp(\delta\,\omega_t^{\wedge})$ with $\delta=\alpha(t-s)$),
and keeping the far segment as the stop-gradient model evaluation, turns
\eqref{eq:sg-rot-target} into the $\alpha$-Flow target \eqref{eq:af-rot-target} and
\eqref{eq:sg-trans-target} into its translation counterpart. The limit $\alpha=1$
makes the far segment vanish and reduces \eqref{eq:sg-total} to the flow-matching
boundary \eqref{eq:sg-bd-loss}; the limit $\alpha\to0$ recovers, via BCH, the
right-Jacobian differential loss (Remark~\ref{rmk:af-limits}). The ``pure'' semigroup
form \eqref{eq:sg-rot-target}--\eqref{eq:sg-trans-target}, using the model on both
segments, corresponds to the Shortcut choice $\tilde\omega_t=u^{A}_\theta(m,t,R_t)$.
\end{remark}
\noindent\textbf{Connection to Riemannian MeanFlow semigroup.}
The semigroup objective \eqref{eq:sg-total} can be viewed as a variant of the semigroup formulation proposed in Riemannian MeanFlow~\cite{woo2026riemannian}: their construction is based on the endpoint (geodesic) loss, while ours is derived from and anchored by the flow-matching boundary condition \eqref{eq:sg-bd-loss}. We provide code for this semigroup objective in our repository; however, in our experiments we found that with a limited training budget (e.g., 10--20k steps) semigroup training yields weaker few-step improvements than our JVP-based MeanFlow loss, and therefore we do not use the semigroup loss in our final training recipe. We include it here for theoretical completeness.

\section{Quaternion Formulation of SE(3)-MeanFlow}
\label{sec:quaternion}

The SE(3)-MeanFlow objectives admit an equivalent formulation with unit quaternions
on $\mathbb{S}^3$, following the rotation parametrization of
ReQFlow~\cite{yue2025reqflow}. We derive it here for completeness; an
implementation is provided in our repository alongside the rotation-matrix
formulation used throughout the paper.

\subsection{Quaternion algebra and conventions}

We follow the standard conventions; see \citet{hanson2005visualizing} for a fuller
treatment. A quaternion is a pair
\begin{align}
q=(q_0,\mathbf{q})\in\mathbb{R}\times\mathbb{R}^3,\qquad \|q\|=1 ,\nonumber
\end{align}
with multiplication and inverse
\begin{align}
&q\otimes p=\big(q_0p_0-\mathbf{q}^\top\mathbf{p},\;
q_0\mathbf{p}+p_0\mathbf{q}+\mathbf{q}\times\mathbf{p}\big),\nonumber\\
&q^{-1}=(q_0,-\mathbf{q}).\nonumber
\end{align}
The unit quaternions form the manifold $\mathbb{S}^3$, a double cover of $\SO$: the
quaternions $q$ and $-q$ represent the same rotation
\begin{align}
&R(q)=\begin{bmatrix}
1-2(y^2+z^2) & 2(xy-wz) & 2(xz+wy)\\
2(xy+wz) & 1-2(x^2+z^2) & 2(yz-wx)\\
2(xz-wy) & 2(yz+wx) & 1-2(x^2+y^2)
\end{bmatrix}, \qquad q=(w,x,y,z).\nonumber
\end{align}
Recovering $q$ from $R$ is standard (trace-based) up to this sign; we fix it by
enforcing $w\ge0$.

\paragraph{Exponential and logarithm.}
We define $\mathrm{Exp}:\mathbb{R}^3\to\mathbb{S}^3$ with the half-angle absorbed,
\begin{align}
\mathrm{Exp}(\omega):=\Big(\cos\tfrac{\|\omega\|}{2},\;
\sin\tfrac{\|\omega\|}{2}\,\tfrac{\omega}{\|\omega\|}\Big),
\qquad
\mathrm{Exp}(0)=(1,\mathbf{0}),
\label{eq:q-exp}
\end{align}
and let $\log$ be its inverse on the hemisphere $w\ge0$, so
$\|\log(q)\|\le\pi$. With this convention the covering map is compatible with
the matrix exponential,
\begin{align}
R\big(\mathrm{Exp}(\omega)\big)=\exp(\omega^{\wedge}),
\label{eq:q-cover}
\end{align}
so $\omega$ carries the same axis-angle meaning as in the rest of the paper. (The
ReQFlow subsection above writes the same object as $\exp(\tfrac12\omega)$; the
factor of two is bookkeeping in where the half-angle is placed.) The tangent space
is $T_q\mathbb{S}^3=\{v\in\mathbb{R}^4:q^\top v=0\}$.

\paragraph{Kinematics.}
Let $\omega(t)\in\mathbb{R}^3$ be the \emph{body-frame} angular velocity, i.e.\ the
same quantity as in \eqref{eq:so3_ode}. The left-trivialized kinematics are
\begin{align}
\dot q_t=\tfrac12\,q_t\otimes(0,\omega(t)),
\label{eq:q-kinematics}
\end{align}
whose solution is the time-ordered exponential in $\mathbb{S}^3$,
\begin{align}
&q_t=q_s\otimes\mathcal{T}\mathrm{Exp}\Big(\int_s^t\omega(\tau)\,d\tau\Big),
\qquad\text{and}\qquad
q_t=q_s\otimes\mathrm{Exp}\big((t-s)\,\omega\big)\ \text{ if }\omega\text{ is constant}.
\nonumber
\end{align}
The half-angle in \eqref{eq:q-exp} absorbs the factor $\tfrac12$ of
\eqref{eq:q-kinematics}, so no stray factors of two appear below.

\paragraph{Interpolation and data velocity.}
Mirroring Appendix~\ref{sec:bg-fm}, for a data rotation $q_0$ and a prior rotation
$q_1$ (sign-aligned so that $q_0^\top q_1\ge0$, which selects the shorter arc) the
conditional path and its constant body angular velocity are
\begin{align}
q_t=q_0\otimes\mathrm{Exp}\big(t\,\omega_t\big),
\qquad
\omega_t:=\log\big(q_0^{-1}\otimes q_1\big)\in\mathbb{R}^3 .
\label{eq:q-interp}
\end{align}

\subsection{Average velocity and the MeanFlow identity}

Exactly as in \eqref{eq:avg_omega}, we define the average angular velocity over
$[s,t]$ through the time-ordered exponential. Writing the state as
$z_t=(x_t,q_t)\in\mathbb{R}^3\times\mathbb{S}^3$ to make the conditioning explicit,
\begin{equation}
\mathrm{Exp}\big((t-s)\,\omega^{\mathrm{avg}}(s,t,x_t,q_t)\big)
:=\mathcal{T}\mathrm{Exp}\Big(\int_s^t\omega(\tau,q_\tau,x_\tau)\,d\tau\Big)
= q_s^{-1}\otimes q_t.
\label{eq:q_avg_def}
\end{equation}

\begin{proposition}[Quaternion MeanFlow identity]
\label{prop:q-identity}
Differentiating \eqref{eq:q_avg_def} with respect to $t$ yields
{\small\begin{align}
&J\big((t-s)\,\omega^{\mathrm{avg}}\big)
\Big(\omega^{\mathrm{avg}}(s,t,x_t,q_t) +(t-s)\tfrac{d}{dt}\omega^{\mathrm{avg}}(s,t,x_t,q_t)\Big)=\omega_t ,
\label{eq:q_meanflow_identity}
\end{align}}
with $J$ the \emph{same} right Jacobian \eqref{eq:right_jacobian} as in the
rotation-matrix formulation, and
\begin{align}
\frac{d}{dt}\omega^{\mathrm{avg}}
=\frac{\partial}{\partial t}\omega^{\mathrm{avg}}
+\big\langle\nabla_q\omega^{\mathrm{avg}},\dot q_t\big\rangle
+\big\langle\nabla_x\omega^{\mathrm{avg}},\dot x_t\big\rangle .
\nonumber
\end{align}
\end{proposition}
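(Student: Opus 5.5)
The most economical route is to reduce the statement to the rotation-matrix identity already proved in Proposition~\ref{pro:Omega_identity}, using the covering map $R(\cdot):\mathbb{S}^3\to\SO$. I would not redo the Wilcox/Fr\'echet computation on $\mathbb{S}^3$.

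\textbf{Transferring the definition.} Apply $R(\cdot)$ to both sides of \eqref{eq:q_avg_def}. The covering map is a group homomorphism, so $R(q_s^{-1}\otimes q_t)=R(q_s)^\top R(q_t)$. By \eqref{eq:q-cover}, the left side becomes $\exp\big(((t-s)\omega^{\mathrm{avg}})^\wedge\big)$.

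\textbf{Matching the velocities.} Set $R_\tau:=R(q_\tau)$. I would check that $R_\tau$ solves the left-trivialized ODE \eqref{eq:so3_ode} with $\Omega_\tau=\omega(\tau)^\wedge$. Differentiate $R(q_\tau)$ using \eqref{eq:q-kinematics}. Equivalently, note that over a short step the quaternion solution is $q_\tau\otimes\mathrm{Exp}(h\omega(\tau)+o(h))$. Applying $R$ and \eqref{eq:q-cover} gives $R_{\tau+h}=R_\tau\exp(h\omega(\tau)^\wedge)+o(h)$.

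\textbf{Reducing to the rotation-matrix case.} Consequently, \eqref{eq:q_avg_def} is pushed forward exactly to \eqref{eq:avg_omega}, with $A^{\mathrm{avg}}=\omega^{\mathrm{avg}}$ and the same body velocity $\omega_t$. This uses the fact that both sides of \eqref{eq:q_avg_def} are determined by the path through $q_s^{-1}\otimes q_t$. Proposition~\ref{pro:Omega_identity} then yields $J\big((t-s)\omega^{\mathrm{avg}}\big)\tfrac{d}{dt}\big((t-s)\omega^{\mathrm{avg}}\big)=\omega_t$. The product rule, as in Proposition~\ref{pro:jvp_equivalent}, expands the left-hand derivative into $\omega^{\mathrm{avg}}+(t-s)\tfrac{d}{dt}\omega^{\mathrm{avg}}$, which gives \eqref{eq:q_meanflow_identity} with the same right Jacobian \eqref{eq:right_jacobian}.

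\textbf{The total derivative.} The displayed formula for $\tfrac{d}{dt}\omega^{\mathrm{avg}}$ is the chain rule along $t\mapsto(t,q_t,x_t)$, exactly as in Proposition~\ref{pro:dA/dt}. Here $\mathbb{S}^3$ is an embedded submanifold of $\mathbb{R}^4$ and $\dot q_t\in T_{q_t}\mathbb{S}^3$, so the ambient gradient pairing with $\dot q_t$ equals the differential.

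\textbf{Main obstacle.} The delicate point is the branch and sign bookkeeping for the double cover. I must ensure that the $\omega^{\mathrm{avg}}$ recovered on $\SO$ coincides with the quaternion one, rather than differing by a shift of $2\pi$ in angle (the $-q$ ambiguity). I would handle this by restricting to the principal branch $\|(t-s)\omega^{\mathrm{avg}}\|\le\pi$, where $\mathrm{Exp}$ onto the hemisphere $w\ge0$ and $\exp$ onto $\SO$ are compatible local diffeomorphisms. Both $\log$ maps are then smooth and agree, so differentiating in $t$ is legitimate. This is the same regime in which Remark~\ref{rmk:jacobian-inverse} guarantees that $J$ is invertible.

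If a self-contained argument is preferred, the fallback is to differentiate \eqref{eq:q_avg_def} directly. The right-hand side gives $\tfrac12(q_s^{-1}\otimes q_t)\otimes(0,\omega_t)$. For the left-hand side, the derivative of $\mathrm{Exp}$ should be identified with $\tfrac12\mathrm{Exp}(a)\otimes(0,J(a)\dot a)$ using the standard left-trivialized differential of the exponential. Cancelling the invertible factor $q_s^{-1}\otimes q_t$ then gives the identity.
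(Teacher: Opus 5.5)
Your proposal is correct and follows the paper's own argument. You push \eqref{eq:q_avg_def} forward through the covering homomorphism $R(\cdot)$, which acts as the identity on Lie algebras (this is your velocity-matching step), recover \eqref{eq:avg_omega}, and invoke Propositions~\ref{pro:Omega_identity} and~\ref{pro:dA/dt}. Your branch concern is milder than you suggest: that derivation differentiates $\exp$ rather than $\log$, so it holds for any differentiable $\omega^{\mathrm{avg}}$ satisfying the pushed-forward relation, and \eqref{eq:q-cover} guarantees that relation on every branch (no need to match the two $\log$ maps, which are in any case discontinuous at angle exactly $\pi$).
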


\begin{proof}
The covering map $\mathbb{S}^3\to\SO$ of \eqref{eq:q-cover} is a two-to-one Lie group
homomorphism and a local diffeomorphism, and under the identification
$(0,\omega)\leftrightarrow\omega^{\wedge}$ it induces the identity on Lie algebras
($\mathfrak{su}(2)\cong\mathfrak{so}(3)$). Applying it to \eqref{eq:q_avg_def}
returns \eqref{eq:avg_omega} verbatim, so every step of the proofs of
Propositions~\ref{pro:Omega_identity} and~\ref{pro:dA/dt} carries over unchanged.
\end{proof}

\begin{remark}[Quaternions do not remove the right Jacobian]
\label{rmk:q-no-free-lunch}
Since the two Lie algebras are isomorphic, the quaternion identity
\eqref{eq:q_meanflow_identity} is \emph{identical} to the rotation-matrix identity
\eqref{eq:omgea_identity_main}, right Jacobian included. Changing the
representation therefore cannot simplify the MeanFlow target; it can only change
the numerical conditioning of $\mathrm{Exp}$, $\log$ and the
Jacobian--vector product used to evaluate it.
\end{remark}

\subsection{Parametrization, loss and inference}

As in Section~\ref{sec:stable-rescale}, the network is an endpoint predictor
returning $(\hat q_0^\theta,\hat x_0^\theta)$, converted to the average-velocity
head by
\begin{align}
\omega^{\mathrm{avg}}_\theta=\tfrac1t\,\log\big((\hat q_0^\theta)^{-1}\otimes q_t\big)
\iff
\hat q_0^\theta=q_t\otimes\mathrm{Exp}\big(-t\,\omega^{\mathrm{avg}}_\theta\big).
\nonumber
\end{align}
Regressing the left-hand side of \eqref{eq:q_meanflow_identity} onto the data
velocity $\omega_t$ of \eqref{eq:q-interp} gives the rotation loss
\begin{align}
\mathcal{L}^{\mathbb{S}^3}_{\mathrm{rot}}
=\mathbb{E}_{s<t,\,q(z_0,z_1)}\Big[\big\|\,J\big((t-s)\omega^{\mathrm{avg}}_\theta\big)\Big(\omega^{\mathrm{avg}}_\theta+(t-s)\,\mathrm{sg}\big(\tfrac{d}{dt}\omega^{\mathrm{avg}}_\theta\big)\Big)-\omega_t\,\big\|_2^2\Big],
\label{eq:q-loss}
\end{align}
the exact analogue of \eqref{eq:mf-se3-loss}; the translation branch and the
$\alpha$-Flow variant of Section~\ref{sec:alpha-flow} are unchanged, since neither
touches the rotation representation. Inference steps backwards by
$q_s\gets q_t\otimes\mathrm{Exp}\big(-(t-s)\,\omega^{\mathrm{avg}}_\theta\big)$.

\subsection{Practical note}
\label{sec:q-practical}

Empirically the quaternion implementation consistently underperformed the
rotation-matrix one, and we use the latter for every result in this paper. Two
causes are consistent with what we observed. First, the surrounding pipeline
(dataset, IPA trunk, auxiliary losses, evaluation) operates on rotation matrices,
so the quaternion path inserts repeated matrix$\,\leftrightarrow\,$quaternion
conversions whose error accumulates. Second, and more specific to MeanFlow, the
double cover $q\sim-q$ means the enforced sign convention $w\ge0$ can flip along a
trajectory; the endpoint prediction $\hat q_0^\theta$ then jumps discontinuously,
and the forward-mode derivative $\tfrac{d}{dt}\omega^{\mathrm{avg}}_\theta$ in
\eqref{eq:q-loss} is corrupted at exactly those steps. Rotation matrices have no
such ambiguity. Together with Remark~\ref{rmk:q-no-free-lunch}---the quaternion
target is not analytically simpler---this left no reason to prefer the quaternion
branch, and we report no quaternion-based results.

\section{Training Curriculum: Pre-training and Post-training}
\label{sec:training-details}

Our final model is obtained in three phases: a two-stage \emph{pre-training}
that builds a strong multi-step backbone generator, followed by a
\emph{post-training} (rectification) stage that sharpens few-step generation.
All phases share the same network (Section~\ref{sec:architecture}) and the same
decoupled $\SE=\SO\times\mathbb{R}^3$ objective; they differ only in which
consistency target is used and in how the data--noise pairs are coupled.

\subsection{Stage 1: JVP-free $\alpha$-Flow warm-up}
\label{sec:stage1}

Training the endpoint-parameterized MeanFlow target directly from initialization
is fragile: as discussed in Sections~\ref{sec:stable-rescale}
and~\ref{sec:alpha-flow}, the differential rotation target amplifies head error
as $\mathcal{O}(\varepsilon/t^2)$ and requires a forward-mode derivative (JVP)
through the IPA trunk. We therefore warm up with the JVP-free $\alpha$-Flow
objective of Section~\ref{sec:alpha-flow}, annealing the consistency-step ratio
$\alpha$ from a cap $\alpha_{\max}=1$ toward a floor $\alpha_{\min}=0.1$ along a
logistic schedule:
\begin{align}
&\alpha(k)=
\begin{cases}
\alpha_{\max}, & k\le k_s,\\[2pt]
\alpha_{\min}+(\alpha_{\max}-\alpha_{\min})\,\sigma_\gamma(\tau_k), & k_s<k<k_e,\\[2pt]
\alpha_{\min}, & k\ge k_e,
\end{cases}\\
&\qquad \text{where }\sigma_\gamma(\tau)=\frac{1}{1+e^{\,\gamma(\tau-\tfrac12)}},\nonumber\\
&\tau_k=\frac{k-k_s}{k_e-k_s},
\label{eq:alpha-schedule}
\end{align}
where $k$ is the optimizer step and $\tau_k\in[0,1]$ is the normalized progress
through the anneal window. Here $k_s$ is the hold phase---the ratio is pinned at
the cap for the first $k_s$ steps---$k_e$ is the step by which the floor is
reached, and $\gamma$ sets the steepness of the logistic transition, centered at
the window midpoint $k_s+\tfrac12(k_e-k_s)$. We use $k_s\!=\!2000$,
$k_e\!=\!150\mathrm{k}$, and $\gamma\!=\!8$.

By Remark~\ref{rmk:af-limits}, $\alpha=1$ reduces the objective exactly to flow
matching~\eqref{eq:fw-loss}, so during the hold phase training is anchored to the
data velocity and cannot collapse; decreasing $\alpha$ thereafter progressively
injects the few-step average-velocity consistency that underlies fast sampling.
The anneal was scheduled over $150\mathrm{k}$ steps, but we stopped Stage~1 early
at $100\mathrm{k}$---where $\alpha\!\approx\!0.29$---because validation
designability had already plateaued; $\alpha$ therefore never reaches its floor
$\alpha_{\min}=0.1$ during pre-training. This checkpoint initializes Stage~2.
Full settings are in Table~\ref{tab:train-config}.

\paragraph{Self-conditioning.}
In all phases we use $50\%$ self-conditioning, following
FrameFlow~\cite{frameflow}/ReQFlow~\cite{yue2025reqflow}. With probability $0.5$ a
detached preliminary forward pass produces an endpoint prediction
$\hat x_0^\theta$ whose pairwise $\mathrm{C}_\alpha$ distogram is appended as an
extra edge feature to a second, gradient-carrying forward pass; the remaining
half of each batch passes zeros in that slot. At inference the previous step's
endpoint prediction is used as the self-conditioning input
(Algorithm~\ref{alg:stable-inference-tmin}). This improves sample quality at no
additional inference cost.

\paragraph{Global-OT coupling.}
Both pre-training stages instantiate the mini-batch OT coupling of
Appendix~\ref{sec:bg-ot}, but solve a single plan over the \emph{entire} global
batch rather than one plan per rank. At each step a
\texttt{torch.distributed.all\_gather} assembles all $M=\sum_{r=1}^{W} B_r$
backbones across the $W$ ranks, and the independent data--noise pairing is
replaced by the assignment
\begin{align}
\pi^\star=\arg\min_{\pi\in\mathcal{S}_M}\sum_{k=1}^{M}
c\big(z_0^{k},\,z_1^{\pi(k)}\big),
\label{eq:global-ot}
\end{align}
with $c$ the decoupled $\SE^N$ transport cost of Appendix~\ref{sec:bg-ot} and
$(\lambda_R,\lambda_x)=(0.5,0.5)$. The exact Hungarian assignment is solved on
rank~0 with POT~\cite{flamary2021pot} and $\pi^\star$ broadcast back; the
$O(M^3)$ solve is negligible against a forward/backward pass at these batch
sizes. Solving \eqref{eq:global-ot} once over all $M$ backbones uses the full
pairing information, whereas $W$ per-rank plans search a block-diagonal subset
of $\mathcal{S}_M$ and under-use it by a factor $W$; empirically the global
coupling roughly halves the early-plateau time on SCOPe. 
Since mini-batch OT is a biased estimate of the true coupling whose bias decreases with batch size, the global plan yields a straighter induced path than $W$ independent per-rank plans at the same per-GPU memory cost.

Post-training disables
it (its self-reflow pairs are already matched; Section~\ref{sec:posttrain}).

\subsection{Stage 2: endpoint $+$ MeanFlow objective}
\label{sec:stage2}

Starting from the Stage-1 checkpoint, we continue with the endpoint-anchored
small-$t$ MeanFlow objective for a maximum budget of $10\mathrm{k}$ steps, and
select the released checkpoint at $6.1\mathrm{k}$ steps by validation
designability. We retain the $50\%$ self-conditioning and the global-OT coupling
of Stage~1; the only changes are the switch to the differential MeanFlow loss and
the added endpoint anchor. The loss combines three terms (see \eqref{eq:stage2-loss}). The endpoint distance is defined as: 
{\small
\begin{align}
&\mathcal{L}_{\mathrm{end}}
:=\sum_{i=1}^N\Big[\,
D_{\SO}^{2}\!\big(R^{i}_{\theta,0},\,R^{i}_{0}\big)
+\big\|x^{i}_{\theta,0}-x^{i}_{0}\big\|_2^{2}
\,\Big],\label{eq:endpoint-loss}\\
&D_{\SO}(R,Q):=\big\|\log(R^{\top}Q)^{\vee}\big\|_2
=\arccos\!\Big(\tfrac{\operatorname{tr}(R^{\top}Q)-1}{2}\Big),\nonumber
\end{align}
}

\begin{remark}
We emphasize that the split of the training budget between the $\alpha$-Flow
warm-up and the MeanFlow stage is a design choice rather than a requirement: the
two can be freely re-balanced (e.g.\ a shorter warm-up with a longer MeanFlow
phase). Because protein-scale training is expensive, we do not sweep this ratio
at the protein scale; on the low-dimensional $\mathrm{SO}(3)^2$ benchmark
(Appendix~\ref{sec:toy}) different warm-up/MeanFlow ratios reach the few-step
regime with little difference, so we simply report the single configuration used
for the released checkpoint.
\end{remark}


\paragraph{Loss normalization and stability.}
Our optional loss-magnitude normalization (\texttt{our\_loss\_norm}) is
\emph{disabled} in all phases. For stability we instead rely on three
mechanisms: a global gradient-norm clip at $1.0$, per-residue rotation- and
translation-loss clamps ($50$ and $5$), and the MeanFlow near-data reweighting
$1/\max(t,t_\varepsilon)^2$ ($t_\varepsilon=0.1$, Stage~2 onward). The rotation
and translation branches are weighted by $(w_R,w_x)=(1.0,1.0)$.

\begin{table}[t]
\centering
\begingroup
\small
\setlength{\tabcolsep}{8pt}
\renewcommand{\arraystretch}{1.15}
\begin{tabular}{@{}lccc@{}}
\toprule
Term & Raw & Weight & Weighted \\
\midrule
MeanFlow rot ($\mathcal{L}_{\mathrm{rot}}$)     & $3.87$ & $\times 0.05$ & $0.193$ \\
MeanFlow trans ($\mathcal{L}_{\mathrm{trans}}$) & $2.63$ & $\times 0.05$ & $0.131$ \\
\textbf{MeanFlow total}                         & $\mathbf{6.50}$ & & $\mathbf{0.324}$ \\
\midrule
Endpoint rot                                    & $0.136$ & $\times 1.0$ & $0.136$ \\
Endpoint trans                                  & $0.060$ & $\times 1.0$ & $0.060$ \\
\textbf{Endpoint total}                         & $\mathbf{0.196}$ & & $\mathbf{0.196}$ \\
\bottomrule
\end{tabular}
\endgroup
\caption{Per-term loss magnitudes at $\lambda_{\mathrm{MF}}=0.05$ (median over
$24$ SCOPe training batches; \texttt{our\_loss\_norm} disabled, right-Jacobian
inverse form). The raw MeanFlow residual is $\sim\!33\times$ larger than the endpoint
term---owing to the near-data reweighting
$1/\max(t,t_\varepsilon)^2$---and $\lambda_{\mathrm{MF}}=0.05$ brings the two to
the same order ($\approx 1.6{:}1$). }
\label{tab:loss-magnitudes}
\end{table}
\paragraph{Total objective.}
The Stage-2 loss is the weighted sum
\begin{align}
\lambda_{\mathrm{end}}\,\mathcal{L}_{\mathrm{end}}
+\lambda_{\mathrm{MF}}\,\big(\mathcal{L}^{\text{MF}}_{\mathrm{rot}}+\mathcal{L}^{\text{MF}}_{\mathrm{trans}}\big)
+\lambda_{\mathrm{aux}}\,\mathbf{1}[t<0.75]\,\mathcal{L}_{\mathrm{aux}},
\label{eq:stage2-loss}
\end{align}
with $(\lambda_{\mathrm{end}},\lambda_{\mathrm{MF}},\lambda_{\mathrm{aux}})=(1.0,\,0.05,\,2.0)$;
the endpoint term anchors the model and stabilizes the MeanFlow loss, and all
remaining settings are listed in Table~\ref{tab:train-config}. Here
$\lambda_{\mathrm{MF}}=0.05$ is a scale normalizer rather than a tuned trade-off.
Because the loss-magnitude normalization is disabled in all phases (see above),
the terms of \eqref{eq:stage2-loss} enter at their raw magnitudes, and these
differ by more than an order of magnitude: the small-$t$ MeanFlow residuals
\eqref{eq:smallt_losses} carry the near-data reweighting
$1/\max(t,t_\varepsilon)^{2}$, which reaches $t_\varepsilon^{-2}=100$ for
$t\le t_\varepsilon=0.1$, whereas the endpoint term is unweighted. With
$\lambda_{\mathrm{MF}}=0.05$ the weighted MeanFlow contribution is comparable in
magnitude to $\lambda_{\mathrm{end}}\mathcal{L}_{\mathrm{end}}$ throughout
training---a ratio of roughly $1.6{:}1$ (Table~\ref{tab:loss-magnitudes}). The
small numerical value therefore reflects the scale of the raw residual, not a
small role in the objective; equivalently, one may enable the loss normalization
and set $\lambda_{\mathrm{MF}}=1$.

\begin{table*}[t]
\centering
\small
\setlength{\tabcolsep}{4pt}
\renewcommand{\arraystretch}{1.03}
\begin{tabular}{lcc}
\toprule
Hyperparameter & Stage 1 ($\alpha$-Flow) & Stage 2 and Post-train (endpt+MF)  \\
\midrule
Objective & $\alpha$-Flow (JVP-free) & endpoint\,+\,MeanFlow \\
Initialization & from scratch & Stage-1 ckpt, Stage-2 ckpt \\
Maximum Steps & $150$k & $10$k, $10$k \\
\midrule
$\alpha$ schedule (cap$\to$floor) & $1\!\to\!0.1$ & --- \\
\quad $(k_s,k_e,\gamma)$ & $(2\text{k},150\text{k},8)$ & --- \\
Loss weight & (alpha,aux)=1,1 &(Endpoint,MF,aux)=1,0.05,2 \\ 
MeanFlow config & --- & (version,jmode)=(mf,$J^{-1}$)\\ 
Auxiliary time gate & $t<0.75$ & $t<0.75$ \\
loss normalization & off & off \\
Rotation-loss clamp & 50 & $50$ \\
Trans-loss clamp & $5$ & $5$ \\
\midrule
Min.\ time $t_{\min}$ & $10^{-6}$ & $10^{-6}$ \\
Denom.\ clamp $t_\varepsilon$ & $0.1$ & $0.1$ \\
Rotation schedule & exp (rate $10$) & exp (rate $10$) \\
$(w_R,\,w_x)$ & $(1.0,1.0)$ & $(1.0,1.0)$ \\
$s\mid t$ sampler & $\mathcal{U}[t_{\min},t]$ & $\mathcal{U}[t_{\min},t]$ \\
$s{=}t$ anchor fraction & $0$ & $0$ \\
Two-time input ($s<t$) & yes & yes \\
Self-conditioning & yes ($50\%$) & yes ($50\%$) \\
JVP & --- (JVP-free) & single combined \\
Coupling & global SE(3) OT & global SE(3) OT \\
\midrule
Optimizer, lr & Adam, $10^{-4}$ & Adam, $10^{-4}$ \\
Gradient clip & $0$ (none) & $1.0$ \\
Gradient accumulation & $2$ & $2$ \\
Batch cap ($N^2_{\max}$) & $\propto 1/N^2$ ($5{\times}10^{5}$) & $\propto 1/N^2$ ($5{\times}10^{5}$) \\
Devices & $4\times$ H100 (80GB) GPU & $4\times$ H100 (80GB) GPU \\
\bottomrule
\end{tabular}
\caption{Training configuration for the two pre-training stages. Post-training
(self-reflow) reuses the Stage-2 column verbatim, changing only the data
coupling: the OT coupling is replaced by the model's own deterministic
self-reflow pairs, with OT disabled (Section~\ref{sec:posttrain}).}
\label{tab:train-config}
\end{table*}

\subsection{Post-training: self-reflow rectification}
\label{sec:posttrain}

Few-step designability is further improved by a rectification stage that
replaces the random data--noise coupling with the model's own deterministic
transport, following the rectified-flow strategy of ReQFlow~\cite{yue2025reqflow}.

\paragraph{Self-reflow dataset.}
Using the Stage-2 model, for each residue length $N\in[60,128]$ we draw prior
samples $z_1\sim p_{\mathrm{prior}}$ and integrate the model to obtain coupled
pairs $(z_1,\hat z_0)$ (noise $\to$ generated backbone). Each generated backbone
is scored with the self-consistency pipeline (ProteinMPNN followed by ESMFold),
and we retain only \emph{designable} samples ($\mathrm{scRMSD}<2\text{\AA}$). We
keep $50$ designable pairs per length---over-sampling $100$ generations per
length to meet the quota---yielding $69\times50=3450$ coupled pairs over the
SCOPe length range.

\paragraph{Rectification objective.}
We then continue training from the Stage-2 checkpoint on these pairs using the
\emph{identical} Stage-2 objective~\eqref{eq:stage2-loss}, but with the
independent coupling $q(z_0,z_1)$ replaced by the model-induced deterministic
coupling $(z_1,\hat z_0)$ and with the OT re-coupling disabled (the pairs are
already transport-consistent). Because the target transport is now (approximately)
straight, the average-velocity field it must match is closer to constant along
each trajectory, which is precisely the regime in which few-step MeanFlow
sampling is exact. Checkpoints are saved densely and selected on validation
designability and secondary-structure content.

\subsection{Checkpoint selection}
Rather than selecting the final model by a grid search over training
hyper-parameters, we retain the training checkpoint according to a fixed,
pre-specified rule on validation-time generation quality. During training we
periodically sample unconditional backbones at two inference budgets,
$T{=}100$ and $T{=}20$ integration steps, and monitor (i) the C$\alpha$--C$\alpha$
bond-geometry validity ($\mathrm{ca\_ca}$) and (ii) the secondary-structure
composition (helix and strand fractions). We keep the checkpoint that
(a)~attains $\mathrm{ca\_ca} > 0.97$ at \emph{both} $T{=}100$ and $T{=}20$, so that
backbones remain geometrically valid in both the high- and low-step regimes;
(b)~keeps the strand fraction near or above $0.2$, matching the reference distribution and
guarding against strand collapse; and (c)~among the checkpoints satisfying
(a)--(b), maximizes the helix fraction. This criterion was fixed before final
evaluation and applied identically across all runs.

\section{Architecture}
\label{sec:architecture}

Our network is the ReQFlow~\cite{yue2025reqflow} trunk left \emph{unchanged},
augmented with only two additions required to turn a single-time flow-matching
model into a two-time MeanFlow model:
(i)~a shared two-time embedding for the MeanFlow inputs $(t,s)$
(\S\ref{sec:time-embed}), and
(ii)~a per-block, zero-initialized AdaLN-Zero gate on each block's rigid update
(\S\ref{sec:blockadaln}). The trunk itself---the invariant point attention (IPA)
stack of \citet{jumper2021highly} as instantiated by ReQFlow, comprising per
block an IPA module, a two-layer sequence transformer, node/edge transitions and
a backbone frame update---is not modified. The full model has $\sim\!16.8$M
parameters, of which the AdaLN additions account for only $0.33\%$
($\sim\!55$k); it is therefore $\sim\!26\times$ smaller than RMF
(\citealt{woo2026riemannian}, $437$M) while remaining checkpoint-compatible with
ReQFlow/QFlow (Table~\ref{tab:trunk-config}).

\subsection{Backbone trunk}
The trunk follows the ReQFlow configuration: node and edge embedding sizes
$256$ and $128$, IPA hidden width $c_\text{hidden}=128$ with $8$ attention heads,
$8$ query/key points and $12$ value points, a $4$-head / $2$-layer sequence
transformer, and $6$ IPA blocks. The input embedder is ReQFlow's:
a self-conditioned pairwise distogram ($22$ bins), a relative-position encoding
($\mathrm{relpos}\_k=64$), and a diffuse-mask embedding. Because the trunk and
embedder are inherited unchanged, our model loads directly from our pretrained checkpoint (in stage 2).

\begin{table}[t]
\centering
\small
\begin{tabular}{lccc}
\toprule
& \textbf{Ours} & ReQFlow & RMF \\
\midrule
node embed size          & 256          & 256 & 768 \\
edge embed size          & 128          & 128 & 384 \\
\(c_\text{hidden}\)      & 128          & 128 & 48 \\
\# attention heads       & 8            & 8   & 16 \\
\# query/key points      & 8            & 8   & --- \\
\# value points          & 12           & 12  & 16 \\
\# IPA blocks            & 6            & 6   & 16 \\
seq.\ tfmr (heads/layers)& 4 / 2        & 4 / 2 & 12 / 3 \\
per-block AdaLN gate     & \textbf{yes} & no  & yes \\
\midrule
total parameters         & \(\sim\!16.8\)M & \(\sim\!16.8\)M & \(\sim\!437\)M \\
\bottomrule
\end{tabular}
\caption{Trunk configuration. ``Ours'' is the ReQFlow~\cite{yue2025reqflow}
trunk with the only architectural addition being a per-block AdaLN-Zero gate
($+0.055$M, $0.33\%$ of parameters). Both are $\sim\!26\times$ smaller than
RMF~\cite{woo2026riemannian} while recovering most of the structural fidelity.}
\label{tab:trunk-config}
\end{table}

\subsection{Block-wise AdaLN-Zero gate}
\label{sec:blockadaln}
The sole structural change to the trunk is a per-block gate on the rigid update.
Each IPA block emits a six-vector backbone update (three rotational, three
translational) that is composed onto the running frame. We modulate this update
with an AdaLN-Zero~\cite{peebles2023scalable} scale,
\[
\mathrm{update}_b \;\leftarrow\; \big(1+\gamma_b(\mathbf{c}_b)\big)\odot \mathrm{update}_b ,
\]
where the gate $\gamma_b$ is produced by a per-block linear head from the
conditioning vector
$\mathbf{c}_b=[\,\mathbf{h}_b\;\|\;\mathbf{c}^{(t)}\,]$, formed by concatenating
(i)~the block's current \emph{node features} $\mathbf{h}_b$---which are
$\SE$-invariant and state-aware (they depend on the current
$(R_t,x_t)$ through IPA)---and (ii)~a joint time/interval embedding
$\mathbf{c}^{(t)}$ built from $t$ and $h=t-s$. Every gate head is
zero-initialized, so at initialization $\gamma_b\equiv 0$, the update is scaled
by $1$, and the model is \emph{forward-identical} to the unmodified ReQFlow
trunk. This lets the two-time MeanFlow conditioning enter each block's geometry
update per residue and per step, without perturbing the pretrained trunk at the
start of training. In implementation the gate is attached via a forward hook, so
the trunk's forward code is untouched.

\subsection{Shared two-time embedding}
\label{sec:time-embed}
A MeanFlow prediction is conditioned on the pair $(t,s)$ with $0\le s\le t\le 1$;
we let $h=t-s$. Both $t$ and $h$ are passed through a \emph{single} shared
sinusoidal-plus-MLP embedder $\phi(\cdot)$ and concatenated in a fixed order,
$\mathbf{c}^{(t)}=[\phi(t)\,\|\,\phi(h)]$. Concatenation preserves the $(t,h)$
ordering; the same embedding feeds both the node/edge input features and the
per-block AdaLN gate of \S\ref{sec:blockadaln}. This is the only place the
single-time ReQFlow trunk is made time-pair aware.
\begin{figure}[t]
  \centering
  \includegraphics[width=0.85\linewidth]{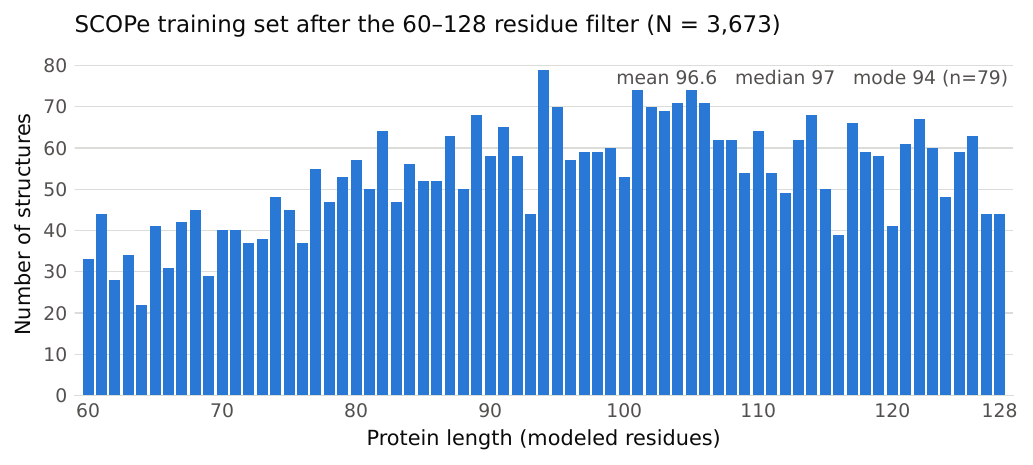}
  \caption{Residue-length distribution of the SCOPe backbone dataset used in our experiments (3,673 backbones with lengths in $[60,128]$).}
  \label{fig:scope-length-hist}
\end{figure}

\section{Extra Results/Settings in the Protein Experiment}
\label{sec:protein-extra}
\begin{figure}[t]
  \centering
  \includegraphics[width=\linewidth]{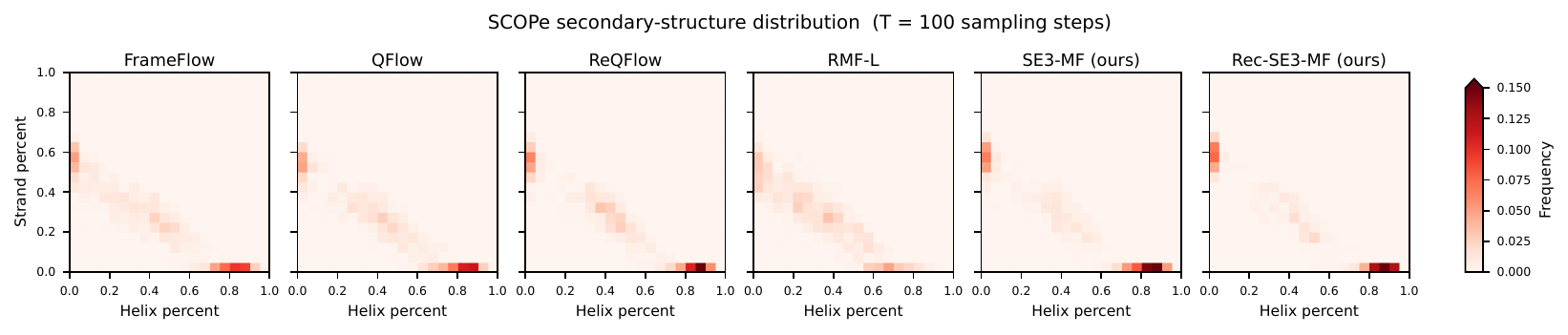}\\  \includegraphics[width=\linewidth]{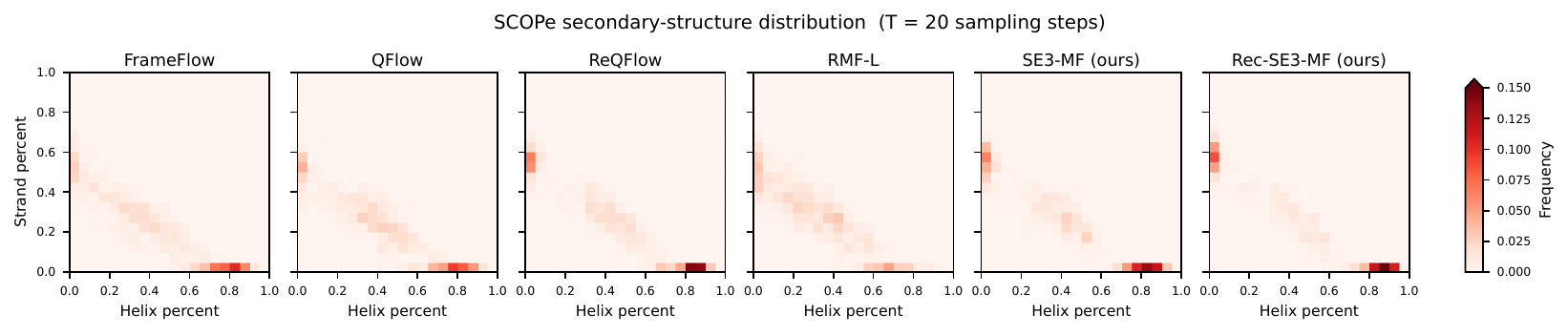}
  \caption{Secondary-structure statistics (top: $T{=}100$, bottom: $T{=}20$).}
  \label{fig:ss-scope-T100-T20}
\end{figure}

We provide additional visualizations in this section. Figure~\ref{fig:ss-scope-T100-T20} reports the joint distribution of
per-sample helix and strand content (via mdtraj DSSP) for the SCOPe-generated
backbones of each method, at $T{=}100$ (top) and $T{=}20$ (bottom). Each panel is
a $2$-D histogram over the helix-fraction ($x$) and strand-fraction ($y$) plane,
sharing a common color scale; the anti-diagonal reflects the intrinsic
helix--strand trade-off within a single chain. Across all methods, the mass
concentrates along this trade-off with an additional helix-rich mode, consistent
with the SCOPe length regime. Our models (SE3MF and RecSE3MF) recover a
distribution comparable to the flow-matching baselines rather than collapsing to
a single motif. Importantly, the distribution is largely preserved when the
sampling budget is reduced from $T{=}100$ to $T{=}20$: the few-step regime does
not visibly distort the secondary-structure statistics, indicating that the
average-velocity consistency learned during training transfers to aggressive
step reduction without a mode shift. 
\paragraph{Visualization of SCOPe.}
Figure~\ref{fig:scope-length-hist} shows the residue-length distribution of the SCOPe backbones used in our experiments.

\paragraph{Random seed in evaluation.}
All evaluations use a fixed random seed for exact reproducibility: for each sampled
backbone the RNG seed is set deterministically to $12345 + 10^{5}\,T + 10^{3}\,N + i$,
where $T$ is the number of sampling steps, $N$ the chain length, and $i$ the sample
index within that length.

\paragraph{How large is the Jacobian correction on the training distribution?}
Figure~\ref{fig:omega-identity-numerical} shows that dropping the Jacobian breaks the
identity by $\mathcal{O}(t-s)$ on an analytic path. To check that this regime is
actually visited during training, we evaluate
$\rho=\|(J^{-1}(A^{s\to t}_\theta)-I)\,\omega_t\|_2/\|\omega_t\|_2$---the relative
change the Jacobian makes to the regression target of \eqref{eq:mf-se3-loss2}---over
$1.2\times10^{5}$ residue-level samples from held-out SCOPe backbones under the
Stage-2 time sampler (SE(3)-OT coupling, Table~\ref{tab:jacobian-rho}). The correction
grows monotonically with the interval: median $\rho$ is $0.8\%$ for $t-s<0.1$, $7.2\%$
for $t-s\in[0.25,0.5)$, and $28\%$ for $t-s>0.75$ (P90: $2.6\%$, $19\%$, $58\%$). Since
the sampler draws $s$ across the full range below $t$, $22\%$ of training targets fall
in the regime where the correction exceeds $10\%$. The right Jacobian is therefore not
a negligible term on the distribution the objective is trained over, even though the
intervals queried at inference ($t-s=1/T$) are short enough that $J^{-1}\approx I$
there; the log-map approximation $J^{-1}:=I$ used by RMF-PT~\cite{zhong2026riemannian}
discards a correction of this size during training.
\begin{table}[t]
\centering
\begingroup
\small
\begin{tabular}{@{}lrcccc@{}}
\toprule
\multirow{2}{*}{$t-s$} & \multirow{2}{*}{$n$}
 & \multicolumn{2}{c}{$\rho$} & \multicolumn{2}{c}{$\phi$ (rad)} \\
\cmidrule(lr){3-4}\cmidrule(lr){5-6}
 & & median & P90 & median & P90 \\
\midrule
$[0,0.1)$      & 39{,}044 & 0.008 & 0.026 & 0.061 & 0.175 \\
$[0.1,0.25)$   & 32{,}212 & 0.031 & 0.077 & 0.283 & 0.500 \\
$[0.25,0.5)$   & 29{,}502 & 0.072 & 0.187 & 0.589 & 0.993 \\
$[0.5,0.75)$   & 14{,}957 & 0.165 & 0.392 & 0.891 & 1.497 \\
$[0.75,1.0]$   &  4{,}320 & 0.280 & 0.581 & 0.841 & 1.582 \\
\midrule
overall        & 120{,}035 & 0.033 & 0.194 & 0.289 & 0.955 \\
\bottomrule
\end{tabular}
\endgroup
\caption{The inverse right Jacobian's relative change to the rotation target,
$\rho=\|(J^{-1}(A^{s\to t}_\theta)-I)\,\omega_t\|_2/\|\omega_t\|_2$, and the driving
angle $\phi=\|A^{s\to t}_\theta\|_2$ (rad), binned by the interval $t-s$ over
$1.2\times10^{5}$ residue-level samples from held-out SCOPe backbones under the
Stage-2 (SE(3)-OT) sampler. Here $\rho$
and $\phi$ are summarized by their median and 90th percentile (P90) within each bin.}
\label{tab:jacobian-rho}
\end{table}

\subsection{Training budget: steps versus epochs}

We report the training budget in optimizer steps, not epochs: the QFlow/ReQFlow loader
(from FrameFlow) oversamples each epoch to $\approx\!460$ steps, while ours does one pass
at $\approx\!35$ steps---a $\sim\!13\times$ gap on the identical SCOPe set that makes epoch
counts incomparable. Importantly, both QFlow/ReQFlow and our method use the same hardware
(4$\times$80GB GPUs) and the same batch cap (see $N^2_{\max}$ 
in Table~\ref{tab:train-config}), so each optimizer step processes the same amount of data; the
only difference is that under our loader each sample is seen once per epoch, whereas
QFlow/ReQFlow repeats samples multiple times due to oversampling (which depends on the
number of GPUs). Steps are directly comparable; as a check, the QFlow checkpoint's
$90{,}160$ steps equal its reported $195$ epochs~\citep{yue2025reqflow}.

\begin{table*}[t]
\begingroup
\small
\setlength{\tabcolsep}{4pt}
\begin{tabular}{@{}lcccccccc@{}}
\toprule
\multirow{2}{*}{Objective}
& \multirow{2}{*}{\shortstack{Samp.\\Steps}}
& \multirow{2}{*}{\shortstack{Train\\Steps}}
& \multicolumn{3}{c}{Designability}
& \multirow{2}{*}{Div.\,$\downarrow$}
& \multicolumn{2}{c}{Secondary structure} \\
\cmidrule(lr){4-6}\cmidrule(lr){8-9}
& & & Frac.\,$\uparrow$ & scRMSD\,$\downarrow$ & scTM\,$\uparrow$ & & Helix / Strand & ca-valid \\
\midrule

FM $+$ semigroup (RMF)
& 100 & 6.8k & 0.886 & $1.285{\pm}0.962$ & $0.892{\pm}0.078$ & $\mathbf{0.376}$ & $0.498 / 0.199$ & $0.976$ \\
& 50  & -    & 0.880 & $1.337{\pm}1.027$ & $0.889{\pm}0.079$ & $\mathbf{0.369}$ & $0.466 / 0.221$ & $0.975$ \\
& 20  & -    & 0.781 & $1.665{\pm}1.366$ & $0.856{\pm}0.101$ & $\mathbf{0.370}$ & $0.474 / 0.197$ & $0.975$ \\
& 10  & -    & 0.533 & $2.825{\pm}2.327$ & $0.765{\pm}0.148$ & $\mathbf{0.368}$ & $0.446 / 0.171$ & $0.983$ \\

\midrule

endpoint $+$ semigroup
& 100 & 6.2k & 0.852 & $1.391{\pm}1.321$ & $0.887{\pm}0.095$ & 0.396 & $0.549 / 0.160$ & $0.971$ \\
& 50  & -    & 0.835 & $1.405{\pm}1.245$ & $0.885{\pm}0.093$ & 0.391 & $0.521 / 0.180$ & $0.972$ \\
& 20  & -    & 0.813 & $1.534{\pm}1.489$ & $0.873{\pm}0.096$ & 0.392 & $0.561 / 0.152$ & $0.977$ \\
& 10  & -    & 0.712 & $1.975{\pm}1.783$ & $\mathbf{0.835{\pm}0.120}$ & 0.389 & $0.551 / 0.147$ & $0.984$ \\

\midrule

\textbf{endpoint $+$ MeanFlow (ours)}
& 100 & 6.1k & $\mathbf{0.936}$ & $\mathbf{1.106{\pm}0.870}$ & $\mathbf{0.909{\pm}0.067}$ & 0.415 & $0.536 / 0.200$ & $0.971$ \\
& 50  & -    & $\mathbf{0.906}$ & $\mathbf{1.246{\pm}1.190}$ & $\mathbf{0.902{\pm}0.078}$ & 0.413 & $0.517 / 0.213$ & $0.971$ \\
& 20  & -    & $\mathbf{0.867}$ & $\mathbf{1.369{\pm}1.144}$ & $\mathbf{0.883{\pm}0.087}$ & 0.408 & $0.505 / 0.210$ & $0.970$ \\
& 10  & -    & $\mathbf{0.728}$ & $\mathbf{1.940{\pm}1.741}$ & $0.832{\pm}0.118$ & 0.402 & $0.497 / 0.193$ & $0.975$ \\

\bottomrule
\end{tabular}
\endgroup
\caption{\textbf{Controlled objective ablation on SCOPe.} All rows start from the
same Stage-1 $\alpha$-Flow checkpoint (step~$100$k) and share the IPA trunk, data,
coupling, self-conditioning and optimizer; only the Stage-2 consistency target is
swapped. \emph{Train Steps} counts Stage-2 steps only. The semigroup rows
instantiate the objectives of~\citet{woo2026riemannian} inside our pipeline; they
are \emph{not} a reproduction of that work, whose hyperparameters, model size and
training budget we do not replicate. Best per performance metric within each
sampling-step group in bold.}
\label{tab:obj_ablation}
\end{table*}

\subsection{Ablation Study: A Controlled Comparison of Consistency Objectives}
\label{sec:abl-controlled}

The RMF rows of Table~\ref{tab:scope_comparison} come from the released
checkpoint, which uses a different trunk ($437$M versus our $16.8$M) and a much
larger budget ($598$k versus our Stage-2 budget), so that comparison does not
isolate the objective. Table~\ref{tab:obj_ablation} removes the confound: all
three rows start from the \emph{same} Stage-1 $\alpha$-Flow checkpoint and share
the trunk, data, coupling, self-conditioning and optimizer, so the only variable
is the Stage-2 consistency target.

Under this matched budget the semigroup target is a weaker self-consistency
signal than the differential MeanFlow target. The FM$+$semigroup variant tracks
plain flow matching closely at every budget --- its designable fraction
$0.886/0.880/0.781/0.533$  is within noise of QFlow's
$0.885/0.870/0.778/0.559$ in Table~\ref{tab:scope_comparison}, including the same
collapse at $T{=}10$, which is the signature of the instantaneous-velocity
parameterization. The endpoint$+$semigroup variant is stronger in the aggressive
regime but still falls short of MeanFlow at every budget. Swapping in our
MeanFlow target, on the same trunk and from the same initialization, recovers the
few-step regime ($0.867$ at $T{=}20$, $0.728$ at $T{=}10$).

We read this as a matter of training budget rather than of correctness: the two
objectives share a minimizer (Remark~\ref{rmk:sg-equiv}), but the semigroup
constraint propagates the boundary condition only through the interval triples it
happens to sample, whereas the differential target imposes the same consistency
pointwise at every $(s,t)$. The semigroup route therefore appears to need a
substantially longer schedule to reach the few-step regime --- consistent with
RMF's $\sim\!598\mathrm{k}$ steps, and with our observation in
Appendix~\ref{sec:semigroup} that semigroup training gave weaker few-step gains
than the JVP-based loss at $10$--$20$k steps.
\begin{table*}[t]
\begingroup
\setlength{\tabcolsep}{4pt}
\hspace{-3em}
{\footnotesize
\begin{tabular}{lccccccccc}
\toprule
Method & Steps & Fraction $\uparrow$ & scRMSD $\downarrow$ & scTM $\uparrow$ & Diversity $\mathrm{TM}\downarrow$ & Novelty $\mathrm{TM}\downarrow$ & $\alpha$-Helix  & $\beta$-Strand  & $\mathrm{C}\alpha$-valid $\uparrow$\\
\midrule
\multirow{4}{*}{V-QFlow} 
& 100 & 0.990 & $0.654 \pm 0.446$ & $0.957 \pm 0.039$ & 0.384 & 0.846 & $0.740\pm0.260$ & $0.091\pm0.200$ & $0.997\pm0.006$\\
& 50  & 0.980 & $0.702 \pm 0.423$ & $0.952 \pm 0.045$ & 0.373 & 0.837 & $0.714\pm0.268$ & $0.109\pm0.206$ & $0.987\pm0.016$\\
& 20  & 0.964 & $0.869 \pm 0.697$ & $0.937 \pm 0.053$ & 0.348 & 0.803 & $0.710\pm0.245$ & $0.099\pm0.184$ & $0.892\pm0.069$\\
& 10  & 0.903 & $1.160 \pm 1.183$ & $0.908 \pm 0.088$ & 0.332 & 0.772 & $0.687\pm0.227$ & $0.093\pm0.164$ & $0.807\pm0.081$\\
\midrule
\multirow{4}{*}{V-ReQFlow}
& 100 & 0.990 & $0.675 \pm 0.318$ & $0.955 \pm 0.033$ & 0.403 & 0.851 & $0.808\pm0.199$ & $0.050\pm0.154$ & $0.984\pm0.016$\\
& 50  & 0.993 & $0.726 \pm 0.625$ & $0.951 \pm 0.036$ & 0.396 & 0.840 & $0.791\pm0.216$ & $0.064\pm0.170$ & $0.974\pm0.021$\\
& 20  & 0.975 & $0.855 \pm 0.638$ & $0.936 \pm 0.047$ & 0.383 & 0.812 & $0.785\pm0.217$ & $0.064\pm0.167$ & $0.940\pm0.068$\\
& 10  & 0.957 & $1.083 \pm 0.854$ & $0.910 \pm 0.065$ & 0.377 & 0.782 & $0.788\pm0.199$ & $0.052\pm0.142$ & $0.883\pm0.121$\\
\bottomrule
\end{tabular}
}
\endgroup
\caption{Evaluation results for V-QFlow and V-ReQFlow. These methods use a different data/evaluation pipeline from the other baselines, so we report them separately.}

\label{tab:vqflow_vreqflow_appendix}
\end{table*}

\subsection{Additional results for V-QFlow and V-ReQFlow}
V-QFlow and V-ReQFlow~\cite{yue2025reqflow} do not provide checkpoints trained on SCOPe; therefore, the scores in Table~\ref{tab:vqflow_vreqflow_appendix} are computed using the released checkpoints trained on PDB. Because this evaluation uses a different training dataset and pipeline from the SCOPe-based baselines, the results are not directly comparable; we thus report them in this separate section. Notably, while the designable fraction is close to 0.99, the helix fraction is high, whereas the strand fraction is around 0.1 or lower. This imbalance is biologically concerning because $\beta$-strands typically assemble into $\beta$-sheets via inter-strand hydrogen bonding and often depend on non-local sequence interactions; correspondingly, strand prediction is generally harder than helix prediction, and a very low strand fraction may indicate that the model fails to capture such long-range constraints~\cite{pauling1951structure,zhang2015secondary}.

\section{Computational Resources}
\label{sec:compute}

All experiments are conducted on a single GPU node with the following
configuration. Each node is equipped with $4\times$ NVIDIA H100 80\,GB HBM3 GPUs
and dual AMD EPYC 9654 96-core CPUs. Training uses all four GPUs with PyTorch
Lightning distributed data-parallel (DDP), $14$ CPU worker threads per GPU process
($56$ cores in total), and gradient accumulation of $2$; the effective batch is
set by a length-based sampler with cap $N^2_{\max}=5\times10^{5}$ residues$^2$
(Table~\ref{tab:train-config}). 

\end{document}